\documentclass[pmlr]{jmlr}

\providecommand{\main}{.}  % *Modification: define file location
\usepackage{\main/paperpkg}
\usepackage{subfiles}

\title[Fast Statistical Leverage Score Approximation in KRR]{
Fast Statistical Leverage Score Approximation \titlebreak 
in Kernel Ridge Regression}

\author{\Name{Yifan Chen} \Email{yifanc10@illinois.edu} \\
\addr University of Illinois at Urbana-Champaign
\AND
\Name{Yun Yang} \Email{yy84@illinois.edu} \\
\addr University of Illinois at Urbana-Champaign}

\begin{document}
\maketitle

\begin{abstract}
Nystr\"{o}m approximation is a fast randomized method that rapidly solves kernel ridge regression (KRR) problems through sub-sampling the $n$-by-$n$ empirical kernel matrix appearing in the objective function. 
However, the performance of such a sub-sampling method heavily relies on correctly estimating the statistical leverage scores for forming the sampling distribution, which can be as costly as solving the original KRR. 
In this work, we propose a linear time (modulo poly-log terms) algorithm to accurately approximate the statistical leverage scores in the stationary-kernel-based KRR with theoretical guarantees. 
Particularly, by analyzing the first-order condition of the KRR objective, we derive an analytic formula, which depends on both the input distribution and the spectral density of stationary kernels, for capturing the non-uniformity of the statistical leverage scores. 
Numerical experiments demonstrate that with the same prediction accuracy our method is orders of magnitude more efficient than existing methods in selecting the representative sub-samples in the Nystr\"{o}m approximation.
\end{abstract}

\section{Introduction}

The major computational bottleneck of kernel-based machine learning methods, such as kernel ridge regression (KRR) \citep{shawe2004kernel, hastie2005elements}, 
lies in the calculation of certain matrix inverse involving an $n$-by-$n$ symmetric and positive semidefinite (PSD) empirical kernel matrix $K_n \in \mb R^{n \times n}$ over $n$ inputs in the $d$-dimensional Euclidean space $\mb R^d$. 
For most common kernels, the empirical kernel matrix $K_n$ is nearly singular with its effective rank being captured by the so-called {\em effective statistical dimension} \citep{alaoui2015fast, yang2017randomized} $d_{stat}$ that is problem-dependent and can be substantially smaller than the sample size $n$. 
For example, under the optimal choice of the regularization parameter, if the kernel function is a \matern kernel with smoothness parameter $\nu > 0$, then the statistical dimension in the KRR is $d_{stat} = \m O(n^{\frac{d}{2\nu + 2d}})$ \citep{kanagawa2018gaussian}.
% ; {\color{red} (to be removed) for a Gaussian kernel, $d_{stat}$ increases with $n$ as $\m O((\log n)^{d/2})$ \citep{yang2017randomized}}.

\subsection{Related Works}\label{Sec:related_work}

Due to this intrinsic low-rankness of $K_n$, several existing papers developed randomized algorithms, such as the \nystrom method \citep{alaoui2015fast}, randomized sketches \citep{yang2017randomized, ahle2020oblivious}, random Fourier features \citep{rahimi2008random, avron2017random}, and its improvement quadrature Fourier features \citep{mutny2018efficient},
for obtaining a rank $\wt{\m O}(d_{stat})$ ($\wt {\m O}(\cdot)$ means $\m O(\cdot)$ modulo poly-log terms) approximation of $K_n$. 
In particular, sampling-based algorithms, such as the \nystrom method, avoid explicitly constructing the $n$-by-$n$ matrix $K_n$, and only require $\wt{\m O}(nd_{stat})$ evaluations of the kernel function. 
This property is particularly appealing since both the time and space complexity can be reduced to even below the $\m O(n^2)$ benchmark complexity of constructing and storing the empirical kernel matrix. 
From an algorithmic perspective, {\em statistical leverage scores}, as a measure of the structural non-uniformity of the inputs in forming $K_n$, 
can be used for constructing an importance sampling distribution that leads to high-quality low-rank approximations in the \nystrom method. 
We refer the readers to some recent papers \citep{mahoney2011randomized, drineas2012fast} for more details. 

However, the exact computation of the statistical leverage scores bears the same $\m O(n^3)$ time complexity and $\m O(n^2)$ space complexity \citep{mahoney2011randomized} as inverting the $n$-by-$n$ empirical kernel matrix.
Researchers thus turn to the question of whether there is an efficient and accurate method of approximately computing the leverage scores. 
For example, some works \citep{alaoui2015fast, rudi2015less} borrowed the random projection idea \citep{drineas2012fast} in designing an approximation algorithm for computing the statistical leverage scores in the \nystrom method in the context of KRR.
Their algorithm has a worst case time complexity $\m O\big(\frac{n^3}{d^2_{stat}}\big)$ that may exceed the $\m O(nd^2_{stat})$ complexity in subsequent steps for small $d_{stat}$. 
As a refinement, \citet{musco2017recursive} developed Recursive-RLS, a recursive version of the prior algorithm \citep{alaoui2015fast} with overall time complexity $O(n d_{stat}^2)$ by alternating between updating the statistical leverage scores and drawing new subsamples based on the current scores.
% Considering that at the last iteration the leverage score sampling would be performed on all the observations, the time complexity is $\m O(nd^2_{stat})$. 
SQUEAK \citep{calandriello2017distributed} adapts the algorithm to an online setting, attaining the same accuracy and having the same complexity order with only one pass over the data.
BLESS \citep{rudi2018fast} adopts a path-following algorithm that further reduces the subsampling time complexity to $\m O(\min (\frac{1}{\lambda}, n) d_{stat}^2 \log^2 \frac{1}{\lambda})$ where $\lambda$ is the regularization parameter in the KRR. 
With the choice of $\lambda = \m O(\frac{d_{stat}}{n})$ that leads to the optimal error rate, the complexity of BLESS would be $\wt {\m O} (n d_{stat})$.
% This work also justify

\subsection{Our Contribution}
\label{Sec:contribution}

Most previous algorithms are algebraic methods by approximating matrix operations and apply to any positive semidefinite (PSD) kernel.
% However, when the information is given that a stationary kernel is in use, the stationarity of the kernel could be further utilized to accelerate the computation.
In this work, we focus on stationary kernels and follow a completely different route of utilizing large sample properties of KRR to develop a new analytical approach for approximating the statistical leverage scores.
Under a classical nonparametric setting, the new approach requires $\wt{\m O}(n)$ time and space complexity, and provably also attains the optimal statistical accuracy in the KRR. 
% The overall time complexity for approximately solving the KRR problem is then $\wt{\m O}(nd_{stat}^2)$, which becomes linear in $n$ (modulo $\log n$ factors) when the kernel is Gaussian.
% To attain the target complexity, we deal with the problem from a somewhat uncommon perspective. 
In a nutshell, rather than approximating the leverage scores by pre-constructing a low-rank approximation to $K_n$ \citep{drineas2012fast}, 
our method uses structural information contained in the kernel function and the underlying input distribution to infer how other inputs influence the statistical leverage score at a given location.
In particular, we derive an explicit and computable formula,
\begin{align*}
\int_{\mb R^d} \frac{1}{p(x_i) +  \lambda/m(s)}\,\dd s,\quad \forall i \in [n],
\end{align*}
where $m(\cdot)$ is the spectral density function of the stationary kernel we use, and $p(x_i)$ is the density of the input $x_i$.
This formula is applied to approximate the rescaled statistical leverage score, which is proportional to the true statistical leverage score, at each observed point.
We also provide the theoretical guarantees that the approximation formula has a vanishing relative error as $n \to \infty$. 

Our development is based on the existing works \citep{silverman1984spline, yang2017frequentist} on the equivalent kernel representation of the KRR solution. 
The consequent theory sheds some light on the behaviour of leverage scores, 
and a simple application is the following rule of thumb: for the \matern kernel with smoothness $\nu$, the statistical leverage score at point $x$ in $\mb R^d$ is proportional to $\min \{1,\,(\lambda/p(x))^{1-d/(2\nu+d)}\}$, 
where the regularization parameter $\lambda = \Theta(d_{stat}/n)$.
This scaling indeed matches the previous research on the asymptotic equivalent of the regularized Christoffel function \citep{pauwels2018relating}, which has intrinsic connections with statistical leverage scores.
We also show through numerical experiments that our method exhibits encouraging performance compared to other methods, which further improves the overall runtime of KRR.

\section{Background and Problem Formulation}

In this section, we set up the notation and briefly introduce the background. 
We begin with a short review on reproducing kernel Hilbert space (RKHS) and kernel ridge regression (KRR). 
After that, we invoke a useful and important formula that represents the norm of any RKHS induced by a stationary kernel via Fourier transforms and Parseval's identity. 
Then we describe the class of \nystrom-method-based approaches as our primary focus for approximately computing the KRR. 
Finally, we introduce the notation of equivalent kernel that plays important roles in both understanding the theoretical properties of the KRR and motivating our proposed method.

\subsection{Reproducing kernel Hilbert space and kernel ridge regression} \label{Sec:rkhs_krr}

\textbf{Reproducing kernel Hilbert space.} 
Particularly, any RKHS is generated by a PSD kernel function $K:\,\m X\times\m X\to \mb R$, 
and there exists a correspondence between any RKHS (or its induced norm $\|\cdot\|_{\mb H}$) and its reproducing kernel (see the books \citep{berlinet2011reproducing,wahba1990spline,gu2013smoothing} for more details). 
% For example, when $\m X = \mb R$, the RKHS corresponding to the M\'{a}tern kernel with smoothness parameter $\nu \in [1/2, \infty)$ is the $(\nu+1/2)$-th order Sobolev space composed of all functions whose weak derivatives exist and is $L_2$ integrable up to order $(\nu+1/2)$ (c.f.~Section~\ref{sec:matern} for the form of its reproducing kernel). 
Most widely used kernels are stationary, meaning that $K(x,y)$ depends on $x$ and $y$ only through their difference $(x-y)$.
Due to this definition, we may abuse the notation $K(u)$ to mean $K(x,\,x+u)$ for any $x$. 
We will make this stationary kernel assumption throughout the paper.
More specifically, we primarily focus on \matern kernels, although the development can be straightforwardly extended to other stationary ones. 

\textbf{Kernel ridge regression.} 
Consider a dataset $\mathcal{D}_n = \{(x_i, y_i)\}_{i=1}^n$ consisting of $n$ pairs of points in $\mathcal{X} \times \mathcal{Y}$, where $\mathcal{X} \subseteq \mb R^d$ is the input (predictor) space and $\mathcal{Y} \subseteq \mathbb{R}$ the response space. 
To characterize the dependence of the response on the predictor, we assume the following standard nonparametric regression model as the underlying data generating model, $y_i = f^\ast(x_i) + \varepsilon_i,\, i=1,2,\ldots,n,$
% \begin{align*}
% y_i = f^\ast(x_i) + \varepsilon_i,\quad i=1,2,\ldots,n,
% \end{align*}
where $f^\ast:\,\m X\to \m Y$ is the unknown regression function to be estimated, and the random noises $\{\varepsilon_i\}_{i=1}^n$ are i.i.d.~$\mathcal{N}(0, \sigma^2)$, or any sub-Gaussian distribution with mean zero and variance $\sigma^2$. 
% With an estimator $\wht f$, we may predict the expected response at any future predictor point $x$ by $\wht f(x)$. 
Under the common regularity assumption that the true regression function $f^\ast$ belongs to an RKHS $\mb H$, 
it is natural to estimate $f^\ast$ by an estimator $\wht f$, which minimizes the sum of a least-squares goodness-of-fit term and a penalty term involving the squared norm $\|\cdot\|_{\mb H}$ associated with $\mb H$. 
This leads to the following estimating procedure known as {\em kernel ridge regression} (KRR) \citep{shawe2004kernel, hastie2005elements},
% that generalizes the classical ridge regression from the linear model to a non-linear one, 
\begin{align} \label{Eqn:loss}
\wht f=\argmin_{f\in \mb H} \Big\{\frac{1}{n} \sum_{i=1}^n (y_i - f(x_i))^2 + \lambda \|f\|_\mb{H}^2\Big\}.
\end{align}
The optimization problem~\eqref{Eqn:loss} appears to be infinite-dimensional over a function space $\mb H$, while indeed its solution can be obtained by solving an $n$-dimensional quadratic program thanks to the representer theorem \citep{kimeldorf1971some}. More precisely, for any two $\m X$-valued vectors $a=(a_1,\ldots,a_p)^T\in\m X^p$ and $b=(b_1,\ldots,b_q)^T\in\m X^q$, 
we use the notation $K(a,b)$ to denote the $p$-by-$q$ matrix whose $(i,j)$-th component is $K(a_i,b_j)$ for $i\in[p]$ and $j \in [q]$. 
Let $X_n=(x_1,\ldots,x_n)^T\in\m X^n$ and $Y_n=(y_1,\ldots,y_n)^T\in\m Y^n$. 
Under this notation, the solution $\wht f$ takes the form as
\begin{equation}\label{eqn:weight_function}
\begin{aligned}
\wht f(x) &\defeq K(x,\,X_n)\, (K_n+n\lambda I_n)^{-1}\, Y_n \\
&= \frac{1}{n}\,\sum_{i=1}^n G_\lambda(x,\,x_i)\, y_i,
\end{aligned}
\end{equation}
where $K_n \defeq K(X_n,X_n)$ is the $n$-by-$n$ empirical kernel matrix.
% , and $G_\lambda(x,\,x_i) / n$ is the $i$-th element of $K(x,\,X_n)\, (K_n+n\lambda I_n)^{-1}$. 
Here, the weight function $G_{\lambda}:\, \m X\times \m X\to\mb R$ characterizes the impact of each observed pair $(x_i,y_i)$ on $\wht f(x)$, 
and plays an important role in determining the optimal importance sampling weights in the \nystrom method described. 
One key observation is that the weight function $G_\lambda$ depends on the dataset $\m D_n$ only through the design points $\{x_i\}_{i=1}^n$ and the regularization parameter $\lambda$ (which usually depends on the sample size $n$). 
This fact leads to the development of equivalent kernel approximation, as we will come back shortly in Section~\ref{Sec:equiv_k}.

A final remark of the subsection is that solving for $\wht \omega$ requires time complexity $\m O(n^3)$ of inverting an $n$-by-$n$ matrix, which becomes formidable when $n$ gets large. 
The practical demand of computationally scalable methods for implementing the KRR results in a large volume of literature on approximation algorithms, including our current work. 
%Section~\ref{Sec:related_work} has provided a selective review of recent related works. 

\subsection{The relation between RKHS norm and Fourier transform}

In this subsection, we describe a useful representation theorem that characterizes the RKHS of a stationary kernel function via the Fourier transform. 
Before formally introducing this theorem, we set up some notation.
We use $L_p(\mb R^d)=\big\{f:\, \mb R^d\to\mb R,\, \int_{\mb R^d} \big|f(x)\big|^p\,\dd x < \infty \big\}$ to denote the space of all $L_p$ integrable functions over $\mb R^d$ for $p \geq 1$.
For any function $f \in L_1(\mb R^d)$, we use $\ms F[f]$ to denote its Fourier transform defined by $\ms F[f](s) = \int_{\mb R^d} f(x)\,e^{-2 \pi \sqrt{-1} x^T s}\,\dd x,\, \mbox{for all $s \in \mb R^d$}$,
% \begin{align*}
% \ms F[f](s) = \int_{\mb R^d} f(x)\,e^{-2 \pi \sqrt{-1} x^T s}\,\dd x,\quad \mbox{for all $s \in \mb R^d$},
% \end{align*}
and $\ms F^{-1}[g]$ the inverse transform of a function $g$ in the frequency domain as $\ms F^{-1}[g](x) = \int_{\mb R^d} g(s)\, e^{2 \pi \sqrt{-1} x^T s}\,\dd s,\, \mbox{for all $x \in \mb R^d$}$.
% \begin{align*}
% \ms F^{-1}[g](x) = \int_{-\infty}^\infty g(s)\, e^{2 \pi \sqrt{-1} x^T s}\,\dd s,\quad \mbox{for all $x \in \mb R^d$}.
% \end{align*}
We use $\bar{z}$ to denote the complex conjugate of any $z \in \mb C$, the space of complex numbers.
The classical Bochner's theorem shows that the Fourier transform of any PSD and stationary kernel function is nonnegative. 

% -------------------------------------
% In this paper, we primarily focus on the case $d=1$, and the development can be easily extended to the multi-dimensional case.
The following theorem, which is not new but less well-known in the machine learning literature, provides a characterization of the RKHS with kernel $K$ through its spectral density function.
% Similar statements can be found in two previous papers \citet[Thm~10.12, Appendix A]{wendland2004scattered} and \citealt[]{belkin2018approximation}.
Similar statements can be found in two previous papers (\citealt[Thm~10.12]{wendland2004scattered} and \citealt[Appendix A]{belkin2018approximation}).
For the sake of completeness, we also include a proof in Section~\ref{sec:thm2} in the appendix, which is motivated by \citet{fukumizu2008elements}. 
% -------------------------------------
\begin{theorem}[Fourier representation of RKHS]\label{Eqn:RKHS_Fourier}
	Let function $m$ be the spectral density of a PSD and stationary kernel $K$, and $\mb H$ the associated RKHS. For any $f,g\in \mb H$, we have
	\begin{equation} \label{Eqn:freq_domain}
	\begin{aligned}
	\|f\|_{\mb H}^2 &= \int_{\mb R^d} \frac{\big|\ms F[f](s)\big|^2}{m(s)}\, \dd s, \\
    \quad \textrm{and} \quad
	\langle f,\, g\rangle_{\mb H} &= \int_{\mb R^d} \frac{\ms F[f](s)\cdot \overline{\ms F[g](s)}}{m(s)}\, \dd s.
	\end{aligned}
	\end{equation}
	In particular, the RKHS can be represented by $\mb H=\big\{f:\, \|f\|_{\mb H}^2=\int_{\mb R^d} \big|\ms F[f](s)\big|^2/m(s)\, \dd s<\infty\big\}$.
\end{theorem}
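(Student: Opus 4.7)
The plan is to build a candidate Hilbert space using the right-hand side of~\eqref{Eqn:freq_domain} as its inner product, verify that it is an RKHS whose reproducing kernel coincides with $K$, and then invoke the Moore--Aronszajn uniqueness of the RKHS attached to a PSD kernel to conclude equality with $\mb H$. Concretely, set $\mb H_0 = \{f:\, \int_{\mb R^d} |\ms F[f](s)|^2/m(s)\,\dd s < \infty\}$ with the proposed inner product, so that the task reduces to two self-contained Fourier-analytic checks: (i) the functional $\langle \cdot,\,K(\cdot,x)\rangle_{\mb H_0}$ agrees with point evaluation at $x$ for every $x\in\mb R^d$, and (ii) $(\mb H_0,\,\langle\cdot,\cdot\rangle_{\mb H_0})$ is complete.

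The reproducing property is the key computation. Since $K$ is PSD and stationary, Bochner's theorem gives $\ms F[K(\cdot)] = m$, so the Fourier shift identity applied to $K(\cdot,x) = K(\cdot - x)$ yields $\ms F[K(\cdot,x)](s) = m(s)\,e^{-2\pi \sqrt{-1}\, x^T s}$. Substituting into the putative inner product collapses the weight $1/m(s)$ against $m(s)$ and, after complex conjugation, produces $\int_{\mb R^d} \ms F[f](s)\, e^{2\pi \sqrt{-1}\, x^T s}\,\dd s = f(x)$ by Fourier inversion. The same calculation gives $\|K(\cdot,x)\|_{\mb H_0}^2 = \int_{\mb R^d} m(s)\,\dd s = K(0) < \infty$, so $K(\cdot,x)\in \mb H_0$.

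Completeness follows because $f \mapsto \ms F[f]/\sqrt{m}$ is a linear isometry of $\mb H_0$ into $L_2(\mb R^d)$, and any Cauchy sequence lifts to a convergent sequence there; the reproducing property just established, combined with Cauchy--Schwarz against $\|K(\cdot,x)\|_{\mb H_0}$, shows that pointwise values also converge uniformly on bounded sets, identifying the limit with an element of $\mb H_0$. Once completeness is in hand, Moore--Aronszajn forces $\mb H_0 = \mb H$ with matching norms, and the statement for the inner product then follows by polarization.

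The main technical obstacle is that elements of $\mb H_0$ need not lie in $L_1(\mb R^d)$, so the Fourier transforms above require careful interpretation. The natural resolution is to interpret $\ms F$ in the Plancherel sense and then observe that the finiteness of $\int |\ms F[f](s)|^2/m(s)\,\dd s$, combined with $\int m(s)\,\dd s = K(0) < \infty$, implies $\ms F[f] \in L_1(\mb R^d)$ via Cauchy--Schwarz, so classical Fourier inversion applies and every step above is legitimate. A minor secondary issue, irrelevant for the \matern case but worth flagging, is the possibility that $m$ vanishes on a set of positive measure, which is handled by imposing $\ms F[f] = 0$ on $\{m=0\}$ since such frequencies carry no information about $K$.
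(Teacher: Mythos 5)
Your proof is correct and follows essentially the same route as the paper's: construct a candidate Hilbert space carrying the Fourier-weighted inner product, observe that the $1/m(s)$ weight cancels against the spectral density appearing in $\ms F[K(\cdot,x)](s)=m(s)e^{-2\pi\sqrt{-1}x^Ts}$ to give the reproducing property, and identify the space with $\mb H$ by uniqueness of the RKHS. The only organizational difference is that the paper realizes the space as the image of $L^2(\mb R^d,\mu)$ (with $\dd\mu=m(s)\dd s$) under a generalized inverse-Fourier map $\ms M$, which makes completeness implicit and then recovers the explicit formula $F=\ms F[f]/m$ at the end, whereas you define the space directly via the Fourier integrability condition and verify completeness and the $L^1$ technicalities explicitly — arguably a slightly more careful version of the same argument.
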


\subsection{\nystrom methods with importance subsampling}

From expression~\eqref{eqn:weight_function} of the KRR solution $\wht f$, the $\m O(n^3)$ computational bottleneck comes from the inversion of the $n$-by-$n$ matrix $(K_n + n \lambda I)$. 
When design points $\{x_i\}_{i=1}^n$ are distinct and sample size $n$ is large, the empirical kernel matrix $K_n$ often has a high condition number and is nearly low-rank. 
In particular, several recent studies \citep{alaoui2015fast, yang2017randomized} show that both the computational and the statistical hardness of the KRR are captured by a quantity called the {\em statistical dimension} defined as
\begin{equation} 
\begin{aligned}
\label{eqn:stat_dim}
d_{stat} &\defeq {\rm Tr}\big(K_n(K_n+n\lambda I_n)^{-1}\big) \\
&= \frac{1}{n}\sum_{i=1}^n G_\lambda(x_i,\,x_i),
% = \sum_{i=1}^n \ell_i,
\end{aligned}
\end{equation} 
where ${\rm Tr}(A)$ means the trace of a matrix $A$. 

The statistical dimension $d_{stat}$ approximately counts the number of eigenvalues of the rescaled kernel matrix $n^{-1}K_n$ whose values are above the threshold $\lambda$. 
Therefore, computing $\wht f$ roughly amounts to solving an $d_{stat}$-dim quadratic program,
and for most stationary kernels $d_{stat}$ would be much smaller than $n$.
For example, for a \matern kernel with smoothness parameter $\nu$ being a positive half integer, 
$d_{stat} = \m O(n^{\frac{d}{2\nu+2d}})$ \citep{tuo2020improved}.
Due to the intrinsic low-rankness of $K_n$, the so-called \nystrom method, which replaces $K_n$ with its low-rank approximation $L_n$, has been used for approximately solving the KRR \citep{gittens2016revisiting, kumar2009sampling, williams2001using}. 
Specially, the \nystrom approximation of $K_n$ is the matrix $L_n = K_n S(S^T K_n S)^\dagger S^T K_n$,
% \begin{align*}
% L_n = K_n S(S^T K_n S)^+ S^T K_n,
% \end{align*}
where $A^\dagger$ denotes the Moore-Penrose pseudoinverse of a matrix $A$, and $S \in \mb R^{n \times d_{sub}}$ is a zero-one subsampling matrix whose columns are a subset of the columns in $I_n$, indicating which $d_{sub}$ observations have been selected. 
We use $\wht f_{L_n}$ to denote the approximate KRR solution obtained by replacing $K_n$ with $L_n$ in expression~\eqref{eqn:weight_function}. 

Following the paper~\citep{alaoui2015fast}, we will refer to the diagonal elements of matrix $K_n(K_n+n\lambda I_n)^{-1}$ as the {\em statistical leverage scores} $\{\ell_i\}_{i=1}^n$ associated with the kernel matrix $K_n$. 
It is straightforward to verify from identity~\eqref{eqn:weight_function} that the $i$-th diagonal element of $K_n(K_n+n\lambda I_n)^{-1}$ is precisely $\frac{1}{n} G_\lambda(x_i,x_i)$,
and thus we call $G_\lambda(x_i,x_i)$ the rescaled statistical leverage score.

The next result~\citep[Theorem~3]{alaoui2015fast} (after adapting to our notation) shows that if we use a randomized construction of $S$ by sampling $d_{sub} = \m O(d_{stat} \log(n))$ columns from $I_n$ with a proper distribution $\{q_i\}_{i=1}^n$ over $[n]$ approximately proportional to the statistical leverage scores, 
the resulting approximate solution $\wht f_{L_n}$ would attain the same statistical in-sample risk (up to a constant) as the original KRR solution $\wht f\,$. 
Here, we define the in-sample prediction risk for any regression function $f$ as $R_n(f) = \|f - f^\ast\|_n^2:\,=n^{-1}\,\sum_{i=1}^n\big(f(x_i)-f^\ast(x_i)\big)^2$.
% -------------------------------------
\begin{theorem}[\nystrom approximation accuracy]\label{thm:nystrom_error}
	Fix $\rho\in(0,1/2)$. 
	Let $L_n$ be the \nystrom approximation of $K_n$ with $S$ being formed by choosing $d$ columns randomly with replacement from the columns of the identity matrix $I_n$ according to a probability distribution $\{q_i\}_{i=1}^n$. 
	Suppose there exists some $\beta\in(0,1]$ such that $q_i\geq \beta\, G_{\lambda}(x_i,\,x_i)/\sum_{i=1}^nG_\lambda(x_i,\, x_i)$, and
	\begin{align*}
	d_{sub} \geq C_1\frac{d_{stat}}{\beta}\, \log\Big(\frac{n}{\rho}\Big) \quad\mbox{and}\quad \lambda\geq \frac{C_2}{\min_i G_\lambda(x_i,\,x_i)},
	\end{align*}
	where $d_{stat}$ is defined in~\eqref{eqn:stat_dim}. Then it holds with probability at least $1-2\rho$ that
	$R_n(\wht f_{L_n})\leq C_3 R_n(\wht f\,)$. Here $C_i$, $i=1,2,3$, are absolute constants.
\end{theorem}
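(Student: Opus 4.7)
The plan is to reduce the risk comparison to a matrix concentration bound, following the standard template for analyzing leverage-score-based Nystr\"om sampling. I would first recast both the exact and Nystr\"om KRR solutions through hat matrices $H = K_n(K_n + n\lambda I_n)^{-1}$ and $\widetilde{H} = L_n(L_n + n\lambda I_n)^{-1}$, so that $\wht f(X_n) = H Y_n$ and $\wht f_{L_n}(X_n) = \widetilde{H} Y_n$. Introducing the whitened matrix $A = K_n^{1/2}(K_n + n\lambda I_n)^{-1/2}$, whose diagonal entries of $AA^T$ recover the rescaled leverage scores $\ell_i$ (so $\sum_i \ell_i = d_{stat}$), the error $\widetilde{H} - H$ can be expressed via a single projection built from $A$ and the reweighted sampling matrix $SD$, where $D$ is the diagonal matrix of importance weights $1/\sqrt{d_{sub}\, q_{i_j}}$. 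This reduces the whole question to whether a random rank-one sum approximates $A^T A$ in the appropriate sense.

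Next, I would apply the matrix Bernstein inequality to
\begin{equation*}
\wht\Sigma \,=\, \sum_{j=1}^{d_{sub}} \frac{1}{d_{sub}\, q_{i_j}}\, A_{i_j\cdot}^T A_{i_j\cdot},\qquad \mathbb{E}[\wht\Sigma] = A^T A.
\end{equation*}
The hypothesis $q_i \geq \beta\,\ell_i/d_{stat}$ bounds the operator norm of each summand by $d_{stat}/(d_{sub}\beta)$ and its predictable variance by $A^T A \cdot d_{stat}/(d_{sub}\beta)$. To obtain a \emph{relative} rather than additive spectral error (which is what the risk comparison actually needs), I would apply Bernstein to the prewhitened summands $(A^T A)^{-1/2} A_{i_j\cdot}^T A_{i_j\cdot} (A^T A)^{-1/2}/(d_{sub} q_{i_j})$ and invert the whitening afterwards. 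With $d_{sub} \geq C_1\, d_{stat}/\beta \cdot \log(n/\rho)$ and the condition $\lambda \geq C_2/\min_i G_\lambda(x_i, x_i)$ controlling the Bernstein constants, this yields with probability at least $1-\rho$ the spectral sandwich
\begin{equation*}
\tfrac{1}{2}\,(K_n + n\lambda I_n) \;\preceq\; L_n + n\lambda I_n \;\preceq\; \tfrac{3}{2}\,(K_n + n\lambda I_n).
\end{equation*}

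Finally, I would unfold $R_n(\wht f_{L_n})$ under the model $Y_n = f^\ast(X_n) + \varepsilon$ into a bias piece involving $f^\ast$ and a variance piece quadratic in the subgaussian noise $\varepsilon$. The sandwich above gives $(L_n + n\lambda I_n)^{-1} \preceq 2\,(K_n + n\lambda I_n)^{-1}$, which directly bounds the bias by a constant multiple of the bias of $\wht f$; a Hanson-Wright tail bound then handles the noise term, again controlled by the same spectral inequality. A union bound over the Bernstein event and the noise event produces probability at least $1-2\rho$ and the advertised $R_n(\wht f_{L_n}) \leq C_3\, R_n(\wht f)$. The principal obstacle throughout is the bookkeeping in the matrix Bernstein step: one must align the factors of $\beta$, $d_{stat}$, and $\log(n/\rho)$ so that both the variance term and the boundedness term in Bernstein simultaneously yield the advertised sample complexity, and the assumption on $\lambda$ is exactly what prevents the boundedness term from blowing up for design points where the rescaled leverage score $G_\lambda(x_i, x_i)$ is small.
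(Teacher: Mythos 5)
The paper does not prove this theorem: it is imported verbatim from \citet[Theorem~3]{alaoui2015fast} with notation adapted, and no proof appears anywhere in the paper or its appendix. So there is nothing here for your argument to be compared against in detail; what can be said is that your proposal reconstructs the standard template from the cited source reasonably faithfully in spirit. Two remarks on where your sketch and the actual argument in the cited work diverge. First, the cited proof does not aim at a symmetric sandwich $\frac12(K_n + n\lambda I)\preceq L_n + n\lambda I\preceq\frac32(K_n + n\lambda I)$: the right-hand inequality is automatic since $L_n\preceq K_n$ holds for every Nystr\"om approximation, and the real content is a one-sided bound on the residual $K_n - L_n$, typically in the form $\|(K_n+n\lambda I)^{-1/2} K_n^{1/2}(I-P_S)K_n^{1/2}(K_n+n\lambda I)^{-1/2}\|\leq t$ where $P_S$ is the orthogonal projection onto the column span of $K_n^{1/2}S$. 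Your prewhitened Bernstein step is targeting $A^T A$, which is close in spirit but not literally the residual projector; to close your argument you would still need to convert the concentration of $\wht\Sigma$ around $A^TA$ into a bound on $(I-P_S)$ sandwiched by $(K_n+n\lambda I)^{-1/2}$, and that intermediate step is where the cited proof actually spends its effort. Second, the condition $\lambda\geq C_2/\min_i G_\lambda(x_i,x_i)$ in the statement comes directly from the boundedness term in the matrix Chernoff/Bernstein bound as you guessed, but in \citet{alaoui2015fast} it enters through a coherence-like quantity rather than through your rescaled Bernstein constants, so the bookkeeping you flag as the main obstacle is indeed where the details differ. Given that the paper simply cites this result, it would be appropriate either to do the same or to present the full argument, not a sketch that stops short of the residual-projector lemma.
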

% -------------------------------------
This theorem indicates that the problem of approximately solving the KRR reduces to that of approximately estimating the statistical leverage scores $\{\ell_i\}_{i=1}^n$.
Directly computing these leverage scores using SVD requires inverting an $n$-by-$n$ matrix and is as costly as solving the original KRR optimization~\eqref{eqn:weight_function}.
Finding purely numerical methods for approximating these leverage scores can also be quite challenging. 
For example, the approximation algorithm used in the paper~\citep{alaoui2015fast} has $\m O\Big(\frac{n^3}{d_{stat}^2}\Big)$ time complexity, which significantly exceeds the $\m O(n d_{stat}^2)$ complexity for forming $L_n$ and solving for $\wht f_{L_n}$ in the \nystrom approximation when $d_{stat} \ll \sqrt{n}$ (which holds for any M\'{a}tern kernel).

\subsection{Equivalent Kernel}\label{Sec:equiv_k}

Our method is initially motivated by a notion, \emph{equivalent kernel}, which was first introduced by \citet{silverman1984spline}.
The author showed that in the context of smoothing spline regression, as sample size $n$ goes to infinity the weight function $G_\lambda(\cdot,\,\cdot)$ after a proper rescaling approaches a limiting kernel function, called the equivalent kernel.
A recent work~\citep[Theorem~2.1]{yang2017frequentist} extends the context from smoothing spline regression to general kernel ridge regression. 
They proved that for a general kernel $K$, under the stochastic assumption that design points $\{x_i\}_{i=1}^n$ are i.i.d.~distributed according to a common distribution over $\m X$, 
there exists some equivalent kernel $\bar K_\lambda:\,\m X\times \m X\to \mb R$, 
such that the KRR estimator is asymptotically the same as a simple kernel type estimator with kernel function $\bar K_\lambda$, that is, under a suitable choice of diminishing regularization parameter $\lambda$, the following approximation error bound holds with probability tending to one as $n\to\infty$,
% -------------------------------------
\begin{align}\label{eqn:KRR_expansion}
\sup_{x\in\m X}\Big|\wht f(x) - \frac{1}{n}\sum_{i=1}^n \bar K_\lambda(x,\,x_i)\, y_i\Big| \leq \gamma_n\, \sqrt{\lambda},
\end{align}
% -------------------------------------
where $\gamma_n \to 0$ as $n\to\infty$ and $\sqrt{\lambda}$ matches the maximal magnitude of the estimation error $\sup_{x\in\m X}|\wht f(x) - f^\ast(x)|$. 
% 老师删改的地方 This formula~(\ref{eqn:KRR_expansion}) theoretically admits a limit of the statistical leverage score $\ell_i$, which is \emph{independent} of the design points other than $x_i$.
% Given the nice property, we conjecture $\frac{1}{n}\sum_{i=1}^n \bar K_\lambda(x,\,x_i)\, y_i$ could substitute the original estimator $\wht f(x)$, and bring some computational advantage.
% a comparison of this result with formula~\eqref{eqn:weight_function} suggests that the rescaled leverage score $G_\lambda(x_i,x_i)=n \ell_i$ may be well-approximated by $\bar K_\lambda(x_i,x_i)$ when $n$ is large.
Formula~(\ref{eqn:KRR_expansion}) predicts a theoretical limit of the statistical leverage score $\ell_i$ as $\bar K_\lambda(x_i,\,x_i)$ that is \emph{independent} of the design points other than $x_i$.
In other words, the KRR estimator can be expressed as a linear combination of $y_i$'s, 
whose coefficients only depend on the corresponding design point $x_i$. 
Besides, under mild conditions \citep{yang2017frequentist} on the design distribution, these coefficients admit a limit in probability (that can be characterized via an ``equivalent kernel" function) as $n \to \infty$.
The theoretical result motivates us to seek a computationally efficient method for the leverage scores through approximating these $\bar K_\lambda(x_i,\,x_i)$'s.
%Given the nice property, we conjecture $\frac{1}{n}\sum_{i=1}^n \bar K_\lambda(x,\,x_i)\, y_i$ could substitute the original estimator $\wht f(x)$, and bring some computational advantage.
% a comparison of this result with formula~\eqref{eqn:weight_function} suggests that the rescaled leverage score $G_\lambda(x_i,x_i)=n \ell_i$ may be well-approximated by $\bar K_\lambda(x_i,x_i)$ when $n$ is large.
% \section{Leverage score approximation via equivalent kernel}

% \section{Leverage score approximation via equivalent kernel}
\section{Leverage Score Approximation via Spectral Analysis}

We now turn to the main results of this work.
At a high level, we propose our method and give a sketch of the derivation via Fourier transform. 
We also provide the analysis of the corresponding time complexity and consider some stationary kernels to which our method would be applied.
Eventually, we prove that, by taking \matern kernels as an example, our method would attain an optimal prediction risk in the KRR.

\subsection{The proposed algorithm and a heuristic derivation}\label{sec:heuristic}

Under the notation above, we formally propose the explicit formula of our approximation method as
\begin{align}\label{Eqn:lv_app}
% G(x_i,\,x_i)\approx \int_{-\infty}^\infty \frac{1}{p(x_i) +  \lambda/m(s)}\,\dd s,
\wt K_{\lambda}(x_i,\,x_i) = \int_{\mb R^d} \frac{1}{p(x_i) +  \lambda/m(s)}\,\dd s,
\end{align}
where $p(x_i)$ is the density of $x_i$ defined in Section~\ref{Sec:contribution}.
With Eqn~(\ref{Eqn:lv_app}), we would use Algorithm~\ref{Alg:sa} below to approximate $G_\lambda(x_i,\,x_i)$ for some fixed {point ${x_i \in \mb R^d}$}. 
% -------------------------------------
\begin{algorithm}
\label{Alg:sa}
\SetAlgoLined
\KwIn{the input samples $X_n$ and the spectral density $m(\cdot)$ of the stationary kernel used}
\KwOut{A descrete sampling distribution $\{q_i\}_{i=1}^n$}
Initialize the sampling distribution $q_i = 0, \forall i = 1, \dots, n$\; 
Estimate the density $p_i$ of the samples $x_i, \forall i = 1, \dots, n$\; 
\For{i=1:m}{
    Compute the integration~(\ref{Eqn:lv_app}) with $p_i$, and assign the value to $q_i$ 
}
Denote $Q = \sum_{i=1}^n q_i$\; 
Update $q_i$ as $q_i / Q, \forall i = 1, \dots, n$\;
\caption{Estimation of the leverage scores}
\end{algorithm}
% -------------------------------------

To derive the formula (\ref{Eqn:lv_app}), by setting $y_i=n$, $y_j=0$ (for any $j\neq i$), we transform the objective value in the KRR optimization problem~\eqref{Eqn:loss} to the following functional:
\begin{align*}
A_{n,x_i}(f) &= \frac{1}{2n} \sum_{j=1}^n f(x_j)^2 + \frac{1}{2} \lambda \,\|f\|_\mathcal{H}^2 - f(x_i) \\
&= \frac{1}{2} \int_{\mb R^d} f(x)^2 \,\dd F_n(x) + \frac{1}{2} \lambda \,\|f\|_\mathcal{H}^2 - f(x_i),
\end{align*}
for any function $f \in \mb H$, where $F_n$ denotes the empirical distribution of $\{x_i\}_{i=1}^n$ and the integral is the Riemann–Stieltjes integral.
The minimizer of $A_{n,x_i}(f)$ would simply be $\wht f(\cdot) = \frac{1}{n} \sum_{j=1}^n G_\lambda(\cdot, x_j) y_j = G_\lambda(\cdot, x_i)$,
due to the independence of $G_\lambda$ from $\{y_i\}_{i=1}^n$.
Therefore, it suffices to analyze and understand this functional $A_{n,x_i}$ for any $x_i \in \m X$.
% {\color{red} The idea here is similar to the maximization view of leverage scores \citep{avron2017random, pauwels2018relating, avron2019universal}.}
% -------------------------------------
Now we assume that there exists a nice cdf $F$ over $\m X$ that admits a Lipschitz continuous density function denoted by $p$, so that the sup-norm $\tau(n)=\|F_n-F\|_\infty:\,=\sup_{x\in\m X}|F_n(x)-F(x)|$ is small. 
We further remark here that in the most common case, $\{x_i\}_{i=1}^n$ are i.i.d.~from pdf $p$, and then $\tau(n) \leq C\sqrt{\log n/n}$ holds with high probability due to the Glivenko-Cantelli theorem \citep{van1996weak}.
% -------------------------------------
Since $A_{n, x_i}$ is convex, finding its optimum amounts to finding the unique root of its functional derivative (such as the Gateaux derivative), 
which is a linear operator $DA_{n,x_i}(f):\, \mb H \to \mb H$ defined at each $f\in\mb H$ as
\begin{align*}
DA_{n,x_i}(f)(u) =\, &\int_{\mb R^d} f(x)\,u(x) \,\dd F_n(x) \\
 &+ \lambda \,\langle f,\,u\rangle_{\mb H} - u(x_i), \quad\mx{for any} \, u \in \mb H.
\end{align*}
Since $F_n$ can be well-approximated by $F$ under the assumption $\tau(n)\to 0$ and the solution $G_\lambda(\cdot,\, x_i)$ is expected to approach to a Dirac delta function centered at $x_i$ as $n\to \infty$ (which will be formalized in Section~\ref{sec:ek_ppt} in the appendix),
the above derivative can thus be approximated by a simpler population-level functional
\begin{align*}
DA_{x_i}(f)(u)=\, &p(x_i) \int_{\mb R^d} f(x)\,u(x) \, \dd x \\
&+ \lambda \,\langle f,\,u\rangle_{\mb H} - u(x_i),\quad\mx{for any}\, u \in \mb H,
\end{align*}
where we replace the differential $\dd F_n(x)$ with its local approximate $p(x_i) dx$.
% -------------------------------------
% After some direct computation (more details could be found in the appendix), 
% the unique root of $DA_{x_i}(\cdot)$, denoted by $\wt K_{\lambda}(\cdot,\,x_i)$, 
% can be obtained by the inverse Fourier transform,
% as $\wt K_{\lambda}(\cdot,\,x_i) = \ms F^{-1}\big[\big(p(x_i) + \lambda/m(s)\big)^{-1}\big](\,\cdot\, -x_i)$.
% -------------------------------------
This new operator admits a simpler form in the frequency domain thanks to Parseval's theorem (cf. Theorem~\ref{thm:Parseval} and Lemma~\ref{lem:RKHS_norm} in the appendix),
\begin{align*}
DA_{x_i}(f)(u)=&\,\int_{-\infty}^\infty \Big(p(x_i)\, \ms F[f](s) + \frac{\lambda}{m(s)}\,\ms F[f](s) \\
& - \exp\big\{-2 \pi \sqrt{-1} x_i s\big\}\Big)
\,\overline{\ms F[u](s)}\,\dd s.
\end{align*}
Therefore, the unique root of $DA_{x_i}(\cdot)$, denoted by $\wt K_{\lambda}(\cdot,\,x_i)$, can be obtained by equating the function inside the big parenthesis in the preceding display to zero, which is the inverse Fourier transform of
\begin{align}\label{eqn:eqv_kernel}
\frac{\exp\big\{-2 \pi \sqrt{-1} \dotp{x_i}{s} \big\}}{p(x_i) + \lambda/m(s)},\quad s \in \mb R^d,
\end{align}
or $\wt K_{\lambda}(\cdot,\,x_i) = \ms F^{-1}\big[ \big(p(x_i) + \lambda/m(s)\big)^{-1}\big](\,\cdot\, - x_i)$ due to the translation property of the Fourier transform. 
Replacing $x_i$ with the $i$-th design point $x_i$ leads to the following quantity
\begin{align*}
% G(x_i,\,x_i)\approx \int_{-\infty}^\infty \frac{1}{p(x_i) +  \lambda/m(s)}\,\dd s,
\wt K_{\lambda}(x_i,\,x_i) = \int_{\mb R^d} \frac{1}{p(x_i) +  \lambda/m(s)}\,\dd s,
\end{align*}
due to the inverse Fourier transform formula. 
% Thus an approximation to the leverage score $\ell_i$ would be given by equation~(\ref{Eqn:lv_app}). 
We show its applications to \matern kernels as follows.

% leading to equation~(\ref{Eqn:lv_app}).
% of $\exp\big\{-2 \pi \sqrt{-1} x_i^T s\big\}/(p(x_i) + \lambda/m(s)), \, s \in \mb R^d$,
% \begin{align}\label{eqn:eqv_kernel}
% \frac{\exp\big\{-2 \pi \sqrt{-1} x_i^T s\big\}}{p(x_i) + \lambda/m(s)},\quad s \in \mb R^d,
% \end{align}

% due to the translation property of the Fourier transform.

% \subsection{Applications to \matern and Gaussian kernels}\label{sec:matern}
% \noindent {\bf \matern kernel.} 

\textbf{Example (\matern kernels)}: \matern family \citep{matern2013spatial} is a class of isotropic kernels widely used in spatial statistics. 
The kernel function is expressed as $C_\nu (x, y) = C_\nu (x-y) = \frac{2^{1 - \nu}}{\Gamma(\nu)} (a \|x-y\|)^\nu B_\nu(a\|x-y\|)$, 
% \begin{align*}
% C_\nu (x, y) = C_\nu (x-y)
% &= \frac{2^{1 - \nu}}{\Gamma(\nu)} (a \|x-y\|)^\nu B_\nu(a\|x-y\|)
% \end{align*}
where $B_\nu$ is a modified Bessel function of the second kind, $\nu$ is a smoothness parameter (usually half integers), and $a > 0$ a scale parameter. 
Here we slightly abused the notation since $C_\nu$ is stationary. 
An important fact about the \matern kernel $C_{\nu}$ is that its associated RKHS is the $(\nu+d/2)$-th order Sobolev space (we can verify it by plugging the following Fourier transform $m_\alpha(\cdot)$ into Theorem~\ref{Eqn:RKHS_Fourier}). 
The notation $\alpha=\nu+d/2$ is hence used to denote the underlying smoothness level associated with kernel $K_{\alpha} \defeq C_{\alpha-d/2}$,
% The spectral density $m_\alpha$ of $K_\alpha$ has a simple expression, where for any $s \in \m X$,
% \begin{align*}
% m_\alpha(s)=  \frac{D_\alpha}{(a^2 + \|s\|^2)^{\alpha}},\ \ \mx{with }\ D_\alpha=\frac{\Gamma(\alpha) a^{2 \alpha-1}}
% {\Gamma(2\alpha-1) \pi^{d/2}}.
% \end{align*}
and the rescaled leverage approximation $\wt K_\lambda$ associated with $K_\alpha$ satisfies $\wt K_\lambda(\cdot\,,\,x_i)=\ms F^{-1} \Big[\Big(p(x_i) + \lambda D_\alpha^{-1} \big(a^2 + \|s\|^2\big)^{\alpha}\Big)^{-1}\Big](\cdot - x_i)$,
where $a=\sqrt{2\nu}, D_\alpha=\Gamma(\alpha) a^{2 \alpha-1} \pi^{-d/2} / \Gamma(2\alpha-1)$.
For general density function $p$, the integral formula~\eqref{Eqn:lv_app} with $m=m_\alpha$ provides a rule of thumb on how the statistical leverage score depends on the local input density as $\ell_i \propto \min \{1,\, ({\lambda}/{p(x_i)})^{1-d/(2\alpha)}\}$,
% \begin{align*}
% \frac{1}{n}G_\lambda(x_i,\,x_i) \propto \min\Big\{1,\, \Big(\frac{\lambda}{p(x_i)}\Big)^{1-d/(2\alpha)}\Big\},
% \end{align*}
which implies a relatively large value over those under-sampled regions with small $p(x_i)$.

\subsection{Computational complexity}

To give the complexity analysis, we first stress that in our theoretical development the dimension $d$ is either fixed or at most slowly (e.g. logarithmically) increases with the sample size $n$.
Beyond this setting, at least theoretically, the smallest subsampling size (via statistical dimension $d_{stat}$, which is $\m O((\log n)^{\frac{d}{2}})$ even under a Gaussian kernel) becomes comparable to $n$, making subsampling meaningless due to the curse of dimensionality.
In addition, classical nonparametric literature \citep{silverman1984spline, yang2017frequentist, van2009adaptive} suggests that a dimension $d$ of order $o(\log n)$ is necessary to make any estimator consistent.

% In practice, $\wt K_{\lambda}(x_i,\,x_i)$ can be efficiently computed by numerical integration with $p$ replaced by any local density estimate with $\wt{\m O}(n)$ complexity. 
With the requirement on $d$ above, we claim $\wt K_{\lambda}(x_i,\,x_i)$ can be efficiently computed in $\wt{\m O}(n)$ time. 
Specifically, the overall complexity includes two parts, numerical integration, and density estimation.
A key observation here is that for both parts the error rates are only required to be sub-optimal,
and $o(1)$ relative error suffices to guarantee the optimality of the error rate in the KRR.
With such a high tolerance of error, the two parts above could both be implemented in $\wt{\m O}(n)$ time as claimed (see Section~\ref{sec:polar},~\ref{sec:density_estimation} in the appendix for more details).
% We also would like to remark the dimension $d$ doesn't exponentially impact the complexity.
% We also would like to remark the overall complexity would not be exponential in dimension $d$.
% We additionally discuss the complexity dependence on dimension $d$ here.
% To illustrate the complexity dependence on the dimension $d$, we additionally make some remarks here.

In particular, the overall complexity can be made at most polynomial in the dimension $d$.
For the integration part, we can avoid the exponential dependence on $d$ by applying a polar coordinate transformation to reduce the multivariate integral~(\ref{Eqn:lv_app}) to a univariate integral (c.f. Section~\ref{sec:polar} in the appendix).
% The same statement holds for density estimation part, while it needs finer analysis.
For density estimation, some advanced methods are able to generate $n$ density estimates at sample design points in $\m O(n d \log n)$ time with relative approxiamtion error (difference between accurate KDE and approximation methods) of magnitude $\m O((\log n)^{-1/2})$ (such as ASKIT~\citep[Eqn~(3.3)]{march2015askit}, HBE~\citep[Theorem~12]{charikar2017hashing}, and modified HBE~\citep[Theorem~1]{backurs2019space}).
(Those methods only aim to approximate the original KDE, and hence have no requirements on the density but the kernel used in KDE.
The exact set of assumptions for modified HBE are provided in Section~\ref{Sec:hbe} in the appendix.)
In practice, when the KRR problem of interest is not high dimensional($d=\omega(1)$), 
we are even able to efficiently estimate the density with the optimal error rate in $\m O(n (\log n)^d)$ time, by some classical approaches (such as KD-tree methods~\citep{ivezic2014statistics}, fast multipole methods \citep{greengard1997fast}, and fast Gauss transforms \citep{greengard1991fast}),
which are empirically be even faster than the advanced KDE methods above.

As a closing of this subsection, we leave a comment regarding Gaussian kernels.
It seems Gaussian kernels have a low statistical dimension $d_{stat} = \m O( (\log n)^{d/2})$, 
which may allow previous leverage approximation methods, such as BLESS, to have a time complexity comparable to our method. 
We point out here the complete expression for the scale of $d_{stat}$ should be $\m O(\sigma^{-d} (\log (n \sigma^{2d}))^{d/2})$ \citep{yang2017randomized}, where $\sigma$ is the bandwidth of the Gaussian kernel used.
It implies the statistical dimension of Gaussian Kernels would actually be heavily impacted by the bandwidth $\sigma$.
However, as we hope to attain the optimal error rate in KRR, we need to decrease the bandwidth $\sigma$ of Gaussian kernels to $\m O (n^{-c})$ ($c \in (0, \frac{1}{2d})$) to significantly enrich the associated RKHS \citep{van2009adaptive}. 
As a trade-off, the magnitude of $d_{stat}$ would simultaneously be increased to a polynomial of $n$,
which is comparable to the scale of $d_{stat}$ using a proper \matern kernel.
Therefore, generally Gaussian kernels cannot enable the previous leverage approximation methods to enjoy the $\wt{\m O}(n)$ complexity.

% -------------------------------------
\begin{figure}[h]
\vspace{.3in}
\centerline{\includegraphics[width=\textwidth]{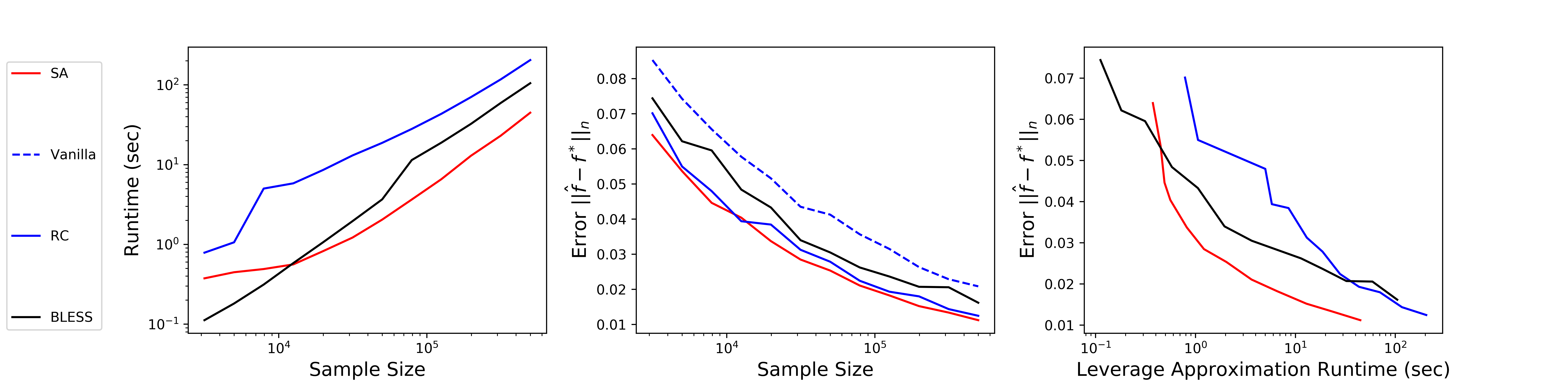}}
\vspace{.3in}
\caption{Run time vs. error tradeoff.}
\label{time_error}
\end{figure}
% -------------------------------------

\subsection{Theoretical results}

We focus on the \matern kernel $K_\alpha$ (proof for other stationary kernels can be developed similarly) and define the effective bandwidth parameter, an important auxiliary parameter analogous to the bandwidth of a Gaussian kernel, as $h \defeq \lambda^{1/(2\alpha)}$,
which indicates the smoothness of the functions in the corresponding RKHS. 
When $\lambda$ is set to obtain the minimax-optimal KRR estimator $\wht f$, the scale of $h$ would therefore be $\Theta(n^{-1/(2\alpha + d)})$.
Our first result provides an explicit error bound on $|G_\lambda(\cdot,\, x_i)-\wt K_\lambda(\cdot,\,x_i)|$ in a neighborhood of $x_i$. 
Particularly, Lemma~\ref{Lem:EqKernel_property_matern} in the appendix shows that the equivalent kernel $\wt K_\lambda(\cdot,x_i)$ resembles a Dirac delta function centered at $x_i$ with radius $\m O(h)$, 
so it suffices to characterize its difference with $G_\lambda(\cdot,\,x_i)$ in the local neighborhood. 
The regularity assumptions of our results are listed as follows:
\begin{assumption}
	There exists a distribution $F$ whose density function $p$ is Lipschitz continuous, so that $\tau(n)=\|F_n-F\|_\infty \leq C_0\, h^2$ for some constant $C_0>0$.
	\label{a.1}
\end{assumption}
\begin{assumption}
	The density function $p$ is strictly positive at the points $x_i, i=1, \dots, n$. Besides, for each $i$, there exists some $\delta(x_i)$, such that $p(x) \geq \frac{p(x_i)}{2}$ for all $x \in B\left(x_i, \delta(x_i)\right) \defeq \{x; \|x-x_i\| < \delta(x_i)\}$, and $\delta(x_i) \geq Ch\log(\frac{1}{h}), \forall i \in [n]$ for some sufficiently large constant $C$ independent of $(x_i, h, n)$.
	\label{a.2}
\end{assumption}
Assumption~\ref{a.1} needs the empirical distribution $F_n$ to be well-approximated by a smooth cdf $F$ (c.f.~Section~\ref{sec:heuristic}). Assumption~\ref{a.2} requires $x_i$ to be a proper interior point of the support of $p$, or at least $\Theta(h\log(\frac{1}{h}))$ far away from the zero density region. 
These two assumptions are mild. 
Assumption~1 is indeed a generalized version of a common assumption “$x_i$’s are drawn iid from a distribution F with Lipschitz density p”, while also compatible with fixed design setting.
Assumption~1 would automatically be satisfied when the quoted assumption holds, with $\tau(n) = \m O((d/n)^{1/2})$ guaranteed by multivariate Glivenko-Cantellli theorem \citep[p.~828][Theorem~1]{shorack2009empirical}.
Another remark about Assumption~\ref{a.1} is that the Lipschitz continuity of the density $p$ is enough for KDE to produce consistent estimation with mean squared error $\m O(n^{-\frac{2}{d+2}})$ \citep{walter1979probability}, 
which is even dominated by the approximation error of the KDE approximation methods mentioned above. 
(Thus now we can conclude the KDE methods above with $\m O(n d (\log n))$ time complexity could provide sufficient accuracy.)
For Assumption~2, as long as $p(x_i) > 0$ and $p$ is continuous, it always holds in the large scale setting as $n \to \infty$ since $h \log(\frac{1}{h}) \to 0$ as $h \to 0$. 
Moreover, we only need to verify it for the observed design points $\{x_i\}_{i=1}^n$ in conjunction with Theorem~\ref{thm:nystrom_error}, 
and by definition $p(\cdot)$ is automatically positive at these observed points.
% Otherwise, $x_0$ becomes a degenerate point and we will discuss this boundary issue in the supplementary material.
% Section~\ref{Sec:boundary}.
\begin{theorem}[Leverage score approximation]\label{thm:main_result}
	If Assumptions~\ref{a.1} and~\ref{a.2} hold, then for any $i \in [n]$
	\begin{align*}
	\sup_{x\in B(x_i,\delta(x_i))} \big|G_\lambda(x,\,x_i) &- \wt K_\lambda(x,\,x_i)\big| \\
        &\leq C_{x_i}\, h^{-d}\big(\tau(n)\,h^{-d}+h\big).
	\end{align*} 
	In particular, the relative error of approximating $G_\lambda(x_i,\,x_i)$ by the integral~\eqref{Eqn:lv_app} satisfies
	\begin{align*}
	\frac{\big|G_\lambda(x_i,\,x_i) - \wt K_\lambda(x_i,\,x_i)\big|}{\big|G_\lambda(x_i,x_i)\big|} \leq C'_{x_i}\, \big(\tau(n)\,h^{-d} + h\big).
	\end{align*}
	Here we may choose $C_{x_i} = C\max\{1,p^{-1/2}(x_i)\}$ and $C'_{x_i} = C \max\{1,\,p^{1/2-d/(2\alpha)}(x_i)\} \sqrt{p(x_i)}$ for some constant $C$ independent of $(x_i,h,n)$.
\end{theorem}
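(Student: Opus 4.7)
The plan is to compare $G_\lambda(\cdot,\,x_i)$ and $\wt K_\lambda(\cdot,\,x_i)$ through their first-order optimality equations. Both are minimizers of strictly convex quadratic functionals over $\mb H$; by the reproducing property, their Gateaux derivatives (as introduced in Section~\ref{sec:heuristic}) reduce to operator identities of the form $(T_n + \lambda I)\,G_\lambda(\cdot,\,x_i) = K_\alpha(\cdot,\,x_i)$ and $(T_{x_i} + \lambda I)\,\wt K_\lambda(\cdot,\,x_i) = K_\alpha(\cdot,\,x_i)$, where $T_n f(x) = \int K_\alpha(x,y)\,f(y)\,\dd F_n(y)$ is the empirical integral operator and $T_{x_i} f(x) = p(x_i)\int K_\alpha(x,y)\,f(y)\,\dd y$ is its localised, density-linearised population counterpart.

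Subtracting the two identities gives $(T_n + \lambda I)\,e = (T_{x_i} - T_n)\,\wt K_\lambda(\cdot,\,x_i)$, where $e := G_\lambda(\cdot,\,x_i) - \wt K_\lambda(\cdot,\,x_i)$. I would decompose the right-hand side as $E_1 + E_2$ with
\begin{align*}
E_1(x) &= \int K_\alpha(x,y)\,\wt K_\lambda(y,\,x_i)\,[p(x_i)-p(y)]\,\dd y, \\
E_2(x) &= \int K_\alpha(x,y)\,\wt K_\lambda(y,\,x_i)\,\dd(F-F_n)(y).
\end{align*}
For $E_1$, the Lipschitz bound on $p$ restricted to $B(x_i,\delta(x_i))$, combined with the peak/decay profile of $\wt K_\lambda(\cdot,\,x_i)$ (magnitude of order $h^{-d}$ within effective radius of order $h$, plus exponential tail outside, established in Lemma~\ref{Lem:EqKernel_property_matern}), yields $\sup_x|E_1(x)| \leq C\,h$; the tail contribution is negligible thanks to Assumption~\ref{a.2}'s $\delta(x_i) \geq Ch\log(1/h)$. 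For $E_2$, Assumption~\ref{a.1} together with a total-variation estimate on the smooth, essentially compactly supported product $K_\alpha(x,\cdot)\,\wt K_\lambda(\cdot,\,x_i)$ (obtained by integrating against the Kolmogorov-gap process $F-F_n$, using the spectral integrability of $\wt K_\lambda$) yields $\sup_x|E_2(x)| \leq C\,\tau(n)\,h^{-d}$.

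To pass from the source bound to a pointwise bound on $e$, I would avoid the crude estimate $\|(T_n+\lambda I)^{-1}\|\leq \lambda^{-1}$, which would introduce an inflated $h^{-2\alpha}$ amplification. Instead, by writing $(T_n+\lambda I)^{-1}=(T_{x_i}+\lambda I)^{-1}+[(T_n+\lambda I)^{-1}-(T_{x_i}+\lambda I)^{-1}]$ and exploiting that $(T_{x_i}+\lambda I)^{-1}$ acts in the frequency domain as multiplication by $(p(x_i)+\lambda/m(s))^{-1}$, one sees the effective sup-norm amplification factor is precisely the peak value $\wt K_\lambda(x_i,x_i)$, of order $h^{-d}$, rather than $\lambda^{-1}$; the residual operator $(T_n+\lambda I)^{-1}-(T_{x_i}+\lambda I)^{-1}$ contributes only higher-order terms because Assumption~\ref{a.1} controls $T_n-T_{x_i}$ by $\tau(n)$. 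Combining these pieces produces $\sup_{x\in B(x_i,\delta(x_i))}|e(x)| \leq C_{x_i}\,h^{-d}(h + \tau(n)\,h^{-d})$, which is the stated absolute bound. The relative-error inequality then follows by dividing through and invoking the Matern-specific lower bound $\wt K_\lambda(x_i,x_i) \geq c\,h^{-d}\,p(x_i)^{d/(2\alpha)-1}$, itself a direct consequence of the integral formula~\eqref{Eqn:lv_app}.

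The main obstacle is precisely the sharp inversion step: translating the operator identity for $e$ into a pointwise sup-norm bound with the correct amplification factor $h^{-d}$ rather than the naive $\lambda^{-1}=h^{-2\alpha}$. This requires the full Fourier-analytic characterisation of $\wt K_\lambda(\cdot,x_i)$ from Lemma~\ref{Lem:EqKernel_property_matern}, namely both its peak magnitude and its exponential tail decay, together with the Sobolev embedding of the RKHS of order $\alpha>d/2$ into $C(\m X)$ to turn integral estimates into pointwise control. Bounding the tail contribution to $E_2$ via a total-variation argument on $K_\alpha(x,\cdot)\,\wt K_\lambda(\cdot,x_i)$ is the most delicate technical piece, since it must be tight enough that the resulting $\tau(n)\,h^{-d}$ source scaling survives the subsequent $h^{-d}$ amplification without losing an additional factor of $h^{-1}$ or worse.
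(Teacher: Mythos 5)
Your proposal takes a genuinely different route from the paper's. You cast the comparison as operator equations: writing $T_0 f(x)=\int K_\alpha(x,y)f(y)\,\dd y$ and $T_n f(x)=\int K_\alpha(x,y)f(y)\,\dd F_n(y)$, you have $(T_n+\lambda I)G_\lambda(\cdot,x_i)=K_\alpha(\cdot,x_i)$ and $(p(x_i)T_0+\lambda I)\wt K_\lambda(\cdot,x_i)=K_\alpha(\cdot,x_i)$, subtract, and then invert. The paper instead stays variational: with $\wt u=\wt K_\lambda(\cdot,x_i)-G_\lambda(\cdot,x_i)$, the quadratic structure of $A_{n,x_i}$ gives the energy identity $DA_{n,x_i}(\wt K_\lambda(\cdot,x_i))(\wt u)=\|\wt u\|_{n,\lambda}^2$; the left-hand side is split into the same two sources you identify (the Lipschitz mismatch in $p$ and the empirical-process gap $F_n-F$), the right-hand side is lower-bounded by the localized norm $\|\wt u\|_{x_i,\lambda}^2$ up to a controllable remainder, and the resulting norm bound is only converted to a pointwise bound at the very end via the localized Sobolev interpolation inequality (Theorem~\ref{thm:Local_embd}), paying a factor of order $h^{-d/2}$. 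The source decomposition and the resulting orders agree, but the inversion mechanism is different.

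There is, however, a real gap in your inversion step. The residual $(T_n+\lambda I)^{-1}-(p(x_i)T_0+\lambda I)^{-1}$ equals, by the second resolvent identity, $(T_n+\lambda I)^{-1}(p(x_i)T_0-T_n)(p(x_i)T_0+\lambda I)^{-1}$, and the difference $p(x_i)T_0-T_n$ is \emph{not} uniformly small: away from $x_i$ the measure $p(x_i)\,\dd y$ is not close to $\dd F_n(y)$, so the operator norm of the residual is $O(1)$, not $o(1)$. To make your plan rigorous you would need to show that the sources $E_1,E_2$ and the intermediate quantity $(p(x_i)T_0+\lambda I)^{-1}E_j$ are localized near $x_i$ with exponential tails at scale $h$, so the residual's action on them is governed only by the local approximation of $\dd F_n$ by $p(x_i)\,\dd y$. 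That localization is precisely what the paper encodes through the truncated norm $\|\cdot\|_{x_i,\lambda}$ together with Theorem~\ref{thm:Local_embd} and the exponential decay in Lemma~\ref{Lem:EqKernel_property_matern}, and it is nontrivial work; your proposal names this as the ``most delicate technical piece'' but does not supply it. A second, smaller issue: the Fourier multiplier of $(p(x_i)T_0+\lambda I)^{-1}$ is $1/(p(x_i)m(s)+\lambda)$, which is \emph{not} integrable, so this resolvent does not map $L^\infty$ to $L^\infty$ boundedly; the stated amplification factor $\wt K_\lambda(x_i,x_i)$ of order $h^{-d}$ only appears once you factor out the smoothing kernel $K_\alpha$ hidden inside $E_1,E_2$ and recognize that the effective multiplier acting on the remaining factor is $1/(p(x_i)+\lambda/m(s))$, whose $L^1$ norm is exactly $\wt K_\lambda(x_i,x_i)$. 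The bookkeeping is consistent, but it needs to be written out rather than read off the peak value.
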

The proof of this theorem relies on a novel Sobolev interpolation inequality that bounds the localized sup-norm via the RKHS norm $\|\cdot\|_{\mb H}$ plus a localized $L_2$ norm.
We remark that the rescaled leverage score $G_\lambda(x_i,\,x_i)$ and our approximation $\wt K_\lambda(x_i,\,x_i)$ would both increase to infinity as $n \to \infty$, and the upper bound for the difference between them would also diverge as shown in the first inequality above; 
however, the relative error, the quantity of our interest, would shrink. 
Combining the above with Theorem~\ref{thm:nystrom_error}, we can show our approximation leads to an optimal prediction risk in the approximated KRR.
\begin{theorem}[\nystrom approximation]
	Suppose Assumptions~\ref{a.1},~\ref{a.2} hold for each $x_i, i=1,\ldots,n$. Under the same setting and conditions Theorem~\ref{thm:nystrom_error}, if the importance sampling weights are chosen as $q_i= \wt K_\lambda(x_i,\,x_i)/\sum_{i=1}^n\wt K_\lambda(x_i,\,x_i)$, we have $R_n(\wht f_{L_n}) \leq C_3 R_n(\wht f)$ with probability at least $1 - 2\rho$.
\end{theorem}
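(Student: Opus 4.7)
\medskip

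The plan is to reduce this claim to a direct application of Theorem~\ref{thm:nystrom_error}, using Theorem~\ref{thm:main_result} to verify the crucial hypothesis on the sampling distribution. Concretely, Theorem~\ref{thm:nystrom_error} requires that there exists some $\beta\in(0,1]$ such that
\begin{align*}
q_i \;\geq\; \beta\,\frac{G_\lambda(x_i,x_i)}{\sum_{j=1}^n G_\lambda(x_j,x_j)}\quad\text{for all } i\in[n],
\end{align*}
and then delivers the desired in-sample risk bound with probability at least $1-2\rho$. All other conditions of Theorem~\ref{thm:nystrom_error} (on $d_{sub}$, $\lambda$, and on the subsampling being i.i.d.\ from $\{q_i\}$) are inherited by hypothesis.

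The first step is to invoke Theorem~\ref{thm:main_result} to produce a uniform relative error bound. Under Assumptions~\ref{a.1} and~\ref{a.2} with $h=\lambda^{1/(2\alpha)}$ and $\tau(n)=\m O(h^2)$, the second display of Theorem~\ref{thm:main_result} yields, for each $i$,
\begin{align*}
\bigl|G_\lambda(x_i,x_i)-\wt K_\lambda(x_i,x_i)\bigr|\;\leq\; C'_{x_i}\bigl(\tau(n)h^{-d}+h\bigr)\cdot G_\lambda(x_i,x_i).
\end{align*}
Setting $\varepsilon_n:=\max_{i\in[n]}C'_{x_i}\bigl(\tau(n)h^{-d}+h\bigr)$, one checks that under the standard choice $\lambda=\Theta(n^{-2\alpha/(2\alpha+d)})$ and under the extra assumption that the density $p$ is uniformly bounded above and below on the closure of the observed support, $\max_i C'_{x_i}$ is $\m O(1)$, and $\tau(n)h^{-d}+h=o(1)$ (for $\alpha$ sufficiently large relative to $d$, as needed to accommodate Assumption~\ref{a.1}). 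Hence $\varepsilon_n\to 0$, and for all $n$ large enough $\varepsilon_n\leq 1/2$.

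The second step combines these bounds at the numerator and denominator. From $(1-\varepsilon_n)G_\lambda(x_i,x_i)\leq\wt K_\lambda(x_i,x_i)\leq(1+\varepsilon_n)G_\lambda(x_i,x_i)$, summing over $i$ gives $\sum_j\wt K_\lambda(x_j,x_j)\leq(1+\varepsilon_n)\sum_j G_\lambda(x_j,x_j)$, so that
\begin{align*}
q_i \;=\;\frac{\wt K_\lambda(x_i,x_i)}{\sum_j\wt K_\lambda(x_j,x_j)}\;\geq\;\frac{1-\varepsilon_n}{1+\varepsilon_n}\cdot\frac{G_\lambda(x_i,x_i)}{\sum_{j=1}^n G_\lambda(x_j,x_j)}.
\end{align*}
Choosing $\beta=(1-\varepsilon_n)/(1+\varepsilon_n)\in(1/3,1]$ for $n$ large, the hypothesis of Theorem~\ref{thm:nystrom_error} is satisfied. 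Since $\beta$ is bounded away from $0$, the condition $d_{sub}\geq C_1 d_{stat}\log(n/\rho)/\beta$ is equivalent (up to a constant) to the standing hypothesis $d_{sub}=\Theta(d_{stat}\log(n/\rho))$ in Theorem~\ref{thm:nystrom_error}. Applying that theorem directly yields $R_n(\wht f_{L_n})\leq C_3 R_n(\wht f\,)$ with probability at least $1-2\rho$.

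The principal obstacle is not the chain of inequalities, which is routine, but ensuring the uniform smallness of the relative error $\varepsilon_n$ across all design points $x_i$. The per-point constant $C'_{x_i}=C\max\{1,p^{1/2-d/(2\alpha)}(x_i)\}\sqrt{p(x_i)}$ can blow up both when $p(x_i)$ is very small and (depending on the sign of $1/2-d/(2\alpha)$) when $p(x_i)$ is very large, so a uniform density bound on the support together with a quantitative lower bound $\min_i p(x_i)>0$ is needed to turn the pointwise statement of Theorem~\ref{thm:main_result} into the uniform statement required to apply Theorem~\ref{thm:nystrom_error}. Once this uniform control is in place, the rest of the argument is a two-line sandwich.
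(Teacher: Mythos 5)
Your proposal takes exactly the route the paper intends: Theorem~\ref{thm:main_result} delivers a multiplicative relative-error bound $\varepsilon_n$ on each $\wt K_\lambda(x_i,x_i)$, the two-sided sandwich then lower-bounds $q_i$ by $\beta\,G_\lambda(x_i,x_i)/\sum_j G_\lambda(x_j,x_j)$ with $\beta=(1-\varepsilon_n)/(1+\varepsilon_n)$ bounded away from zero once $\varepsilon_n\le 1/2$, and Theorem~\ref{thm:nystrom_error} finishes. The paper states the proof as a one-line combination of those two results, and your filled-in argument is the correct instantiation, including the observation that the only randomness enters through the \nystrom subsampling so no union bound is needed beyond the $1-2\rho$ from Theorem~\ref{thm:nystrom_error}.

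One clarification on the final paragraph: the extra ``uniformly bounded above and below'' assumption you propose is not actually needed. First, $C'_{x_i}$ does \emph{not} blow up as $p(x_i)\to 0$: when $1/2-d/(2\alpha)\ge 0$ the max is $1$ and $C'_{x_i}\asymp\sqrt{p(x_i)}\to 0$; when $1/2-d/(2\alpha)<0$ one gets $C'_{x_i}\asymp p(x_i)^{1-d/(2\alpha)}$, and since $\alpha>d/2$ (which holds whenever $\nu=\alpha-d/2>0$) the exponent is positive, so again $C'_{x_i}\to 0$. This is exactly the remark made at the end of the proof of Theorem~\ref{thm:main_result}. Second, Lipschitz continuity of the density (Assumption~\ref{a.1}) already forces $p$ to be bounded above: if $p(x_0)=M$ and $p$ is $L$-Lipschitz, then $p\ge M/2$ on a ball of radius $M/(2L)$, so $\int p\gtrsim M^{d+1}$, which caps $M$ since $\int p=1$. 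Hence $\max_i C'_{x_i}=\m O(1)$ under the stated assumptions, and $\varepsilon_n=\m O(\tau(n)h^{-d}+h)=o(1)$ with no additional hypotheses. Aside from that over-caution, your argument is correct.
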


\section{Experiments}
In this section, we evaluate our leverage score approximation method on both synthetic and real datasets. 
The algorithms below are implemented in unoptimized Python code, run with one core of a server CPU (Intel Xeon-Gold 6248 @ 2.50GHZ) on Red Hat 4.8.
Specifically, we perform the numerical integration and density estimation as described in Section~\ref{sec:polar} and \ref{sec:density_estimation} in the appendix.
Due to the limited space, the complete settings of the experiments below and more supplementary results can be found in Section~\ref{sec:exp_details} in the appendix.

\subsection{Performance in kernel ridge regression}\label{Sec:pred_risk}
% \textbf{Performance in kernel ridge regression.}
% In this section, we study the performance of the \nystrom method with our leverag approximation method.
We compare the in-sample prediction error of \nystrom methods in KRR, as well as the corresponding leverage approximation time among all the competing algorithms: 
uniform sampling (hereinafter referred to as ``Vanilla"), Recursive-RLS (RC), \citep{musco2017recursive}, Bottom-up Leverage Scores Sampling (BLESS) \citep{rudi2018fast} and our proposed spectral-analysis-based method (SA). 
(The Monte Carlo approximation for the regularized Christoffel function \citep{pauwels2018relating} in practice reduces to directly computing leverage scores and is thus omitted.)
In the experiment, we generate design points $\{x_i\}_{i=1}^n$ with $n \in [2000, 500000]$ from a 3-D bimodal distribution (see Section~\ref{sec:exp_details} in the appendix for the definition).
% The bimodal distribution is specifically designed, so that the performance of uniform sampling would significantly degrade as the sample size $n$ increases. 
% \matern kernel ($\nu=1.5$) is in use, and $\lambda$ is set as $0.075 n^{-2 \alpha / (2 \alpha + d)}$.
% The true regression function to be estimated is $f^\ast(x) = 1.6 |(\|x\| - 1) (\|x\| - 1.5)| - \|x\| (\|x\|-2) (\|x\|-3) - 1$, and i.i.d.~noises follow $\mathcal{N}(0, 0.25)$. 
We use squared in-sample estimation error $\|\wht f - f^\ast\|_n^2$ as the evaluation metric.
All the results reported in Figure~\ref{time_error} are averaged over 30 replicates.
We remark that in the left or the right subplot there is no curve for ``Vanilla" method, as this method assumes the leverage scores are uniform and thus takes no time to approximate.

In Figure~\ref{time_error}, we can observe ``Vanilla" fails to capture the information of the entire design distribution as expected, 
as with high probability, only few data points from the small mode would be sampled. 
For RC, BLESS, and our method, although they are all able to capture the non-uniformity, 
our method has the best runtime versus error trade-off, especially when $n$ is large.
Particularly, when $n = 5 \times 10^5$ our method takes $35.8$s to approximate the leverage scores, while RC and BLESS respectively take a higher cost—around $94.3$s and $167$s—due to their higher complexities.
% \subsection{Statistical leverage scores accuracy}

% -------------------------------------
\begin{table*}[t]
  \caption{Statistical Leverage Score Approximation Accuracy}
  \label{tab:lvg_approx}
  \centering
  % \vskip 0.15in
\begin{tabular}[]{lllllllllll}
\toprule
 & \multicolumn{3}{l}{RQC}  & \multicolumn{3}{l}{HTRU2} & \multicolumn{3}{l}{CCPP}\\
\cmidrule(r){2-10}
Method & Time & $\bar r$ & $5^{th}/95^{th}$ & Time & $\bar r$ & $5^{th}/95^{th}$ & Time & $\bar r$ & $5^{th}/95^{th}$ \\
\midrule
SA & 0.40 & 1.01 & 0.87/1.13 
& 2.23 & 1.04 & 0.77/1.26 
& 0.48 & 1.00 & 0.79/1.21 \\
Vanilla & - & 1.06 & 0.64/1.40 
& - & 1.13 & 0.53/1.63 
& - & 1.04 & 0.72/1.33 \\
RC & 6.97 & 1.03 & 0.75/1.33 
& 2.15 & 1.05 & 0.75/1.27 
& 9.21 & 1.02 & 0.82/1.24 \\
Bless & 3.83 & 1.03 & 0.74/1.33 
& 1.63 & 1.07 & 0.67/1.32 
& 5.25 & 1.02 & 0.81/1.24 \\
\bottomrule
\end{tabular}
\end{table*}
% -------------------------------------

\subsection{Statistical leverage scores accuracy}
% \textbf{Statistical leverage scores accuracy.}

We empirically validate that the approximation $\wt K_\lambda(x_i,x_i)$ approaches the rescaled statistical leverage score $G_\lambda(x_i, x_i)$ as guaranteed by our theory.
In particular, we compare the true rescaled leverage and our approximation for samples from one-dimensional (for the ease of visualization) Unif$[0, 1]$, Beta$(15, 2)$, and a bimodal distribution.

\begin{figure}[h]
\vspace{.3in}
\centerline{\includegraphics[width=3.5in]{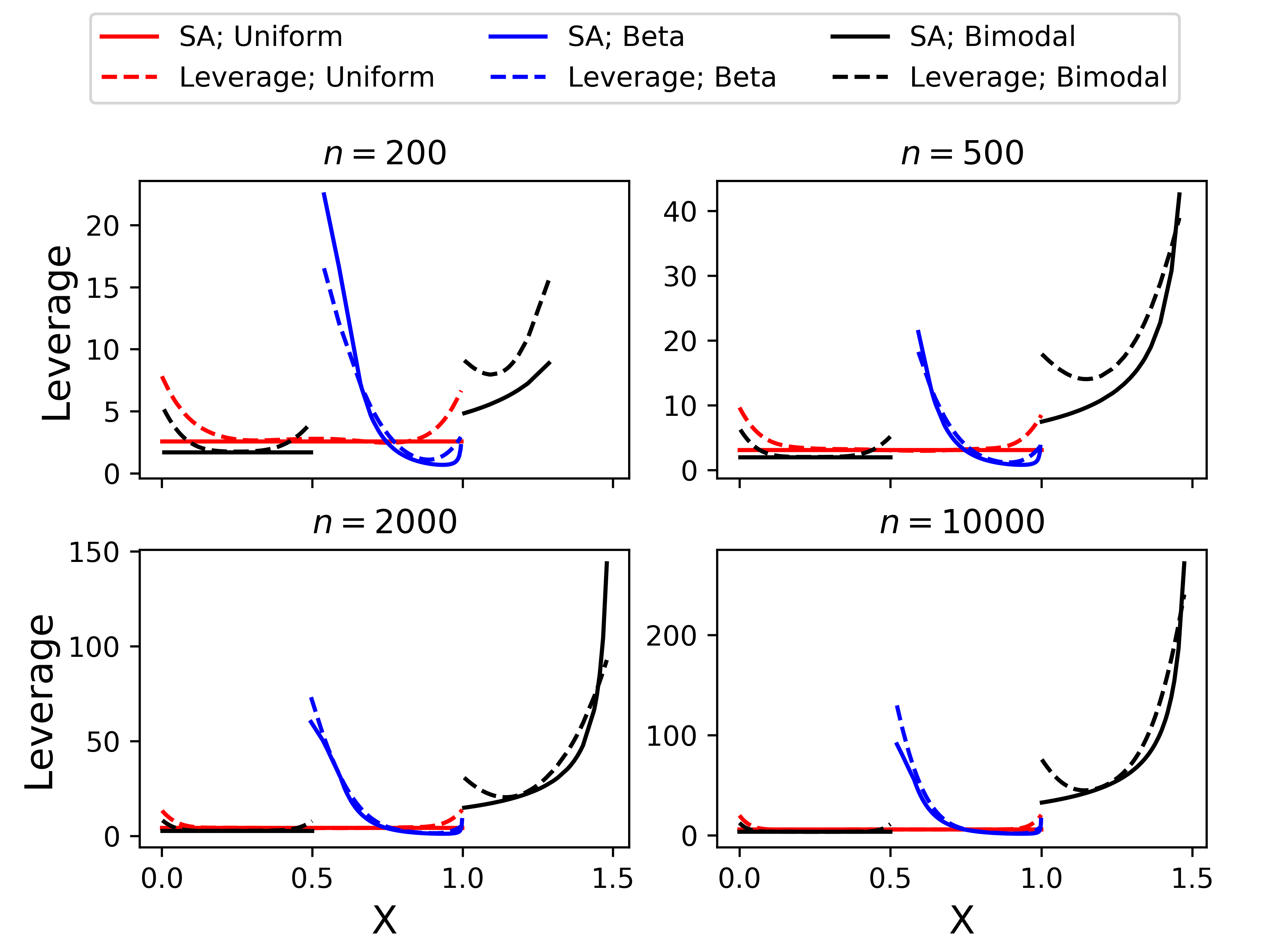}}
\vspace{.3in}
\caption{Statistical Leverage Score Approximation}
\label{lvg}
\end{figure}

In Figure~\ref{lvg}, dotted curves correspond to the rescaled leverage scores, while solid curves correspond to the equivalent kernel approximations.
We can observe our method provides good approximations to the rescaled leverage scores across all settings. 
In particular, Unif$[0,1]$ is the easiest case (red curves) due to its flat density, which meets Assumption~\ref{a.1} and~\ref{a.2} for almost all design points;
while for points with low density, such as those in the smaller cluster of the bimodal distribution and close to the boundary of Beta$(15, 2)$, the absolute error tends to be large, due to the leading constant $C_{x_i}$ in the error bound in Theorem~\ref{thm:main_result}.
Moreover, the relative approximation error has a clear tendency to decrease as the sample size increases, which is consistent with our theory.

We also quantitatively study the accuracy of the leverage scores obtained by different methods in the last section.
Each algorithm is tested on \texttt{RadiusQueriesCount} \citep{savva2018explaining, anagnostopoulos2018scalable}(denoted by RQC), 
\texttt{HTRU2} \citep{lyon2016fifty}, 
and \texttt{CCPP} \citep{tufekci2014prediction, tufekci2012local}, the datasets downloaded from the UCI ML Repository \citep{Dua:2019}. 
Those datasets contain 10000, 17898, and 9568 data points respectively,
which are at the limit of our computational feasibility 
(it requires $\m O(n^3)$ time and $\m O(n^2)$ space to exactly compute leverage scores).
We begin by normalizing the datasets before constructing the kernel matrix using \matern kernel ($\nu=0.5$).
Each method is then used to approximate the leverage scores $\{\tilde \ell_i\}_1^n$.
The sampling probability $\tilde q_i$ is obtained as $\tilde \ell_i / (\sum_1^n \tilde \ell_i)$ 
(also denoting $q_i = \ell_i / (\sum_1^n \ell_i)$).
The accuracy of each method is measured by the average of the ratios 
$\{r_i \defeq \tilde q_i / q_i \}_1^n$ (R-ACC).
The complete setting for this experiment can be found in Section~\ref{sec:exp_details} in the appendix.

In Table~\ref{tab:lvg_approx} we report the runtime, mean R-ACC $\bar r$, and the 5th / 95th quantile of R-ACC, averaged over 10 replicates.
We notice that, regarding the leverage approximation, our method provides the most accurate leverage approximation (in terms of mean R-ACC), and is more efficient than other methods on the benchmark datasets, 
which matches the complexity analysis.

\subsection{Additional empirical results}

In Section~\ref{Sec:hbe} in the appendix, we further provide some empirical results to compare different approximation methods for increasing input dimension $d$.
In short, under the certain setting the prediction accuracy of all the methods will greatly deteriorate due to the curse of dimensionality,
and the classical \nystrom method with uniform sampling will be preferred as leverage-based sampling cannot bring many benefits to the statistical performance.

\section{Conclusion and future work}

We propose a new method to estimate the leverage scores in kernel ridge regression for fast \nystrom approximation when a stationary kernel is used.
Theoretical results are also provided to guarantee the high accuracy of our estimation.
In particular, we show that under the mild conditions the leverage scores induced by a \matern empirical kernel matrix can be estimated in $\wt{\m O}(n)$ time, where $n$ is the size of input samples.

A direct further development of our current work is the extension of our theory to other stationary kernels, such as Gaussian kernels and exponential kernels.
Other related questions include the performance guarantees when the new leverage estimation method is applied to kernel methods for other machine learning problems, for example, kernel $k$-means and kernel PCA.
It will also be interesting to follow the heuristic procedure in our method to analyze other kernel models involving linear smoothers,
and seek the possibility to accelerate those models.
The results in this work also shed new light on the relevance of the ``equivalent kernel" \citep{yang2017frequentist} and the regularized Christoffel function \citep{pauwels2018relating} mentioned in Section~\ref{Sec:contribution}.

\acks{This work is supported by NSF grant DMS-1810831.}

\bibliography{fast_lvg_ref}

\appendix

The outline of the appendix is stated as follows.
First, some useful results along with some proofs of the preliminaries and the main results are provided in Section~\ref{sec:main_results}.
In Section~\ref{sec:exp_details} we report the detailed settings of the experiments in the main paper.
In Section~\ref{sec:ek_ppt}, we further analyze the properties of our leverage score approximation, which mirror the behavior of the true statistical leverage scores.
To numerically accelerate the computation in the multivariate case, we simplify the multiple integral for obtaining the leverage score approximation to a single integral in Section~\ref{sec:polar}.
Besides, we show the error caused by density estimation is negligible compared with the total error in approximating statistical leverage scores in Section~\ref{sec:density_estimation}.
Finally, we provide some technical facts regarding multivariate integration in Section~\ref{Sec:int_by_parts}.

\section{USEFUL FACTS}
\label{sec:main_results}

\subsection{Fourier Transform}
Use $L_p(\mb R^d) = \big\{f:\, \mb R^d \to \mb R,\, \int_{\mb R^d} \big|f(x)\big|^p\,\dd x<\infty \big\}$ to denote the space of all $L_p$ integrable functions over $\mb R^d$ for $p\geq 1$.
For any function $f \in L_1(\mb R^d)$, we use $\ms F[f]$ and $\ms F^{-1}[f]$ to denote its Fourier transform and its inverse Fourier transform, which is given by
\begin{align*}
\ms F[f](s) &= \int_{\mb R^d} f(x)\,e^{-2\pi \sqrt{-1} \langle x, s \rangle}\,\dd x, \quad \mbox{for all $s \in \mb R^d$}, \\
 \ms F^{-1}[f](x) &= \int_{\mb R^d} f(s)\, e^{2\pi \sqrt{-1} \langle x, s \rangle}\,\dd s,~~ \quad \mbox{for all $x \in \mb R^d$}.
\end{align*}

A useful property of the Fourier transform is the Parseval's identity.
\begin{theorem}[Parseval's identity]\label{thm:Parseval}
For any $f\in L_2(\mb R^d)$, the following identity holds
\begin{align*}
\int_{\mb R^d} \big|f(x)\big|^2\,\dd x = \int_{\mb R^d} \big|\ms F[f](s)\big|^2\,\dd s.
\end{align*}
\end{theorem}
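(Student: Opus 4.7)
The plan is to establish Parseval's identity in two stages: first prove it on the dense Schwartz subspace $\mathcal{S}(\mb R^d) \subset L_2(\mb R^d)$ where all Fourier manipulations are unambiguously legal, and then extend to arbitrary $f \in L_2(\mb R^d)$ by a density/continuity argument. This two-step approach sidesteps the fact that for a generic $f \in L_2(\mb R^d)$ the defining integral $\int f(x)\,e^{-2\pi\sqrt{-1}\langle x, s\rangle}\,\dd x$ need not converge absolutely, so $\ms F[f]$ must initially be understood as a limit of approximants.

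For the core identity on $\mathcal{S}(\mb R^d)$, I would use the convolution trick. Define the involution $\tilde f(x) \defeq \overline{f(-x)}$ and verify via the substitution $y = -x$ in the defining integral that $\ms F[\tilde f](s) = \overline{\ms F[f](s)}$. Form the convolution $h \defeq f * \tilde f$, which remains Schwartz. Two independent computations of $h(0)$ then yield the identity. Directly from the definition of convolution,
\[
h(0) \,=\, \int_{\mb R^d} f(y)\,\tilde f(-y)\,\dd y \,=\, \int_{\mb R^d} \big|f(y)\big|^2\,\dd y.
\]
On the other hand, the convolution theorem gives $\ms F[h] = \ms F[f]\cdot \ms F[\tilde f] = \big|\ms F[f]\big|^2$, so applying the Fourier inversion formula at $x=0$ yields
\[
h(0) \,=\, \int_{\mb R^d} \ms F[h](s)\,\dd s \,=\, \int_{\mb R^d} \big|\ms F[f](s)\big|^2\,\dd s.
\]
Equating these two expressions produces Parseval's identity on the Schwartz class.

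The final step is extension by density. Since $\mathcal{S}(\mb R^d)$ is dense in $L_2(\mb R^d)$ and the identity just proved shows that the Fourier transform restricted to $\mathcal{S}(\mb R^d)$ is a linear isometry with respect to the $L_2$-norm, it extends uniquely to a bounded linear map on all of $L_2(\mb R^d)$, and this extension inherits the norm-preservation property. Concretely, for $f \in L_2(\mb R^d)$ one picks Schwartz approximants $f_n \to f$ in $L_2$, defines $\ms F[f] \defeq \lim_n \ms F[f_n]$ in $L_2$, and passes to the limit in $\|f_n\|_{L_2} = \|\ms F[f_n]\|_{L_2}$.

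The main obstacle is justifying the pointwise inversion step $h(0) = \int \ms F[h](s)\,\dd s$, which requires both $\ms F[h] \in L_1(\mb R^d)$ and pointwise equality of $h$ with its inverse transform. Both are automatic inside $\mathcal{S}(\mb R^d)$ because the Fourier transform maps $\mathcal{S}(\mb R^d)$ bijectively onto itself, but attempting the same computation directly at $L_2$ regularity would require delicate truncation or Gaussian-regularization arguments. Restricting the core identity to Schwartz functions and then invoking density is precisely what keeps the proof clean.
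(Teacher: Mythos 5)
Your proof is correct, but there is nothing in the paper to compare it to: the paper states Parseval's identity in the appendix as a classical fact and never proves it, using it only as an ingredient (via Lemma~\ref{lem:RKHS_norm} and the Sobolev-norm computations). What you have written is the standard textbook proof of the Plancherel theorem, and each step is sound: the involution identity $\ms F[\tilde f]=\overline{\ms F[f]}$ follows from the substitution $y=-x$; $h=f*\tilde f$ stays in $\mathcal{S}(\mb R^d)$, so the convolution theorem and pointwise Fourier inversion at $0$ are both legitimate, giving $\int |f|^2 = h(0) = \int |\ms F[f]|^2$ on the Schwartz class; and the isometric extension by density of $\mathcal{S}(\mb R^d)$ in $L_2(\mb R^d)$ is the correct way to make sense of the statement for general $f\in L_2(\mb R^d)$, where the integral defining $\ms F[f]$ need not converge absolutely. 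You also correctly flag the one subtlety the paper glosses over, namely that for $f\in L_2$ the transform must be read as the $L_2$-limit of transforms of approximants (and, for $f\in L_1\cap L_2$, one should note that this extension agrees with the integral definition the paper uses, which follows from dominated convergence along a suitable approximating sequence). In short: the argument is complete modulo standard facts (convolution theorem and inversion on $\mathcal{S}$), and it supplies a proof where the paper simply cites one.
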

Besides, Fourier transform is closely related to kernels.
For any PSD stationary kernel $K:\,\mb R^d\times\mb R^d\to \mb R$, 
by Brochner's Theorem, it would be a Fourier transform of a Borel measure.
For simplicity we abuse the notation by using $K(x)$ to mean $K(y,\, y+x), \, \forall y \in \mb R^d$, since $K(y,\, y+x)$ does not depend on the specific choice of $y$.
Specifically, the \matern kernel $K_{\alpha}$ with smoothness parameter $\nu = \alpha - d/2>0$ can be equivalently defined through its Fourier transform~\citep[p. 84]{rasmussen2003gaussian} as
% ------------------------------------- 
\begin{align*}
m_\alpha(s):\,= \ms F[K_\alpha](s) = \int_{\mb R^d} K_\alpha(x)\, e^{-2\pi \sqrt{-1} \langle x, s \rangle}\,\dd x = C_\alpha(1+D_\alpha \|s\|^2)^{-\alpha}, \quad\forall s\in\mb R^d,
\end{align*}
% -------------------------------------
where $C_\alpha$ and $D_\alpha$ are some constants only dependent on $\alpha$. 
(To simplify the statement of the theory, we would simply take $C_\alpha=D_\alpha=1$ when later discussing the asymptotic properties.)
Throughout this appendix, we focus on the case in which the \matern kernel is used, 
since its theoretical properties have been well studied, and the proof is easy to be extended to other stationary kernels.

\subsection{RKHS Associated with the \matern Kernel—Proof of Theorem~\ref{Eqn:RKHS_Fourier} in the Main Paper}
\label{sec:thm2}

The following theorem characterizes the RKHS $\mb H_\alpha$ associated with the \matern kernel through its Fourier transform. The main body of the proof comes from the slides of \citet{fukumizu2008elements}.
\begin{lemma}[Fourier representation of RKHS]\label{lem:RKHS_norm}
For any $f, g \in \mb H_\alpha$, we have
\begin{align*}
\|f\|_{\mb H_\alpha}^2 = \int_{\mb R^d} \frac{\big|\ms F[f](s)\big|^2}{m_\alpha(s)}\, \dd s, \quad\mx{and}\quad  \langle f,\, g\rangle_{\mb H_\alpha} = \int_{\mb R^d} \frac{\ms F[f](s)\cdot \overline{\ms F[g](s)}}{m_\alpha(s)}\, \dd s
\end{align*}
\end{lemma}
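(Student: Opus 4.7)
The plan is to identify $\mb H_\alpha$ with a candidate Hilbert space constructed directly from the Fourier side, and then invoke uniqueness of the reproducing kernel (Moore--Aronszajn) to conclude the two inner products coincide. Concretely, define
\begin{equation*}
\tilde{\mb H}_\alpha \defeq \Big\{ f \in L_2(\mb R^d) :\, \int_{\mb R^d} \frac{|\ms F[f](s)|^2}{m_\alpha(s)}\,\dd s < \infty\Big\},
\end{equation*}
equipped with $\langle f,g\rangle_\sim \defeq \int \ms F[f](s)\,\overline{\ms F[g](s)}/m_\alpha(s)\,\dd s$. Since $m_\alpha(s) = C_\alpha(1+D_\alpha\|s\|^2)^{-\alpha}$ is strictly positive, the map $f \mapsto \ms F[f]/\sqrt{m_\alpha}$ is a unitary isomorphism from $\tilde{\mb H}_\alpha$ onto its image in $L_2(\mb R^d)$, so $\tilde{\mb H}_\alpha$ is a Hilbert space.

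Next I would verify that $K_\alpha(\cdot, x) = K_\alpha(\cdot - x)$ belongs to $\tilde{\mb H}_\alpha$ and acts as a reproducing kernel for $\langle\cdot,\cdot\rangle_\sim$. Using the translation property of the Fourier transform together with the definition of $m_\alpha$,
\begin{equation*}
\ms F[K_\alpha(\cdot - x)](s) = e^{-2\pi\sqrt{-1}\langle x,s\rangle}\, m_\alpha(s),
\end{equation*}
so the integrand in $\|K_\alpha(\cdot,x)\|_\sim^2$ reduces to $m_\alpha(s)$, which is integrable because $m_\alpha \in L_1(\mb R^d)$ (Bochner gives $\int m_\alpha = K_\alpha(0) < \infty$). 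For $f \in \tilde{\mb H}_\alpha$, Cauchy--Schwarz yields $\ms F[f] \in L_1$ (write $\ms F[f] = (\ms F[f]/\sqrt{m_\alpha})\cdot\sqrt{m_\alpha}$ and bound each factor in $L_2$), so Fourier inversion is valid pointwise and
\begin{equation*}
\langle f, K_\alpha(\cdot,x)\rangle_\sim = \int_{\mb R^d} \frac{\ms F[f](s)\, e^{2\pi\sqrt{-1}\langle x,s\rangle} m_\alpha(s)}{m_\alpha(s)}\,\dd s = \ms F^{-1}[\ms F[f]](x) = f(x).
\end{equation*}
Thus $\tilde{\mb H}_\alpha$ is a Hilbert space of functions on $\mb R^d$ whose reproducing kernel is $K_\alpha$. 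By the Moore--Aronszajn uniqueness of the RKHS attached to a PSD kernel, $\tilde{\mb H}_\alpha$ coincides with $\mb H_\alpha$ both as sets and isometrically, delivering both identities in the lemma.

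The main obstacle I anticipate is justifying that $\ms F[f]$ is well defined in the classical sense for every $f \in \tilde{\mb H}_\alpha$: a priori such $f$ need only be in $L_2$, so $\ms F[f]$ exists only as the Plancherel extension. The fix is to interpret the defining integral as a Lebesgue integral of the Plancherel transform against $1/m_\alpha$ and then observe, as above, that the weighted integrability condition actually upgrades $\ms F[f]$ to an $L_1 \cap L_2$ function, at which point the pointwise Fourier inversion used in the reproducing identity becomes rigorous. A secondary detail is checking that the candidate inner product is conjugate symmetric and positive definite, which follows immediately from the isometry with $L_2$ via $f \mapsto \ms F[f]/\sqrt{m_\alpha}$.
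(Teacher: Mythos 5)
Your proof is correct and follows essentially the same strategy as the paper's: construct a candidate Hilbert space on the Fourier side via the spectral density weight $1/m_\alpha$, verify that $K_\alpha$ is its reproducing kernel through the translation property and Fourier inversion, and invoke Moore--Aronszajn uniqueness to identify it with $\mb H_\alpha$. You work directly with $\ms F[f]$ rather than parametrizing by an auxiliary $F \in L^2(m_\alpha\,\dd s)$ and extracting $F = \ms F[f]/m_\alpha$ at the end as the paper does, and you are somewhat more explicit about upgrading $\ms F[f]$ from the Plancherel extension to a genuine $L_1 \cap L_2$ function so that pointwise inversion is justified; one detail you leave implicit is that the unitary map $f \mapsto \ms F[f]/\sqrt{m_\alpha}$ is in fact surjective onto all of $L_2(\mb R^d)$ (so that $\tilde{\mb H}_\alpha$ is complete), which is easy to check since $\sqrt{m_\alpha}$ is bounded.
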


% There might be another way (\cite{fukumizu_elements_nodate}), proving the RKHS with this norm is exactly the same as the original \matern RKHS since they share the same reproducing kernel.

\begin{proof}

Consider a measure space $(\mathbb{R^d}, \mathcal{B}, \mu$), where $d \mu = m_\alpha(s) ds$, 
and $m_\alpha(s)$ is the spectral density of the invariant \matern kernel with smoothness parameter $\nu = \alpha - \frac{d}{2}$. 
By referring to Section~\ref{sec:heuristic} in our main paper, we can check the function $m_\alpha(s) \in L^\infty$ is differentiable and positive everywhere.
Based on the measure $\mu$, we define a function space $\mathbb{G} = L^2(\mathbb{R}^d, \mu) \equiv {\{F: \mb R^d \rightarrow \mathbb{C}; \int_{\mb R^d} |F|^2 d \mu < \infty \}}$, 
with the inner product $\langle F, G \rangle_\mathbb{G} \defeq \int F \overline{G} d \mu$. 
Here $\mathbb{C}$ is the set of all complex numbers.
We can observe the form is quite similar to the frequency domain in Fourier Transform. 
To construct the RKHS of interest, we further define a function $H(s; x) = \exp(-2 \pi \sqrt{-1} \dotp{x}{s})$ and a map $\ms M(\cdot): L^2(\mb R^d, \mu) \rightarrow \mb T$, similar to inverse Fourier Transform, given by
% -------------------------------------
\begin{align*}
\ms M(F)(x) \defeq \int F(s) \overline{H(s; x)} d \mu
= \int F(s) \exp(2 \pi \sqrt{-1} \dotp{x}{s}) \dd \mu, \quad \forall x \in \mb R^d
\end{align*}
% -------------------------------------
where $\mb T$ is the space of all functions over $\mb R^d$ with the pointwise-convergence topology, i.e. $f_n \rightarrow f \Leftrightarrow f_n(x) \rightarrow f(x), \forall x \in \mathbb{R}^d$.

Now we are able to define a new function space $\mathbb{H} \defeq \{f \in \mb T; \,  \exists F \in L^2(\mathbb{R}^d, \mu), f = \ms M(F)\}$,
and equip it with the inner product $\langle f, g\rangle_\mathbb{H} \defeq \langle F, G \rangle_\mathbb{G}$, where $F, G$ satisfy $f = \ms M(F), g = \ms M(G)$. 
We first need to show $\mathbb{H}$ is an RKHS. 
We can check for all $f \in \mathbb{H}$,
\begin{align*}
f(x) = \langle F, H(\cdot; x) \rangle_\mathbb{G} = \langle f, \ms M(H(\cdot; x)) \rangle_\mathbb{H}
\end{align*}
Also, the reproducing kernel of $\mathbb{H}$ would be:
\begin{align*}
K(x, y) &= \langle \ms M(H(\cdot; x)), \ms M(H(\cdot; y)) \rangle_\mathbb{H} = \langle H(\cdot; x), H(\cdot; y)\rangle_\mathbb{G} \\
&= \int \exp(-2 \pi \sqrt{-1} \dotp{x}{s}) \exp(2 \pi \sqrt{-1} \dotp{y}{s}) d \mu \\
&= \int \exp(2 \pi \sqrt{-1} \dotp{y - x}{s}) m_\alpha(s) ds \\
&= K_\alpha(y - x)
\end{align*}
where $K_\alpha$ is the invariant \matern kernel with smoothness parameter $\alpha$.
We can confirm $\mathbb{H}$ is exactly the RKHS induced by a \matern kernel $K_\alpha$.

To complete the proof, we still need to find the form of $\langle f, g \rangle_\mathbb{H}$. 
Note by the facts $m_\alpha(s) \in L^\infty$ and $F \in L^2$, we can infer $F(s) m_\alpha(s) \in L^2$ as $\int (F(s) m_\alpha(s))^2 \dd s \leq \|m_\alpha\|_\infty^2 \int F^2(s) \dd s$.
Using the Fourier isometry of $L^2$~\citep[Theorem~7.61]{adams2003sobolev}, we obtain ${F(s) m_\alpha(s) = \ms F[f](s)}$; i.e., $F(s) = \ms F[f](s) / m_\alpha(s)$. 
Finally, by the definition of the inner product, for $f = \ms M(F)$ and $g = \ms M(G)$,
\begin{align*}
\langle f, g\rangle_\mathbb{H} &= \langle F, G\rangle_\mathbb{G} = \int \frac{\ms F[f](s)}{m_\alpha(s)} \overline{\frac{\ms F[g](s)}{m_\alpha(s)}} m_\alpha(s) ds \\
&= \int \frac{\ms F[f](s) \overline{\ms F[g](s)}}{m_\alpha(s)} ds
\end{align*}
in which the third equality relies on the fact the $m_\alpha(s)$ is real, and its conjugate is the same as itself.
\end{proof}

\subsection{Embedding Inequalities}
\label{Sec:embedding}

Let $\alpha$ be an integer. Following the notation of the book~\citep{adams2003sobolev}, for any $p\geq 1$ and subset $\Omega \subset \mb R^d$, we use the notation $W^{\alpha, p}(\Omega)$ to denote the Sobolev space as a set of functions $u$ in $L_p(\Omega)$ such that $u$ and its weak derivatives up to total order $\alpha$ have a finite $L_p$ norm. 
With this definition, the Sobolev space admits a norm and a seminorm
% -------------------------------------
\begin{align}\label{eqn:S_norm}
\|u\|_{\alpha,p,\Omega} &= \bigg(\sum_{|\mathbf k| \leq \alpha} \|D^{\mathbf k} u\|_{p,\Omega}^p\bigg)^{\frac{1}{p}}
= \bigg(\sum_{|\mathbf k| \leq \alpha} \int_{\Omega} \big|D^{\mathbf k} u(t)\big|^p\,\dd t\bigg)^{\frac{1}{p}}, \\
|u|_{\alpha, p, \Omega} &= \bigg(\sum_{|{\mathbf k}| = \alpha} \int_{\Omega} \big|D^{\mathbf k} u(t)\big|^p\,\dd t\bigg)^{\frac{1}{p}}.
\end{align}
% -------------------------------------
where ${\mathbf k} = (k_1, \cdots, k_d)$ is a multi-index, and $D^{\mathbf k} u = \frac{\partial^{|{\mathbf k}|} u}{\partial^{k_1} x_1 \cdots \partial^{k_d} x_d}$.
To more precisely describe the mixed derivative, we additionally define some notations here for future use:
\begin{itemize}
\item $\mathbf A$ is a subset of $[d]$, $- \mathbf A \defeq ([d] - \mathbf A)$,
\item $u = \mathbf I_{\mathbf A} \in \{0, 1\}^d$ is a vector s.t. $u_i = 1, \forall i \in \mathbf A; u_i = 0, \forall i \in - \mathbf A$,
\item a vector $u = \mu(\mathbf A_{-1}, \mathbf A_{+1}) \in \{-1, 0, 1\}^d$ satisfies $u_i = -1, \forall i \in \mathbf A_{-1}; u_i = 1, \forall i \in \mathbf A_{+1}$,
\item $x_{\mathbf A} \defeq (x_i)_{i \in \mathbf A}$, 
and $g(x_{\mathbf A}; x_{- \mathbf A})$ represents $g(u)$ where $u_{\mathbf A} = x_{ \mathbf A}$ are the variables and $u_{-\mathbf A} = x_{- \mathbf A}$ are taken as the parameters fixed in the integration,
\item $C(y, \delta) \defeq \{(x_1, x_2, \dots, x_d); x_i \in [y_i-\delta, y_i+\delta], \forall i \in [d]\}$ is a cube centered at $y$,
\item $C^{\mathbf A}(y, \delta) = C(y_{\mathbf A}, \delta)$ is the marginal cube of $C(y, \delta)$ defined as $\{x_{\mathbf A}; x_i \in (y_i-\delta, y_i+\delta), \forall i \in \mathbf A\}$.
\end{itemize}

We will primarily work with the case $p=2$ and $\Omega$ being a connected domain. 
For any $u \in \mb H$, by using the fact that $\ms F[D^{\mathbf k} u](s) = (\prod_{i=1}^d (2\pi \sqrt{-1} s_i)^{k_i}) \cdot \ms F[u](s)$ and the Parseval's identity in Theorem~\ref{thm:Parseval}, we obtain
\begin{align*}
\|u\|_{\alpha,2,\mb R^d}^2 = \int_{\mb R^d} \sum_{|{\mathbf k}| \leq \alpha} \big| \prod_{i=1}^d (2\pi \sqrt{-1} s_i)^{k_i}\,  \ms F[u](s)\big|^2\,\dd s.
\end{align*}
Since there exist constants $(C_1,C_2)$ such that $C_1 (1 + \|s\|^2)^\alpha \leq \big (\sum_{|{\mathbf k}| \leq \alpha} \prod_{i=1}^d s_i^{k_i} \big )^2 \leq C_2 (1 + \|s\|^2)^\alpha$ holds for all $s \in \mb R^d$, by Lemma~\ref{lem:RKHS_norm} we can further deduce that
% -------------------------------------
\begin{align}\label{Eqn:norm_equv}
C_1 \|u\|_{\mb H_\alpha}^2 \leq \|u\|_{\alpha,2,\mb R^d}^2 \leq C_2 \|u\|_{\mb H_\alpha}^2
\end{align}
% -------------------------------------
holds for any $u\in\mb H_\alpha$, the RKHS associated with the \matern kernel with smoothness index $\nu (= \alpha - d/2)$.

We first invoke the following special case of interpolation theorem of Sobolev space $W^{\alpha, p}(\Omega)$ \citep[Theorem~5.12]{adams2003sobolev}.
% {\color{red} (To be decided it is a ball or cube.)}

\begin{theorem}[Interpolation inequality]\label{thm:Sobolev_int}
For any integer $0 \leq k \leq \alpha$, there exist two constants $(c_0, K)$ only depending on $\alpha$, such that for any $u \in W^{\alpha,2}(\Omega)$ and any $\varepsilon \in (0,c_0)$,
\begin{align*}
|u|_{k, 2, \Omega} \leq K \big(\varepsilon^{\alpha-k}\, |u|_{\alpha, 2, \Omega} + \varepsilon^{-k}\|u\|_{2, \Omega}\big),
\end{align*}
where for any function $g$, $\|g\|^2_{2,\Omega} = \int_\Omega g^2(t)\,\dd t$.
\end{theorem}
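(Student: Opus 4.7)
My plan is to reduce the estimate to the whole space $\mb R^d$ via a bounded Sobolev extension, and then close the inequality on $\mb R^d$ through Fourier analysis followed by a one-line scalar dichotomy. Specifically, assuming $\Omega$ is regular enough to admit a bounded extension operator $E:\,W^{\alpha,2}(\Omega)\to W^{\alpha,2}(\mb R^d)$ with operator norm $M_\alpha$ depending only on $\alpha$ and $\Omega$, I would first replace $u$ by $Eu$. Since $|u|_{k,2,\Omega}\leq |Eu|_{k,2,\mb R^d}$, while $|Eu|_{\alpha,2,\mb R^d}$ and $\|Eu\|_{2,\mb R^d}$ are both controlled by $M_\alpha$ times the corresponding norms of $u$, it suffices to establish the claimed inequality on the whole space and absorb $M_\alpha$ into the final constant $K$.

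On $\mb R^d$, applying Parseval's identity to each weak derivative $D^{\mathbf k}u$ and using the equivalence $c_1\|s\|^{2j}\leq \sum_{|\mathbf k|=j}\prod_i(2\pi s_i)^{2k_i}\leq c_2\|s\|^{2j}$ already employed in the derivation of~\eqref{Eqn:norm_equv} (applied for $j\in\{k,\alpha\}$), the squared seminorms take the form
\begin{align*}
|u|_{j,2,\mb R^d}^2\;\asymp\;\int_{\mb R^d}\|s\|^{2j}\,\big|\ms F[u](s)\big|^2\,\dd s,
\end{align*}
up to constants depending only on $\alpha$ and $d$. The inequality then reduces to the pointwise scalar bound
\begin{align*}
\|s\|^{2k}\;\leq\;\varepsilon^{2(\alpha-k)}\|s\|^{2\alpha}+\varepsilon^{-2k},\qquad\forall\,s\in\mb R^d,\,\varepsilon>0,
\end{align*}
which I verify by dichotomy: if $\|s\|\leq 1/\varepsilon$ then $\|s\|^{2k}\leq\varepsilon^{-2k}$ directly, whereas if $\|s\|>1/\varepsilon$ then $\|s\|^{2k}=\|s\|^{2\alpha}/\|s\|^{2(\alpha-k)}<\varepsilon^{2(\alpha-k)}\|s\|^{2\alpha}$. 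Multiplying this pointwise inequality by $|\ms F[u](s)|^2$ and integrating over $\mb R^d$ yields the squared version of the claim, and taking square roots (losing at most a $\sqrt 2$ factor) gives the stated inequality for any $\varepsilon\in(0,c_0)$ with $c_0$ only depending on the equivalence constants $c_1,c_2$.

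\textbf{Expected obstacle.} The only genuinely delicate point is the extension step when $\Omega$ has rough boundary. If no bounded Sobolev extension is available, I would instead work intrinsically by localization: cover $\Omega$ by a locally finite family of balls, apply a smooth partition of unity $\{\chi_j\}$ to $u$, and run the $\mb R^d$-argument on each $\chi_j u$ after extending by zero. The bookkeeping to keep straight is that the Leibniz rule contributes derivatives of the $\chi_j$'s, producing only $\varepsilon$-independent constants depending on $\alpha$, so the $\varepsilon$-scaling of both sides of the bound is preserved after summing across the cover. An elementary alternative that avoids extension altogether is to establish the base case $k=\alpha-1$ by the integration-by-parts identity $\int(D^{\mathbf k}u)^2=-\int (D^{\mathbf k-\mathbf e_i}u)(D^{\mathbf k+\mathbf e_i}u)$ combined with the weighted Cauchy--Schwarz inequality $|ab|\leq\tfrac12(\delta a^2+\delta^{-1}b^2)$ (setting $\delta=\varepsilon^2$), and then iterate from $k=\alpha-1$ downwards; this is more tedious to bookkeep across all mixed partials but avoids any regularity assumption on $\partial\Omega$.
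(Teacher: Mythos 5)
The paper supplies no proof here — it simply cites Adams [Theorem 5.12] — so you are filling a genuine gap, but your primary route has a concrete flaw in the extension step. The $\mb R^d$ part is fine: Parseval plus the multinomial equivalence $|v|_{j,2,\mb R^d}^2\asymp\int\|s\|^{2j}\big|\ms F[v](s)\big|^2\,\dd s$ reduces everything to your pointwise dichotomy, which holds for all $\varepsilon>0$. The trouble is the claim that a bounded extension operator $E$ controls the top-order \emph{seminorm}, i.e., $|Eu|_{\alpha,2,\mb R^d}\lesssim|u|_{\alpha,2,\Omega}$. For bounded $\Omega$ this is false: take $u\equiv 1$, so $|u|_{\alpha,2,\Omega}=0$, yet any $Eu\in L^2(\mb R^d)$ — and $L^2$-membership is needed for your Parseval step — must decay at infinity and therefore has nonzero derivatives, hence $|Eu|_{\alpha,2,\mb R^d}>0$. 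A total (Stein-type) extension gives only $\|Eu\|_{\alpha,2,\mb R^d}\lesssim\|u\|_{\alpha,2,\Omega}$. Feeding that into your $\mb R^d$ estimate leaves the full norm $\|u\|_{\alpha,2,\Omega}$ (all derivative orders) on the right in place of the wanted seminorm $|u|_{\alpha,2,\Omega}$, and you must then either invoke the Ehrling-type equivalence $\|u\|_{\alpha,2,\Omega}\asymp|u|_{\alpha,2,\Omega}+\|u\|_{2,\Omega}$ on extension domains (a separate nontrivial lemma, typically via Rellich compactness), or sum the inequality over $1\leq k\leq\alpha-1$ and absorb the intermediate seminorms that reappear on the right — and that absorption is precisely why the statement carries the restriction $\varepsilon<c_0$. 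Your partition-of-unity fallback has the same issue: Leibniz on $\chi_j u$ reintroduces lower-order derivatives, and the assertion that the $\varepsilon$-scaling survives ``for free'' is not justified without an absorption argument.

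The integration-by-parts alternative you sketch at the very end is the more robust route and closer to what Adams actually does. It is not quite assumption-free as you claim — on a bounded $\Omega$ the identity $\int|D^{\mathbf k}u|^2=-\int D^{\mathbf k-\mathbf e_i}u\,D^{\mathbf k+\mathbf e_i}u$ picks up boundary terms that must be managed (e.g., via density of $C_c^\infty$ after extension, or via cutoffs plus the same absorption) — but it avoids the false seminorm-preservation claim altogether, and iterating the weighted Cauchy--Schwarz step with weight $\delta=\varepsilon^2$ from $k=\alpha-1$ downward does close the argument. Carry that version out and the proof is complete.
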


We will also use the following generalization of Gagliardo–Nirenberg interpolation inequalities to bound the sup-norm, which can be viewed as an extension of the above interpolation inequality to the sup-norm. 
% \iffalse
% -------------------------------------
\begin{theorem}[Sup-norm interpolation inequality]\label{thm:sup_int}
There exist two universal constants $(c_1, K_1)$, such that for any $u \in W^{\alpha, 2}(\Omega)$ and any $\varepsilon \in (0, c_1)$, 
\begin{align*}
\|u\|_{\infty, (1-\varepsilon^2) \Omega}:\,=\sup_{t \in (1-\varepsilon^2) \Omega}|u(t)| \leq K_1\big(\varepsilon^{-d}\, \|u\|_{2, \Omega} + \varepsilon^{2\alpha - d}\, |u|_{\alpha, 2, \Omega}\big).
\end{align*}
\end{theorem}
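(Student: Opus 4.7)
The plan is to reduce the statement to a standard Sobolev-type bound on a fixed reference ball and then transport it to an arbitrary point $x_0 \in (1-\varepsilon^2)\Omega$ via a dilation of scale $\varepsilon^2$. The key observation that drives the whole argument is that the exponents $-d$ and $2\alpha-d$ in the conclusion are exactly the scaling exponents of $\|\cdot\|_{2}$ and $|\cdot|_{\alpha,2}$ under an $\varepsilon^2$-dilation.

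For the reference inequality, I would first note that for $\alpha > d/2$ (which is the only regime where a sup-norm bound is possible and which holds for the M\'atern setting of the paper since $\alpha = \nu + d/2$) the classical Sobolev embedding yields $\|v\|_{\infty, B(0,1)} \leq M_0 \|v\|_{\alpha, 2, B(0,1)}$ on the unit ball. Combining this with Theorem~\ref{thm:Sobolev_int} applied on $B(0,1)$ at any fixed $\varepsilon_0 \in (0, c_0)$ to absorb every intermediate seminorm $|v|_{k, 2, B(0,1)}$ for $0 < k < \alpha$ gives a universal constant $M$ with
\begin{equation*}
\|v\|_{\infty, B(0,1)} \leq M\bigl(\|v\|_{2, B(0,1)} + |v|_{\alpha, 2, B(0,1)}\bigr) \qquad \text{for all } v \in W^{\alpha, 2}(B(0,1)).
\end{equation*}

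For the rescaling step, fix $x_0 \in (1-\varepsilon^2)\Omega$ so that $B(x_0, \varepsilon^2) \subset \Omega$, and set $v(y) = u(x_0 + \varepsilon^2 y)$ for $y \in B(0,1)$. A change of variables together with the chain-rule identity $D^{\mathbf k}_y v(y) = \varepsilon^{2|\mathbf k|} (D^{\mathbf k}_x u)(x_0 + \varepsilon^2 y)$ yields the scaling laws
\begin{equation*}
\|v\|_{2, B(0,1)} = \varepsilon^{-d}\|u\|_{2, B(x_0, \varepsilon^2)}, \qquad |v|_{\alpha, 2, B(0,1)} = \varepsilon^{2\alpha-d}|u|_{\alpha, 2, B(x_0, \varepsilon^2)}.
\end{equation*}
Feeding $v$ into the reference inequality, enlarging $B(x_0, \varepsilon^2)$ to $\Omega$ on the right, and using $|u(x_0)| = |v(0)| \leq \|v\|_{\infty, B(0,1)}$ gives
\begin{equation*}
|u(x_0)| \leq M\bigl(\varepsilon^{-d}\|u\|_{2, \Omega} + \varepsilon^{2\alpha - d}|u|_{\alpha, 2, \Omega}\bigr).
\end{equation*}
Taking the supremum over $x_0 \in (1-\varepsilon^2)\Omega$ delivers the conclusion with $K_1 = M$, and $c_1$ is chosen so that the $\varepsilon^2$-neighborhood of every point in $(1-\varepsilon^2)\Omega$ lies inside $\Omega$.

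The main obstacle—more conceptual than technical—is identifying the correct localization scale: the dilation parameter must be $\varepsilon^2$ rather than $\varepsilon$, since the Jacobian contributes $\varepsilon^{-d}$ to the $L^2$-norm while each of the $\alpha$ chain-rule factors contributes an additional $\varepsilon^2$, producing the combined exponent $2\alpha - d$ on the seminorm. This is also what forces the unusual-looking shrinkage $(1-\varepsilon^2)\Omega$, which for a general connected domain should be read as the $\varepsilon^2$-interior $\{x\in\Omega:\mathrm{dist}(x,\partial\Omega)\geq\varepsilon^2\}$—the geometric object the scaling argument actually needs so that each localizing ball $B(x_0,\varepsilon^2)$ stays inside $\Omega$.
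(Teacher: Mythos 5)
Your proof is correct and hinges on the same key idea as the paper's: dilate by $\varepsilon^2$, use the scaling laws $\|v\|_{2,B(0,1)} = \varepsilon^{-d}\|u\|_{2,B(x_0,\varepsilon^2)}$ and $|v|_{\alpha,2,B(0,1)} = \varepsilon^{2\alpha-d}|u|_{\alpha,2,B(x_0,\varepsilon^2)}$, then enlarge and take a supremum over the $\varepsilon^2$-interior. The only real difference is how the reference inequality $\|v\|_{\infty,B(0,1)} \lesssim \|v\|_{2,B(0,1)} + |v|_{\alpha,2,B(0,1)}$ is obtained: the paper invokes the Gagliardo--Nirenberg inequality in the sum form $\|g\|_{\infty,\Omega} \le C(\|g\|_{2,\Omega} + |g|_{\alpha,d/\alpha,\Omega})$ and then upgrades the $L^{d/\alpha}$ seminorm to $L^2$ by H\"older on the bounded domain, whereas you obtain it from the classical Sobolev embedding $\|v\|_{\infty,B(0,1)} \le M_0\|v\|_{\alpha,2,B(0,1)}$ and then absorb the intermediate seminorms via Theorem~\ref{thm:Sobolev_int} at a fixed $\varepsilon_0$. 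Your route is a bit more self-contained, since it reuses a lemma already established in the paper instead of citing Gagliardo--Nirenberg, and it is also slightly more careful: by fixing the reference domain to $B(0,1)$ before rescaling, you sidestep the issue that the paper's intermediate constant $C_0$ depends on $|\Omega|$ and so cannot literally be carried through the dilation unchanged. Your closing remark about reading $(1-\varepsilon^2)\Omega$ as the $\varepsilon^2$-interior for a general connected domain is a useful clarification that the paper leaves implicit.
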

% -------------------------------------
\begin{proof}
% From the Poincar\'{e}-Wirtinger inequality, we have 
From the Gagliardo–Nirenberg interpolation inequalities~\citep{brezis2018gagliardo}, we have 
\begin{align*}
\|g\|_{\infty, \Omega} \leq C (\|g\|_{2, \Omega} + |g|_{\alpha, \frac{d}{\alpha}, \Omega}), \quad\forall \ g \in W^{\alpha,2}(\Omega),
\end{align*}
where $C$ is some universal constant. 
In fact, the last term could be further bounded by $C_0 |g|_{\alpha, 2, \Omega}$
since $W^{\alpha,2} = H_\alpha, \alpha > \frac{d}{2}, \frac{d}{\alpha} < 2$, and the domain $\Omega$ is bounded. 

Now for each fixed point $y_0 \in (1-\varepsilon^2) \Omega$, we can obtain by the preceding display with $g(t) = u(y_0 + \varepsilon^2 t)$ in the above that
\begin{align*}
\|u\|_{\infty, B(y_0, \varepsilon^2)} \leq \varepsilon^{-d}\,\|u\|_{2, B(y_0, \varepsilon^2)} + C_0 \, \varepsilon^{2\alpha - d} \, |u|_{\alpha, 2, B(y_0, \varepsilon^2)}
\leq C \varepsilon^{-d}\,\|u\|_{2,\Omega} + C_0 \, \varepsilon^{2\alpha - d}\, |u|_{\alpha, 2, \Omega},
\end{align*}
where we have used the fact that $B(y_0, \varepsilon^2) \subset \Omega$ for any $y_0 \in (1-\varepsilon^2) \Omega$. 
Finally, the claimed inequality follows by the above derivation and the fact that
% -------------------------------------
\begin{align*}
\|u\|_{\infty, (1-\varepsilon^2) \Omega} = \sup_{y_0 \in (1-\varepsilon^2) \Omega} \|u\|_{\infty, B(y_0, \varepsilon^2)}
\end{align*}
% -------------------------------------
\end{proof}
% \fi

Theorem~\ref{thm:Sobolev_int} and~\ref{thm:sup_int} leads to the following lemma that we will repeatedly use in our proof.
Here we specifically consider a fixed point $x_0$ such that the density function $p(x)$ is uniformly bounded from below by $\frac12 \,p(x_0)>0$ for any $x$ satisfying $\|x - x_0\| < \delta(x_0)$ and some constant $\delta(x_0) > 0$ that may depend on $x_0$. 
Before entering the theorem, we define an RKHS norm $\|\cdot\|_{\lambda}$ as
\begin{align*}
\|f\|_\lambda^2 = \int_{\mb R^d} f^2(x) \, p(x)\,\dd x + h^{2\alpha} \|f\|_{\mb H_\alpha}^2,
\end{align*}
and its localized truncation $\|f\|_{x_0, \lambda}$ as 
\begin{align*}
\|f\|_{x_0,\lambda}^2 \defeq \int_{C(x_0,\delta(x_0))} f^2(x) \, p(x)\,\dd x + h^{2\alpha} \|f\|_{\mb H_\alpha}^2.
\end{align*}

\begin{theorem}[Local interpolation for RKHS]\label{thm:Local_embd}
Suppose {$\delta(x_0) \geq Ch \log (1/h)$} for some constant $C>0$.
Let $C(x_0,\delta(x_0))$ denote a cube centered at $x_0$ with edge length $2 \delta(x_0)$. 
If $C \log(1/h) > c_0^{-1}$ and $\sqrt{C \log(1/h)} > c_1^{-1}$, where $(c_0,c_1)$ are the constants in Theorems~\ref{thm:Sobolev_int} and~\ref{thm:sup_int}, then there exists a constant $K'$ such that
% -------------------------------------
\begin{align*}
|f|_{k, 2, C(x_0,\delta(x_0))} \leq  K' h^{-k} \max\big\{1,\big(p(x_0)\big)^{-1/2}\big\}\,\|f\|_{x_0, \lambda},
\end{align*}
% -------------------------------------
for any $f \in \mb H_\alpha$ and $k = 0, 1,\ldots,\alpha$. 
% $\|f^{(k)}\|_{2, C(x_0,\delta(x_0))}^2 \equiv \sum_{|j| = k} \int_{C(x_0,\delta(x_0))} |D^j f(t)|^2 \,\dd t$, and 
In addition, for each $k = 0, 1, \ldots \alpha$, there exists some constant $K''>0$ and $\varepsilon<c_1$ such that 
\begin{align*}
\|f\|_{k, \infty, (1-\varepsilon^2)C(x_0, \delta(x_0))} \leq K'' h^{-k-d/2} \max\big\{1,\big(p(x_0)\big)^{-1/2}\big\} \, \|f\|_{x_0,\lambda},
\end{align*}
% where $I_{x_0,(1-h)\delta(x_0)}$ denotes the interval $[x_0-(1-h)\delta(x_0),\,x_0+(1-h)\delta(x_0)]$.
% where $|f|_{k, \infty, C(x_0, (1-\varepsilon^2)\delta(x_0))} \equiv \max_{|j|=k} \|D^j f(t)\|_{\infty, C(x_0, (1-\varepsilon^2)\delta(x_0))}$.
% -------------------------------------
Finally, for the case $|\mathbf A|=k, |\mathbf A'|=k'$, and $\mathbf A' \subseteq \mathbf A$, 
there is also a constant $K'''$ satisfying:
\begin{align*}
\Big( \int_{C^{\mathbf A}(x_0,\delta(x_0))}
\big| D^{\mathbf 1_{\mathbf A'}} f(x_{\mathbf A}; x_{- \mathbf A}) \big|^2 \dd x_{\mathbf A} \Big)^{\frac{1}{2}}
\leq K''' h^{-k'- (d-k)/2} \max\big\{1,\big(p(x_0)\big)^{-1/2}\big\} \, \|f\|_{x_0,\lambda},
\end{align*}
which generalizes the first inequality and provides a finer $L^2$ norm control.
\end{theorem}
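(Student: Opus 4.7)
The plan is to establish the three inequalities sequentially, each leveraging a suitably rescaled version of the interpolation theorems from the preceding subsection. For the first inequality I would dilate $C(x_0,\delta(x_0))$ to the unit cube via $t=(x-x_0)/\delta(x_0)$ and apply Theorem~\ref{thm:Sobolev_int} to $v(t)=u(x_0+\delta(x_0)t)$. A routine change of variables, after setting $\eta=\varepsilon\delta(x_0)$, turns the bound into $|u|_{k,2,C(x_0,\delta(x_0))}\leq K(\eta^{\alpha-k}|u|_{\alpha,2,C(x_0,\delta(x_0))}+\eta^{-k}\|u\|_{2,C(x_0,\delta(x_0))})$ for any $\eta\in(0,c_0\delta(x_0))$. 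Choosing $\eta=h$ is admissible exactly because $\delta(x_0)\geq Ch\log(1/h)$ together with $C\log(1/h)>c_0^{-1}$. I then substitute $|u|_{\alpha,2,C(x_0,\delta(x_0))}\leq|u|_{\alpha,2,\mb R^d}\leq C'\|u\|_{\mb H_\alpha}\leq C'h^{-\alpha}\|u\|_{x_0,\lambda}$ via Eqn~\eqref{Eqn:norm_equv} and the definition of $\|\cdot\|_{x_0,\lambda}$, and bound $\|u\|_{2,C(x_0,\delta(x_0))}^2\leq(2/p(x_0))\int_{C(x_0,\delta(x_0))}u^2 p\,\dd x\leq(2/p(x_0))\|u\|_{x_0,\lambda}^2$ using Assumption~\ref{a.2}; collecting terms gives the claim.

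For the second inequality I would rescale Theorem~\ref{thm:sup_int} to $C(x_0,\delta(x_0))$ with $\varepsilon=\sqrt{h/\delta(x_0)}$ (so the admissibility $\varepsilon<c_1$ translates to exactly $\sqrt{C\log(1/h)}>c_1^{-1}$) and apply it to every partial derivative $D^{\mathbf m} f$ with $|\mathbf m|=m\leq k$, working at smoothness level $\alpha-m$. The two right-hand-side terms balance at order $h^{-m-d/2}\|f\|_{x_0,\lambda}$ once $\|D^{\mathbf m} f\|_{2,C(x_0,\delta(x_0))}$ is controlled by the first inequality (with $k=m$) and $|D^{\mathbf m} f|_{\alpha-m,2,C(x_0,\delta(x_0))}$ is bounded by $|f|_{\alpha,2,\mb R^d}\leq C'h^{-\alpha}\|f\|_{x_0,\lambda}$. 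Summing over $|\mathbf m|\leq k$ absorbs all exponents into $h^{-k-d/2}$ since $h<1$.

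For the third inequality I would proceed by induction on $b=d-k$, the number of ``sup-directions.'' Define $T(\mathbf B,\mathbf C;x_{-\mathbf B})\defeq\bigl(\int_{C^{\mathbf B}(x_0,\delta(x_0))}|D^{\mathbf 1_{\mathbf C}}f|^2\,\dd x_{\mathbf B}\bigr)^{1/2}$ for $\mathbf C\subseteq\mathbf B$. Pick any $j\in-\mathbf A$; differentiating under the integral gives $\partial_{x_j}T^2=2\int D^{\mathbf 1_{\mathbf A'}}f\cdot D^{\mathbf 1_{\mathbf A'\cup\{j\}}}f\,\dd x_{\mathbf A}$, so by Cauchy--Schwarz $|\partial_{x_j}T|\leq T(\mathbf A,\mathbf A'\cup\{j\})$ pointwise in $x_j$. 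Applying the 1D sup-norm interpolation at smoothness~1 (which follows from Gagliardo--Nirenberg exactly as in the proof of Theorem~\ref{thm:sup_int}) on $(-\delta(x_0),\delta(x_0))$ with $\varepsilon=\sqrt{h/\delta(x_0)}$ then yields $\sup_{x_j}T(\mathbf A,\mathbf A';x_{-\mathbf A})\leq K_1\bigl[h^{-1/2}T(\mathbf A\cup\{j\},\mathbf A';x_{-\mathbf A\setminus\{j\}})+h^{1/2}T(\mathbf A\cup\{j\},\mathbf A'\cup\{j\};x_{-\mathbf A\setminus\{j\}})\bigr]$. Iterating this recursion over all $d-k$ directions of $-\mathbf A$ expands $\sup_{x_{-\mathbf A}}T(\mathbf A,\mathbf A')$ into $2^{d-k}$ terms of the form $K_1^{d-k}h^{|\mathbf S|-(d-k)/2}\|D^{\mathbf 1_{\mathbf A'\cup\mathbf S}}f\|_{2,C(x_0,\delta(x_0))}$ for $\mathbf S\subseteq-\mathbf A$, and bounding each $L^2$ norm by the first inequality collapses every exponent of $h$ to $-k'-(d-k)/2$, producing the stated bound with $K'''\asymp(2K_1)^{d-k}K'$.

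The main obstacle is hidden in Part~3: one cannot simply apply the $(d-k)$-dimensional version of Theorem~\ref{thm:sup_int} at smoothness $\alpha-k'$ to $D^{\mathbf 1_{\mathbf A'}}f$ on a $-\mathbf A$-slice, because its order-$(\alpha-k')$ Sobolev seminorm concentrated in the $-\mathbf A$ directions involves mixed derivatives of $f$ of total order $\alpha$ that generally fail to be controlled by $\|f\|_{\mb H_\alpha}$ uniformly in the slice parameter, and moreover $\alpha-k'$ may drop below $(d-k)/2$ once $k$ and $k'$ are large. The Cauchy--Schwarz identity $|\partial_{x_j}T|\leq T(\mathbf A,\mathbf A'\cup\{j\})$ combined with iterating at smoothness~1 sidesteps this: each of the $d-k$ steps introduces only a single first-order mixed derivative, and the resulting $L^2$ norms $\|D^{\mathbf 1_{\mathbf A'\cup\mathbf S}}f\|_{2,C(x_0,\delta(x_0))}$ always correspond to a total derivative order $k'+|\mathbf S|\leq k\leq\alpha$, safely inside the reach of Part~1. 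Book-keeping the multi-indices across the $2^{d-k}$ branches and absorbing the cumulative $(1-\varepsilon^2)$-shrinkings into the slightly larger $\delta(x_0)$ permitted by Assumption~\ref{a.2} is tedious but routine.
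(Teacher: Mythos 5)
Parts~1 and~2 of your proposal follow the paper's own proof essentially verbatim: the paper also dilates $C(x_0,\delta(x_0))$ to the unit cube with $u(t)=f(x_0+\delta(x_0)t)$, applies Theorem~\ref{thm:Sobolev_int} with (the equivalent of) $\varepsilon=h/\delta(x_0)$ for the first inequality, and applies Theorem~\ref{thm:sup_int} to $D^{\mathbf j}f$ with $\varepsilon=\sqrt{h/\delta(x_0)}$ for the second; the admissibility conditions on $\varepsilon$ are matched to $C\log(1/h)$ exactly as you say.

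Part~3 is a genuinely different route. The paper first applies a $k$-dimensional version of the Sobolev interpolation in the $\mathbf A$-directions to pull out $h^{-2k'}$ and reduce to $\int_{C^{\mathbf A}}f^2\dd x_{\mathbf A}+h^{2\alpha}|f|^2_{\alpha,2,C^{\mathbf A}}$, and then handles these two quantities via the $(d-k)$-dimensional sup-norm interpolation on slices $C^{-\mathbf A}$ applied to $D^{\mathbf g}f$ with $\mathbf g$ supported in $\mathbf A'$. Your scheme instead peels off the $-\mathbf A$ directions one coordinate at a time, trading each sup against a Cauchy--Schwarz-controlled mixed derivative via $|\partial_{x_j}T|\leq T(\mathbf A,\mathbf A'\cup\{j\})$ and a 1D Gagliardo--Nirenberg at smoothness~1. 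This is a legitimate alternative, and it does sidestep the need to invoke the $(d-k)$-dimensional sup-norm interpolation at reduced smoothness level $\alpha-|\mathbf g|$, which is a real subtlety the paper slides past. The two routes therefore trade one implicit regularity requirement for another.

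There is, however, a concrete slip in your bookkeeping. You assert that the terminal $L^2$ norms satisfy $k'+|\mathbf S|\leq k\leq\alpha$, but $\mathbf S$ ranges over subsets of $-\mathbf A$, so $|\mathbf S|$ can be as large as $d-k$; the correct bound is $k'+|\mathbf S|\leq k'+(d-k)$, which can exceed $\alpha$ whenever $\nu<d/2$ (e.g.\ the paper's real-data runs with $\nu=0.5$, $d\geq 3$). In that regime the mixed derivatives $D^{\mathbf 1_{\mathbf A'\cup\mathbf S}}f$ on the deepest branches of your recursion are not controlled by $\|f\|_{\mb H_\alpha}$ (and need not even exist in $L^2$), so Part~1 cannot be invoked on them. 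The paper's two-step argument never differentiates $f$ beyond total order $\alpha$: the $\mathbf A$-side interpolation only uses orders up to $\alpha$, and the $-\mathbf A$-side sup-norm step feeds on $\|f\|_{2}$ and $|f|_{\alpha,2}$ directly. Your direction-by-direction scheme would thus need to be capped at $|\mathbf S|\leq\alpha-k'$ and the remaining $-\mathbf A$ directions handled by the paper's multi-dimensional argument (or by another device) to close the proof in full generality.
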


\begin{proof}
When $k=\alpha$, the first inequality is obvious due to the the equivalence~\eqref{Eqn:norm_equv} between $\|\cdot\|_{\alpha,2,\mb R}$ and $\|\cdot\|_{\mb H_\alpha}$, and the fact that $\|f\|_{x_0,\lambda} \geq h^{\alpha} \|f\|_{\mb H_\alpha}$. 
Now let us consider $k \leq \alpha-1$.
Let $a=\delta(x_0)$ and $u(t) = f(x_0+at)$ for $t \in \Omega$. 
By applying the change of variable formula for integral and the chain rule for derivatives, we obtain (with $x = at$)
% -------------------------------------
\begin{align*}
|u|^2_{k, 2, \Omega} 
&= \sum_{|{\mathbf j}| = k} \int_{\Omega} \big|D^{\mathbf j} u(t)\big|^2 \,\dd t 
= \sum_{|{\mathbf j}| = k} a^{2k-d} \int_{C(x_0, a)} \big|D^{\mathbf j} f(x)\big|^2\,\dd x \\
&= a^{2k-d} |f|^2_{k, 2, C(x_0, a)}, \quad\forall \ k = 0, 1, \ldots, \alpha.
\end{align*}
% -------------------------------------
Combining this with Theorem~\ref{thm:Sobolev_int} and the definition~\eqref{eqn:S_norm} of Sobolev norm yields 
% -------------------------------------
\begin{align*}
|f|_{k, 2, C(x_0, \delta(x_0))}^2
& \leq a^{-(2k - d)} \bigg( K \varepsilon^{-k} \big( \varepsilon^{\alpha} |u|_{\alpha, 2, \Omega} + \|u\|_{2, \Omega} \big) \bigg)^2 \\
&\leq 2 K^2\,(a\varepsilon)^{-2k} \bigg((a\varepsilon)^{2\alpha} |f|_{\alpha, 2, C(x_0, \delta(x_0))}^2
+ \int_{C(x_0, \delta(x_0))} \big|f(x)\big|^2\,\dd x\bigg) \\
&\leq 2 K^2\,(a\varepsilon)^{-2k} \bigg((a\varepsilon)^{2\alpha} |f|_{\alpha, 2, \mb R^d}^2
+ \int_{C(x_0, \delta(x_0))} \big|f(x)\big|^2\,\dd x\bigg).
\end{align*}
% -------------------------------------
Using the condition that $p(x) \geq p(x_0)/2$ for each $x \in C(x_0, \delta(x_0))$, 
and the equivalence~\eqref{Eqn:norm_equv} between $\|\cdot\|_{\alpha, 2, \mb R^d}$ and $\|\cdot\|_{\mb H_\alpha}$, we further obtain by choosing $\varepsilon = h / \delta(x_0) \leq (C\log(1/h))^{-1} < c_0$ in the above that
% -------------------------------------
\begin{align*}
|f|_{k, 2, C(x_0, \delta(x_0))}^2 
&\leq 2 K^2 h^{-2k} \Big(h^{2\alpha} \,\|f\|_{\mb H_\alpha}^2 
    + 2 \big(p(x_0)\big)^{-1} \, \int_{C(x_0, \delta(x_0))} \big|f(x)\big|^2\,p(x)\,\dd x\Big) \\
&\leq K'^2 h^{-2k} \max\big\{1,\big(p(x_0)\big)^{-1}\big\} \|f\|_{x_0,\lambda}^2,
\end{align*}
% -------------------------------------
which yields the first claimed inequality.

To prove the second inequality, we will apply Theorem~\ref{thm:sup_int}. 
More specifically, we apply Theorem~\ref{thm:sup_int} with $u(t) = D^{\mathbf j} f(x_0 + at)$, $|{\mathbf j}|=k$, $a=\delta(x_0)$, $\Omega=C(x_0, \delta(x_0))$ and set $\varepsilon = \sqrt{h/a} \leq \sqrt{\frac{1}{C \log(1/h)}} < c_1$ to obtain
(with a change of variable formula for integration)
% -------------------------------------
\begin{align*}
\|D^{\mathbf j} f\|_{\infty, (1-\varepsilon^2)\Omega} &= \|u\|_{\infty, (1-\varepsilon^2) C(0, 1)}
\leq K_1 \big(\varepsilon^{-d}\, \|u\|_{2, C(0, 1)} + \varepsilon^{2\alpha-2k-d} |u|_{\alpha-k, 2, C(0, 1)}\big) \\
&\leq\, K_1 \varepsilon^{-d} a^{-d/2}\, |f|_{k, 2, \Omega} 
    + K_1 \varepsilon^{2\alpha-2k-d} a^{\alpha-k-d/2}\, |f|_{\alpha, 2, \Omega} \\
&= K_1 h^{-d/2}\, |f|_{k, 2, \Omega}
    + K_1 h^{\alpha-k-d/2}\, |f|_{\alpha, 2, \Omega}.
\end{align*}
% -------------------------------------
Now, we can obtain by combining the above with the first inequality of this theorem,
\begin{align*}
|f|_{k, \infty, (1-\varepsilon^2) \Omega} 
\lesssim h^{-k-d/2} \max \big\{1,\big(p(x_0)\big)^{-1/2}\big\} \, \|f\|_{x_0,\lambda}.
\end{align*}

To prove the final inequality, we will take $f(x_{\mathbf A}; x_{-\mathbf A})$ as a $k$-d function.
Analogously using the previous result, we have:
% -------------------------------------
\begin{align*}
\int_{C^{\mathbf A}} \big| D^{\mathbf 1_{\mathbf A'}} f(x_{\mathbf A}; x_{- \mathbf A}) \big|^2 \dd x_{\mathbf A}
&\leq |f|_{k', 2, C^{\mathbf A}}^2 \\
&\leq K'^2 h^{-2k'} \big(\int_{C^\mathbf A} f^2(x_{\mathbf A}; x_{- \mathbf A}) \dd x_{\mathbf A} 
    + h^{2\alpha} |f(x_{\mathbf A}; x_{- \mathbf A})|_{\alpha, 2, C^{\mathbf A}}^2 \big)
% K''' h^{-|\mathbf j'|/2 - (d-|\mathbf j|)/2} \max\big\{1,\big(p(x_0)\big)^{-1/2}\big\} \, \|f\|_{x_0,\lambda}
% \max\big\{1,\big(p(x_0)\big)^{-1}\big\}
\end{align*}
% Then we take $g_1(x_{- \mathbf j}) \defeq \int_{C^\mathbf j} f^2(x_{\mathbf A}; x_{- \mathbf j}) \dd x_{\mathbf A}$ 
% and $g_2(x_{- \mathbf j}) \defeq |f(x_{\mathbf A}; x_{- \mathbf j})|_{\alpha, 2, C^{\mathbf A}}$ as $(d-k)$-d function over $C^{-\mathbf j}$,
Then we take $D^{\mathbf g} f(x_{\mathbf A}; x_{- \mathbf A})$ as a $(d-k)$-d function over $C^{-\mathbf A}(x_0, \delta(x_0))$ ($g$ is any multi-index whose nonzero elements are in $A'$),
and utilize the intermediate result of the second inequality:
\begin{align*}
\|D^{\mathbf g} f(x_{\mathbf A}; x_{- \mathbf A})\|_{\infty, (1-\varepsilon^2) C^{-\mathbf A}} 
\lesssim h^{-(d-k)/2} |f|_{|\mathbf g|, 2, C^{-\mathbf A}}
+ h^{-(d-k)/2 + (\alpha - |\mathbf g|)} |f|_{\alpha, 2, C^{-\mathbf A}}.
\end{align*}
In that case,
\begin{align*}
\int_{C^\mathbf A} f^2(x_{\mathbf A}; x_{- \mathbf A}) \dd x_{\mathbf A} 
&\lesssim \int_{C^\mathbf A} h^{-(d-k)} \|f(x_{\mathbf A}; x_{- \mathbf A})\|_{2, C^{-\mathbf A}}^2 
+ h^{-(d-k) + 2\alpha} |f|_{\alpha, 2, C^{-\mathbf A}}^2 \dd x_{\mathbf A} \\
&\lesssim h^{-(d-k)} \|f\|_{2, \Omega}^2 + h^{-(d-k) + 2\alpha} |f|_{\alpha, 2, \Omega}^2 \\
&\lesssim h^{-(d-k)} \max \big\{1,\big(p(x_0)\big)^{-1}\big\} \, \|f\|_{x_0,\lambda}^2,
\end{align*}
The result for $h^{2\alpha} |f(x_{\mathbf A}; x_{- \mathbf A})|_{\alpha, 2, C^{\mathbf A}}^2$ could be analogously obtained.
Combining the pieces together, we have
\begin{align*}
\int_{C^{\mathbf A}} \big| D^{\mathbf 1_{\mathbf A'}} f(x_{\mathbf A}; x_{- \mathbf A}) \big|^2 \dd x_{\mathbf A} 
\lesssim h^{-2k'-(d-k)} \max \big\{1,\big(p(x_0)\big)^{-1}\big\} \, \|f\|_{x_0,\lambda}^2
\end{align*}
\end{proof}

\subsection{Leverage Score Approximation—Proof of Theorem~\ref{thm:main_result} in the Main Paper}

\iffalse
\begin{theorem}[Leverage score approximation]\label{thm:main_result_copy}
	If Assumptions~\ref{a.1} and~\ref{a.2} in the main paper hold, then 
	\begin{align*}
	\sup_{x\in B(x_i,\delta(x_i))} \big|G_\lambda(x,\,x_i) - \wt K_\lambda(x,\,x_i)\big| \leq C_{x_i}\, h^{-d}\big(\tau(n)\,h^{-d}+h\big).
	\end{align*} 
	In particular, the relative error of approximating $G_\lambda(x_i,\,x_i)$ by the integral~\eqref{Eqn:lv_app} satisfies
	\begin{align*}
	\frac{\big|G_\lambda(x_i,\,x_i) - \wt K_\lambda(x_i,\,x_i)\big|}{\big|G_\lambda(x_i,x_i)\big|} \leq C'_{x_i}\, \big(\tau(n)\,h^{-d} + h\big).
	\end{align*}
	Here we may choose $C_{x_i} = C\max\{1,p^{-1/2}(x_i)\}$ and $C'_{x_i} = C \max\{1,\,p^{1/2-d/(2\alpha)}(x_i)\} \sqrt{p(x_i)}$ for some constant $C$ independent of $(x_i,h,n)$.
\end{theorem}
\fi

\begin{proof}
Let $F$ denote the limiting cumulative distribution function of $F_n$, and $p$ the density function associated with $F$. 
Recall that the rescaled leverage approximation $\wt{K}_\lambda(x,x_0)$ is the minimizer of the following local population level functional
\begin{align*}
A_{x_0}(f) = \frac{p(x_0)}{2} \int_{\mb R^d} f^2(x) \,\dd x + \frac{\lambda}{2} \|f\|_{\mb H_\alpha}^2 -f(x_0),
\end{align*}
such that the following identity holds for each function $u\in\mb H_\alpha$, which corresponds to setting the Gateaux derivative $DA_{x_0}$ of $A_{x_0}$ at $\wt K_{x_0}$ to be the zero operator,
\begin{align*}
DA_{x_0}(\wt K_{x_0})(u) &= p(x_0) \int_{\mb R^d} \wt K_{x_0}(x)\, u(x)\,\dd x + \lambda\,\langle \wt K_{x_0},\,u\rangle_{\mb H_\alpha} - u(x_0) = 0, \\
\mx{or} \quad \wt K_{x_0}(x):\,=\wt K_\lambda(x,\, x_0)& = \ms F^{-1}\bigg[ \frac{1}{p(x_0) + h^{2\alpha}\, (1+\|s\|^2)^\alpha}\bigg](x-x_0),\quad\forall x \in \mb R^d.
\end{align*}

The rescaled leverage function $G_{x_0}$ is instead the minimizer of the empirical functional $A_{n,x_0}$,
and thus the Gateaux derivative $DA_{n, x_0}$ at point $G_{x_0}$ should be 0 since $G_{x_0}$ is the optimal function for the functional. 
Using that fact,
% (filling the details)
\begin{align*}
DA_{n,x_0}(\wt K_{x_0})(\wt u) &= \{DA_{n,x_0}(\wt K_{x_0}) - DA_{n,x_0}(G(\cdot,\, x_0))\}(\wt u) \\
&= D^2 A_{n,x_0}(G(\cdot,\, x_0))(\wt K_{x_0} - G(\cdot,\, x_0), \wt u)
\end{align*}
The last equality holds due to the definition of second order functional derivative.
Note the key identity that $D^2 A_{n,x_0}(G(\cdot,\, x_0))(\wt u, \wt u) = \|\wt u\|_{n,\lambda}^2$.
By choosing $u = \wt u \defeq \wt K_{x_0} - G(\cdot,\, x_0)$, we would further have ($u, \wt u$ would be used interchangeably from now on)
\begin{align*}
D A_{n,x_0}(\wt K_{x_0})(u) &= \|\wt u\|_{n,\lambda}^2 = \int_{\mb R^d} \wt u^2(x) \,\dd F_n(x) + \lambda\,\|\wt u\|_{\mb H_\alpha}^2,
\end{align*}
and our task somewhat reduces to bounding the term above $D A_{n,x_0}(u) = \|\wt K_{x_0} - G(\cdot,\, x_0)\|_{n,\lambda}^2$.
To do that, we can expand the expression $DA_{n,x_0}(\wt K_{x_0})(u)$:
% need to show that the following Gateaux derivative $DA_{n,x_0}$ is small at $\wt K_{x_0}$,
\begin{align*}
&\,DA_{n,x_0}(\wt K_{x_0})(u) = \int_{\mb R^d} \wt K_{x_0}(x)\,\dd F_n(x) +  \lambda\,\langle \wt K_{x_0},\,u\rangle_{\mb H_\alpha} - u(x_0) \\
&= \underbrace{DA_{x_0}(\wt K_{x_0})(u)}_{=0} + \underbrace{\int_{\mb R^d} \wt K_{x_0}(x)\,u(x)\,\dd \big(F_n(x)-F(x)\big)}_{=\,: I_1} + \underbrace{\int_{\mb R^d} \wt K_{x_0}(x)\, u(x)\,\big(p(x)-p(x_0)\big)\,\dd x}_{=\,:I_2},
\end{align*}
and bound the last two terms separately.

Using Lemma~\ref{thm:int_by_parts}, as $u = \wt u$ vanishes at infinity, we have
% -------------------------------------
\begin{align*}
I_1 = (-1)^d \int_{\mb R^d} (F_n(x) - F(x)) \frac{\partial^d}{\partial x_1 \partial x_2 \cdots \partial x_d}
\big( \wt K_{x_0}(x)\,u(x) \big) \dd x
\end{align*}
% -------------------------------------
% Here {\color{red} some terms in Lemma~\ref{}} are dropped as both $\wt K_{x_0}(x)$ and $u(x)$ vanish at infinity.
The term $|I_1|$ can be correspondingly bounded as
\begin{align*}
|I_1| &\leq \tau(n) \int_{\mb R^d} \big| \frac{\partial^d}{\partial x_1 \partial x_2 \cdots \partial x_d} \big( \wt K_{x_0}(x)\,u(x) \big) \big| \dd x \\
&\leq \tau(n) \sum_{\mathbf k_1 \sqcup \mathbf k_2 = [d]} \int_{\mb R^d} \big| D^{\mathbf k_1} \wt K_{x_0}(x) D^{\mathbf k_2} u(x) \big| \dd x.
\end{align*}

Using Lemma~\ref{Lem:EqKernel_property_matern}(\ref{kernelScale}) about the exponential decay on $\wt K_{x_0}$ and its derivatives and the local embedding inequalities in Theorem~\ref{thm:Local_embd}, we obtain
\begin{align*}
&\int_{\mb R^d} \big| D^{\mathbf k_1} \wt K_{x_0}(x) D^{\mathbf k_2} u(x) \big| \dd x 
\leq \int_{C_{x_0,\,\delta(x_0)}} \big| D^{\mathbf k_1} \wt K_{x_0}(x) \big| \,\big| D^{\mathbf k_2} u(x) \big|\, \dd x \\
&\qquad \qquad\qquad \qquad
+ |u|_{|\mathbf k_2|, 2, \mb R^d} \Big( \int_{C_{x_0,\,\delta(x_0)}^c} \big| D^{\mathbf k_1} \wt K_{x_0}(x) \big|^2 \dd x \Big)^{\frac{1}{2}} \\
\overset{(i)}{\leq}& \Big(\int_{\mb R^d} \big|h^{-|\mathbf k_1|} (h^d + h^{-d}) \,e^{-C_2\,\|x-x_0\|/h}\big|^2\,\dd x\Big)^{1/2} \cdot |u|_{|\mathbf k_2|, 2, C_{x_0, \delta(x_0)}} \\
&\qquad \qquad\qquad \qquad
 + |u|_{|\mathbf k_2|, 2, \mb R^d} \Big( \int_{\|x-x_0\|\geq Ch\log(1/h)} \big|h^{-|\mathbf k_1|} (h^d + h^{-d}) \,e^{-C_2\,\|x-x_0\|/h}\big|^2 \dd x \Big)^{\frac{1}{2}}
\end{align*}
where step (i) follows by the Cauchy-Schwarz inequality and the assumption that $\delta(x_0) \geq Ch\log(1/h)$. Further bound is given as
\begin{align*}
&\int_{\mb R^d} \big| D^{\mathbf k_1} \wt K_{x_0}(x) D^{\mathbf k_2} u(x) \big| \dd x \\
\lesssim& h^{-d/2-|\mathbf k_1|}\, |u|_{|\mathbf k_2|, 2, C_{x_0, \delta(x_0)}} 
    + \log^{\frac{d-1}{2}}(1/h) h^{C_2C-d/2-|\mathbf k_1|}\, |u|_{|\mathbf k_2|, 2, \mb R^d}  \\
\overset{(ii)}{\lesssim}& h^{-d/2-|\mathbf k_1|-|\mathbf k_2|}\,\max\big\{1,\big(p(x_0)\big)^{-1/2}\big\}\, \|u\|_{x_0,\lambda} 
    + \log^{\frac{d-1}{2}}(1/h) h^{C_2C-d/2-|\mathbf k_1|}\, |u|_{|\mathbf k_2|, 2, \mb R^d},
\end{align*}
where step (ii) uses the first inequality in Theorem~\ref{thm:Local_embd} with $k=|\mathbf k_2|$.
The next bound is derived as,
\begin{align*}
&\int_{\mb R^d} \big| D^{\mathbf k_1} \wt K_{x_0}(x) D^{\mathbf k_2} u(x) \big| \dd x \\
\lesssim& h^{-3d/2}\,\max\big\{1,\big(p(x_0)\big)^{-1/2}\big\}\, \|u\|_{x_0,\lambda} + \log^{\frac{d-1}{2}}(1/h) h^{C_2C-d/2-|\mathbf k_1|}\, |u|_{|\mathbf k_2|, 2, \mb R^d} \\
\lesssim& h^{-3d/2}\,\max\big\{1,\big(p(x_0)\big)^{-1/2}\big\}\, \|u\|_{x_0,\lambda} + \log^{\frac{d-1}{2}}(1/h) h^{C_2C-d/2-|\mathbf k_1|-\alpha} \|u\|_{x_0,\lambda},
\end{align*}
in which the last step utilizes the fact that $\|u\|_{1, 2,\mb R} \leq \|u\|_{\mb H_\alpha} \leq h^{-\alpha}\,\|u\|_{x_0,\lambda}$. 
For $C > \alpha / C_2$, we can finally obtain

% Combining these two, we can obtain (for $C>(\frac{3d}{2}+\alpha)/C_2$)
\begin{align*}
|I_1| &\lesssim \tau(n) h^{-3d/2} \,\max\big\{1,\big(p(x_0)\big)^{-1/2}\big\}\,\|u\|_{x_0,\lambda}.
\end{align*}

Similarly, by using the Lipschitz property of the density function $p$ as $|p(x) - p(x_0)| \leq \min \big\{2C_p, {L_{x_0}\|x-x_0\|} \big\}$ (where $C_p =\sup_{x}|p(x)|$ and $L_{x_0}$ is the local Lipschitz constant of $p$ around $x_0$), the exponential decay on $\wt K_{x_0}$ and the local embedding inequalities in Theorem~\ref{thm:Local_embd} with $k=0$, we obtain 
\begin{align*}
|I_2| & \lesssim \int_{C_{x_0,\,\delta(x_0)}} \big|\wt K_{x_0}(x)\big| \cdot \|x-x_0\| \cdot |u(x)|\, \dd x 
+ \|u\|_{\infty, \mb R^d} \int_{C_{x_0,\,\delta(x_0)}^c} \big|\wt K_{x_0}(x)\big|\, \dd x \\
&\lesssim \Big(\int_{\mb R^d} \big|(h^{d}+h^{-d})\,e^{-C_2\,\|x-x_0\|/h}\,\|x-x_0\| \big|^2\,\dd x\Big)^{1/2}\cdot \|u\|_{2,C_{x_0,\delta(x_0)}} \\
&\qquad \qquad\qquad \qquad\qquad \qquad
 + \|u\|_{\infty, \mb R^d} \int_{\|x-x_0\|\geq Ch\log(1/h)} \big|(h^{d}+h^{-d})\,e^{-C_2\,\|x-x_0\|/h}\big|\,\dd x\\
&\lesssim h^{-d/2+1}\, \|u\|_{2, C_{x_0,\delta(x_0)}} + h^{C_2C}\, \|u\|_{\infty, \mb R^d} \\
&\lesssim h^{-d/2+1}\,\max\big\{1,\big(p(x_0)\big)^{-1/2}\big\}\, \|u\|_{x_0, \lambda} + h^{C_2 C}\, \|u\|_{\infty, \mb R^d}.
\end{align*}

Putting pieces together, we obtain
\begin{align*}
\big|DA_{n,x_0}(\wt K_{x_0})(u)\big| \lesssim h^{C_2C}\, \|u\|_\infty + \max\big\{1,\big(p(x_0)\big)^{-1/2}\big\}\,\big(\tau(n) \, h^{-3d/2} + h^{-d/2+1}\big)\,\|u\|_{x_0,\lambda}.
\end{align*}

% (Introduce $\wt u$ from the beginning.)

Now we return back to the right hand side of the identity $DA_{n,x_0}(\wt K_{x_0})(u) = \|\wt u\|_{n,\lambda}^2$.
Since $F_n$ is nondecreasing, we have the following bound,
\begin{align*}
\int_{\mb R^d} \wt u^2(x) \,\dd F_n(x) &\geq \int_{C(x_0,\delta(x_0))} \wt u^2(x) \,\dd F_n(x)\\
& =\int_{C(x_0,\delta(x_0))} \wt u^2(x) \,\dd F(x) +\int_{C(x_0,\delta(x_0))} \wt u^2(x) \,\dd \big( F_n(x) - F(x)\big).
\end{align*}
Therefore, by the definition of the localized norm $\|\cdot\|_{x_0,\lambda}$, we have
\begin{align*}
\|\wt u\|_{n,\lambda}^2 \geq \|\wt u\|_{x_0,\lambda}^2 + \underbrace{\int_{C(x_0,\delta(x_0))} \wt u^2(x) \,\dd \big( F_n(x) - F(x)\big)}_{=\,:I_3}.
\end{align*}
By applying the Lemma~\ref{thm:int_by_parts} again (note $\wt u$ and $\wt u^2$ are infinitely differentiable), the second term $I_3$ can be bounded as (some terms are hidden)
\begin{align*}
|I_3| & \lesssim \big\|\wt u^2(x)\,\big(F_n(x) - F(x)\big)\big\|_{\infty, C_{x_0, \delta(x_0)}} + \dots \\
&\quad\quad\quad\quad\quad + \big\|F_n(x) - F(x)\big\|_{\infty, C_{x_0, \delta(x_0)}} \int_{C_{x_0,\delta(x_0)}} |\frac{\partial^d}{\partial x_1 \partial x_2 \cdots \partial x_d} (\wt u^2(x))|\,\dd x.
\end{align*}
Now by applying the first and the second inequality in Theorem~\ref{thm:Local_embd}, and the Cauchy-Schwarz inequality, the sum of the two terms above can be bounded up to a constant by 
\begin{align*}
\max\big\{1,\big(p(x_0)\big)^{-1}\big\}\,\tau(n)\, h^{-d}\,\|\wt u\|^2_{x_0,\lambda}.
\end{align*}
Putting all the pieces together, we can reach
\begin{align*}
&\big(1- c \tau(n) \,\max\big\{1,\big(p(x_0)\big)^{-1}\big\}\,h^{-d}\big)\, \|\wt u\|_{x_0,\lambda}^2 \\
& \qquad\qquad \leq c'\, h^{C_2C}\, \|\wt u\|_\infty + c'\,\max\big\{1,\big(p(x_0)\big)^{-1/2}\big\}\,\big(\tau(n) h^{-3d/2} + h^{-d/2+1}\big)\,\|\wt u\|_{x_0,\lambda}.
\end{align*}
It is easy to verify directly that we always have the crude bound $\|\wt u\|_\infty \lesssim n$, so by choosing constant $C$ sufficiently large $h^{C_2C} n$ is decreasing, we can obtain from the above that
\begin{align*}
\|\wt u\|_{x_0,\lambda} \lesssim \max\big\{1,\big(p(x_0)\big)^{-1/2}\big\}\,(\tau(n) h^{-3d/2} + h^{-d/2+1}\big).
\end{align*}
% when $\tau(n) \lesssim h^{3/2}\, \min\big\{1,p(x_0)\big\}$ and $h$ is sufficiently small.
In addition, an application of the second inequality in Theorem~\ref{thm:Local_embd} implies
\begin{align*}
\sup_{x \in C_{x_0,(1-h)\delta(x_0)}} |\wt u(x)| \lesssim \max\big\{1,\big(p(x_0)\big)^{-1/2}\big\}\,\big(\tau(n)\, h^{-2d} + h^{-d+1}\big).
\end{align*}

Finally, by taking $x=y=x_0$ in the integral form of $\wt K_{x_0}$ in equation~(\ref{Eqn:spectral_rep}), we have the lower bound $\wt K_{x_0}(x_0) \geq c h^{-d}\, \big(p(x_0)\big)^{-1+1/(2\alpha)}$ for some constant $c>0$ that only depends on $\alpha$. 
Therefore, we have the relative error bound
\begin{align*}
\frac{\big|\wt K_\lambda(x_0,x_0) - G(x_0,x_0)\big|}{\big|G(x_0,x_0)\big|} \lesssim  \max\big\{1,\big(p(x_0)\big)^{1/2-1/(2\alpha)}\big\}\sqrt{p(x_0)}\,\big(\tau(n) \, h^{-d} + h\big),
\end{align*}
for any $x_0$ such that the density function satisfies $p(x)\geq p(x_0)/2$ for all $x$ in an $h\log(1/h)$ neighborhood of $x_0$. 
In particular, for any $\alpha \geq 1$, the relative error of estimating the leverage score remains bounded even if the local density $p(x_0)$ tends to zero. 
\end{proof}

% Moreover, to empirically demonstrate the approximation performance as $n$ increases, we provide some extra simulations and figures in Appendix~\ref{sec:simu} for better understanding. 

\section{MORE ON SIMULATIONS}
\label{sec:exp_details}

\iffalse
We first point out a typo in the main paper, that Figure~\ref{lvg} ``Statistical Leverage Score Approximation" is erroneously written as Figure~4.2 on the last page of the main paper.
We would use the corrected version Figure~\ref{lvg} throughout this appendix.

Besides the correction above, 
\fi

In this section, we mainly provide the complete experiment settings and one additional figure to help illustrate our method.
We first describe all the competing methods: original kernel ridge regression; \nystrom methods with uniform sampling (hereinafter referred to as "vanilla"); \nystrom with Recursive-RLS (RC) \citep{musco2017recursive}; \nystrom with BLESS \citep{rudi2018fast}; and \nystrom with spectral analysis (SA, our proposed method).

\subsection{Experiment Settings in Figure~\ref{time_error} in the Main Paper}
\label{exp3}

In this experiment, we compare the runtime and runtime versus error trade-off among Vanilla, RC, BLESS, and our method SA in Figure~\ref{time_error}, under the $3$-d bimodal setting ($\gamma=0.4$) using the \matern kernel ($\nu=1.5$). 
Specifically, the bimodal distribution has two components: with probability $\frac{n}{n+n^\gamma}$ generating a Unif$[0, 1]^3$; and with probability $\frac{n^\gamma}{n+n^\gamma}$ generating a random variable 
with pdf $\prod_{j=1}^3 (5 - 2x_j)$ for $x_j \in [2, 2.5]$, where $n$ is the sample size.

The sample size $n$ ranges from $2,000$ to $500,000$.
In particular, the target function is set as $f^*(x) = g(\|x\|_2/d)$ with 
$g(x) = 1.6 |(x - 0.4) (x - 0.6)| - x (x-1) (x-2) - 0.5$, and i.i.d.~noises follow $\mathcal{N}(0, 0.25)$; 
regularization parameter $\lambda$ is set as $0.075 \cdot n^{-2/3}$, and the bandwidth for Gaussian kernel density estimator is $0.15 n^{-1/7}$. 
The KDE estimator allows a $0.15$ relative error.
The projection dimension for all the methods is set as $5 \cdot n^{1/3}$, while the sub-sampling size $s$ for all the iteration-based \nystrom methods listed is chosen as $1 \cdot n^{1/3}$ due to high time complexity. 
All the results reported in Figure~\ref{time_error} are averaged over 30 replicates.

\subsection{Experiment Settings in Table~\ref{tab:lvg_approx} in the Main Paper}
\label{exp2}

Each method above is run on the \texttt{RadiusQueriesCount} \citep{savva2018explaining, anagnostopoulos2018scalable}(denoted by RQP), 
\texttt{HTRU2} \citep{lyon2016fifty}, 
and \texttt{CCPP} \citep{tufekci2014prediction, tufekci2012local} datasets downloaded from the UCI ML Repository \citep{Dua:2019}. 
Those datasets contain 10000, 17898, and 9568 data points, with 3, 8, and 5 features respectively.
The smoothness parameter of \matern kernel is set as $\nu = 0.5$, and $\alpha \defeq \nu + \frac{d}{2} = \frac{d}{2} + 0.5$.
% and the bandwidth of the exponential kernel as $1.5 n^{-\frac{1}{3}}$ for the Beta(15, 2) distribution and $2.5 n^{-\frac{1}{3}}$ for the bimodal distribution. 
The regularization parameter $\lambda$ is set as $0.15 \cdot n^{-\frac{2\alpha}{2\alpha+d}}$.
To attain the optimal error rate, the projection dimension of all methods $\floor{2 \cdot n^{\frac{d}{2\alpha+d}}}$;
while the sub-sample size for estimating the statistical leverage scores in RC and BLESS is set as $\floor{1 \cdot n^{\frac{d}{2\alpha+d}}}$.
We still use kernel density estimator to gain density estimation, and the detailed setting of this estimator is almost the same as the last experiment, using Gaussian kernel and the bandwidth $0.5 \cdot n^{-\frac{1}{3}}$. 
All the results reported in Table~\ref{tab:lvg_approx} are averaged over 10 replicates.

\subsection{Experiment Settings in Figure~\ref{lvg} in the Main Paper}

We ran the experiments on the one-dimensional (for the ease of visualization) Unif$[0, 1]$, Beta$(15, 2)$, and a bimodal distribution, as before, with two components: with probability $\frac{n}{n+n^\gamma}$ generating a Unif$[0, 0.5]$; 
and with probability $\frac{n^\gamma}{n+n^\gamma}$ generating a random variable with pdf $(3 - 2x)$ for $x\in[1, 1.5]$, where $n$ is the sample size and $\gamma=0.6$.
In addition, the \matern kernel with smoothness parameter $\nu = 1.5$ is used, and density estimation is performed by a tree-based kernel density estimator.
The number of observations varies from $n=200$ to $10,000$. 
The regularization parameter of the KRR is set as $\lambda = 0.45 \cdot n^{-0.8}$.

A Gaussian kernel is used for density estimation, and the bandwidth is set to $1 \cdot n^{-0.2}$ for Uniform$[0, 1]$ and $0.3 \cdot n^{-1/3}$ for the rest two distributions.
Also, we allow a $0.05$ relative error tolerance for density estimation since highly accurate density estimation is not required for \nystrom methods (cf. Section~\ref{sec:density_estimation}).
While implementing our algorithm, we also apply an ad-hoc modification to avoid the potential instability with a small density value $p(x_i)$, as mentioned in Section~\ref{sec:heuristic} in the main paper.
Particularly, in the case of Beta distribution, if the density of point $x_i$ is smaller than a threshold $h = 0.3 \cdot n^{-0.8}$, 
a weighted average $\frac{0.5h + p(x_i)}{1.5}$ would be used for the subsequent leverage score approximation. 

In Figure~\ref{lvg}, we show our method provides reasonably good approximations to the rescaled leverage scores across all settings. 
In particular, Unif$[0,1]$ is the easiest case (red curves) due to its flat density, which meets Assumption~\ref{a.1} and~\ref{a.2} for almost all design points;
while for points with low density, such as those in the smaller cluster of the bimodal distribution and close to the boundary of Beta$(15, 2)$, the absolute error tends to be large due to the leading constant $C_{x_0}$ in the error bound in Theorem~\ref{thm:main_result}.
Moreover, the relative approximation error has a clear tendency of decreasing as the sample size increases, which is also consistent with our theory.

\subsection{The Additional Experiment for Gaussian Kernels}

To show that our proposed method can also be extended to more kernels other than \matern kernels, 
in this subsection we compare the in-sample prediction error among the methods above in Figure~\ref{gauss_error}, under a dimension-increasing setting ($d=3, 10, 30$ respectively) using a Gaussian kernel with bandwidth $\sigma = 1.5 n^{-\frac{1}{2d+3}}$. 
We still use a bimodal distribution similar to the above one: ($\gamma=0.4$) with probability $\frac{n}{n+n^\gamma}$ generating a Unif$[0, 1]^d$; and with probability $\frac{n^\gamma}{n+n^\gamma}$ generating a random variable 
with pdf $\prod_{j=1}^d (7 - 2x_j)$ for $x_j \in [3, 3.5]$, where $n$ is the sample size.

The sample size $n$ ranges from $1000$ to $100, 000$.
In particular, the target function is set as $f^*(x) = g(\|x\|_2/d) + g(x_1)$ ($x_1$ is the first element of $x$) with 
$g(x) = 1.6 |(x - 0.4) (x - 0.6)| - x (x-1) (x-2) - 0.5$, and i.i.d.~noises follow $\mathcal{N}(0, 0.25)$, which is the same as before; 
regularization parameter $\lambda$ is set as $0.075 \cdot n^{-\frac{d+3}{2d+3}}$, and the bandwidth for the used Gaussian kernel density estimator is tuned for different dimension since when $d$ is large, the density estimation will greatly fluctuate with the size of bandwidth. 
The projection dimension for all the methods is set as $5 \cdot n^{\frac{d}{2d+3}}$, while the sub-sampling size $s$ for all the iteration-based \nystrom  methods listed is chosen as $1 \cdot n^{\frac{d}{2d+3}}$ due to high time complexity. 
All the results reported in Figure~\ref{time_error} are averaged over 20 replicates.
% -------------------------------------
\begin{figure}[h]
\vspace{.3in}
\centerline{\includegraphics[width=\textwidth]{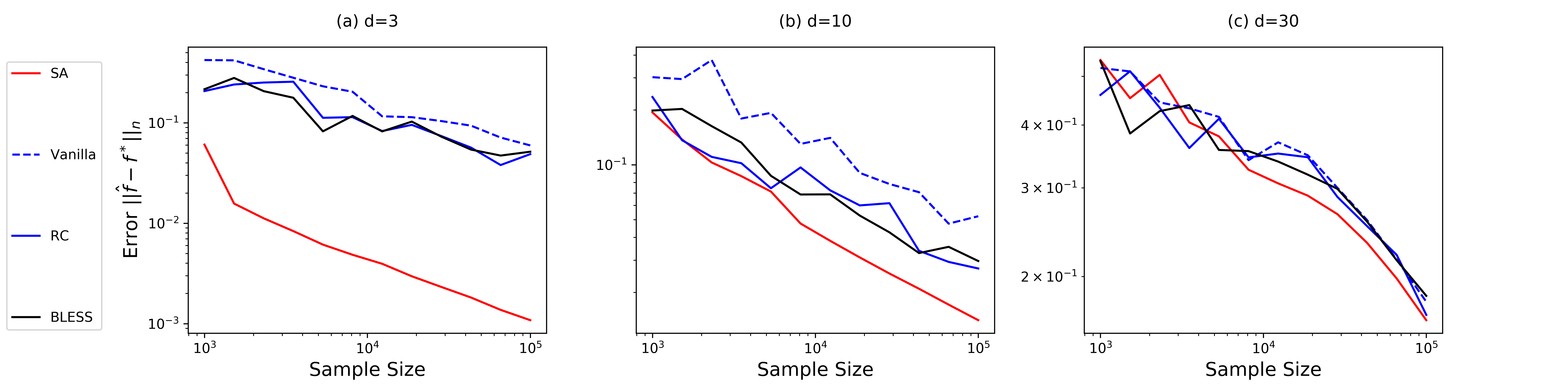}}
\vspace{.3in}
\caption{In-sample prediction error for Gaussian kernels with increasing dimension.}
\label{gauss_error}
\end{figure}
% -------------------------------------
From Figure~\ref{gauss_error}, we observe when $d$ increases, 
all the leverage-based methods will be no longer significantly better than vanilla uniform sampling,
and the in-sample prediction error becomes orders of magnitude larger.
We remark here that an increasing $d$ indeed theoretically violates the assumption of kernel methods on the dimension. 
For the bad performance of KRR, we conjecture that is because in a high dimensional space the input samples get sparser (regarding the Euclidean distance), 
and thus roughly speaking for a certain sample with high density it is also hard to find some points around the sample, which is similar to the case for samples with low density.

%%%%%%%%%%%%%%%%%%%%%%%%%%%%%%%%%%%%%%%%%%%%%%%%%%%%%%%%%%%%%%%%%%%%
%%%%%%%%%%%%%%%%%%%%%%%%%%%%%%%%%%%%%%%%%%%%%%%%%%%%%%%%%%%%%%%%%%%%

\section{APPROXIMATION PROPERTIES}
\label{sec:ek_ppt}

In this section, we prove some useful properties of our equivalent kernel approximation introduced by \matern kernels.
Some parts of the proof rely on the isotropy of the stationary kernels. 
Since the isotropy is a property shared by most common stationary kernels, the proof is expected to be applied to other stationary kernels as well. 
For the reader's convenience, we also prove the corresponding lemmas for Gaussian kernels in Appendix~\ref{Sec:EK_property_Gaussian}.
The proof strategy across the section is as follows, we first focus on one-dimensional cases and utilize the results to prove the conclusion for general multivariate approximation.

\subsection{\matern Kernel}
For simplicity, we ignore some constants (such as $p(x_0)$ that does not change the local shape and scale of $\wt K_\lambda(\cdot,\,x_0)$) and instead consider the rescaled leverage approximation specified by
% -------------------------------------
\begin{align} \label{Eqn:spectral_rep}
\widetilde K_\lambda(x,\,y) = \widetilde K_\lambda(x-y) = \int_{\mb R^d} \frac{e^{2\pi \sqrt{-1} \dotp{s}{x-y}}}{1+\lambda\,(1+\|s\|^2)^{\alpha}}\,ds=\int_{\mb R^d} \frac{\cos(2\pi \dotp{s}{x-y})}{1+\lambda\,(1+\|s\|^2)^{\alpha}}\,ds,
\end{align}
% -------------------------------------
where $\lambda=h^{2\alpha}$.
By the inverse Fourier transform, we have
\begin{align}\label{Eqn:inverse}
f_\lambda(s)=\frac{1}{1+\lambda\,(1+\|s\|^2)^{\alpha}} = \int_{\mb R^d} \widetilde K_\lambda(u)\,e^{-2 \pi \sqrt{-1} \dotp{s}{u}}\,du.
\end{align}

\begin{lemma}\label{Lem:EqKernel_property_matern}
When $2 \alpha = 2 \nu + d \geq d+1$ is an integer, we have:
\begin{enumerate}
\item $\|\widetilde K_\lambda\|_\infty \lesssim h^{-d}$;
\item There exists some constants $C_2>0$ such that
\begin{align*}
|D^{\mathbf j} \wt K_\lambda(x,\,y)|\leq (h^{-|\mathbf j|-d})\,e^{-C_2\,\|x-y\|/h}, \quad |\mathbf j| = 0, 1, \dots, d.
\end{align*}
\label{kernelScale}
\end{enumerate}
\end{lemma}

% \subsection{Proof of Lemma~\ref{Lem:EqKernel_property}}
\begin{proof}

% \textbf{Univariate case.} 
We start with the proof for the univariate case. 
From equation~\eqref{Eqn:spectral_rep}, we have
\begin{align*}
\|\widetilde K_\lambda\|_\infty \leq \int_{-\infty}^\infty \frac{1}{1+\lambda\,(1+s^2)^{\alpha}}\,ds\leq \int_{-\infty}^\infty \frac{1}{1+\lambda s^{2\alpha}}\,ds \lesssim \lambda^{-1/(2\alpha)} = h^{-1},
\end{align*}
which is the first claimed property.
 
To prove the second property, we will apply the residue theorem to the following function 
\begin{align*}
g(z) = \frac{e^{2\pi \sqrt{-1} |u|z}}{1+h^{2\alpha}\,(1+z^2)^{\alpha}}, \quad z\in \mathbb C,
\end{align*}
which is holomorphic on $\mathbb C\setminus\{z_1,\ldots,z_{2\alpha}\}$, where $z_1,\ldots,z_{2\alpha}$ are the $2\alpha$ roots to the equation
\begin{align*}
1+h^{2\alpha}\,(1+z^2)^{\alpha}=0.
\end{align*}
Therefore, $z_{2k-1}$ and $z_{2k}$, for $k=1,\ldots,\alpha$, are the two roots of the equation
\begin{align*}
z^2=h^{-2}\,e^{\sqrt{-1}\frac{2k-1}{\alpha}\pi} -1,
\end{align*}
and $z_{2k-1}=-z_{2k}$. Without loss of generality, we assume $\operatorname{Im}(z_{2k-1}) > 0$.
Direct calculations show that $|\operatorname{Im}(z_{2k-1})| \gtrsim h^{-1}$ and $|z_{2k-1}|\lesssim h^{-1}$ for each $k=1,\ldots,\alpha$.
Now we apply the residue theorem to the following contour integral
\begin{align*}
\int_C g(z) \,dz= \int_C\frac{e^{2\pi \sqrt{-1} |u|z}}{1+h^{2\alpha}\,(1+z^2)^{\alpha}}\,dz,
\end{align*}
where the contour $C$ goes along the real line from $-R$ to $R$ and then counter-clockwise along a semicircle centering at $0$ from $R$ to $-R$, for some sufficiently large constant $R>0$. The residue theorem implies
\begin{align*}
\int_C g(z) \,dz = 2\pi \sqrt{-1} \sum_{k=1}^\alpha \frac{e^{2\pi \sqrt{-1} |u|z_{2k-1}}}{2 \alpha h^{2\alpha} (1+z_{2k-1}^2)^{\alpha-1} z_{2k-1}},
\end{align*}
where we have used the fact that $\{z_{2k-1}\}_{k=1}^\alpha$ are the singularity points inside the contour $C$.
Since $1+h^{2\alpha}\,(1+z_{2k-1}^2)^{\alpha}=0$, the above can be further simplified into 
\begin{align*}
\int_C g(z) \,dz = -\pi \sqrt{-1} \sum_{k=1}^\alpha \frac{e^{2\pi \sqrt{-1} |u|z_{2k-1}}(1+z_{2k-1}^2)}{\alpha z_{2k-1}}.
\end{align*}
Due to the aforementioned properties that $|\operatorname{Im}(z_{2k-1})| \gtrsim h^{-1}$ and $|z_{2k-1}|\lesssim h^{-1}$, we have 
\begin{align*}
\Big|\int_C g(z) \,dz\Big| \lesssim (h + h^{-1})\,e^{-C|u|/h}.
\end{align*}
Finally, we can split the contour $C$ into a straight part (real line) and a curved arc, so that
\begin{align*}
\int_C g(z) \,dz=\int_{(-R,R)} g(z) \,dz + \int_{\text{arc}} g(z) \,dz,
\end{align*}
where the arc part satisfies
\begin{align*}
\Big|\int_{\text{arc}} g(z) \,dz\Big| \leq \pi R\cdot\sup_{\text{arc}}\Big|\frac{e^{2\pi \sqrt{-1} |u|z}}{1+h^{2\alpha}(1+z^2)^\alpha}\Big|\leq \frac{\pi R}{h^{2\alpha} (R^{2}-1)^{\alpha}-1}.
\end{align*}
By taking $R\to\infty$ (note that $\alpha>1/2$) and putting all pieces together, we finally reach
\begin{align*}
|\widetilde K_\lambda(x-y)|=\Big|\int_{-\infty}^\infty \frac{e^{2\pi \sqrt{-1} s(x-y)}}{1+\lambda\,(1+s^2)^{\alpha}}\,ds\Big|  \lesssim (h+h^{-1})\,e^{-C|x-y|/h},
\end{align*}
which is part of the second desired property.

To complete the proof of the second property, we still need to bound the derivative of the equivalent kernel. 
Recall the differentiation property of Fourier transform, and $\ms F[\widetilde K'_\lambda]$ could be written as:
% need to write down the Fourier transform of $\widetilde K_\lambda'(x)$ to the procedure above. 
% Ignore the constant, the derivative's transform should be of the form:
\begin{align*}
\ms F[\widetilde K'_\lambda] = \frac{2 \pi \sqrt{-1} s}{1 + h^{2\alpha}\,(1+s^2)^{\alpha}}
\end{align*}
Following a similar way, we can accordingly reset function $g$ as:
\begin{align*}
g(z) = \frac{e^{2\pi \sqrt{-1} |u| z} 2 \pi \sqrt{-1} z}{1+h^{2\alpha}\,(1+z^2)^{\alpha}}, \quad z\in \mathbb C,
\end{align*}
and by the same procedure obtain the following equality
\begin{align*}
\int_C g(z) \,dz = 4 \pi^2 \sum_{k=1}^\alpha \frac{e^{2 \pi \sqrt{-1} |u| z_{2k-1}}(1+z_{2k-1}^2) z_{2k-1}}{2 \alpha z_{2k-1}} 
= 2 \frac{\pi^2}{\alpha} \sum_{k=1}^\alpha e^{2\pi \sqrt{-1} |u|z_{2k-1}}(1+z_{2k-1}^2).
\end{align*}

% {\color{red} Upd 0804:}

As for the integral over the arc part, its value is still negligible due to a finer control.
Note over the arc, $z = R \cos(\theta) + \sqrt{-1} R \sin(\theta), \theta \in [0, \pi]$, 
and
\begin{align*}
&\Big|\int_{\text{arc}} g(z) \,dz\Big| \\
&\qquad = \Big| 2 \pi \sqrt{-1} \int_0^\pi e^{-2 \pi |u| R \sin(\theta)} \frac{e^{2 \pi \sqrt{-1} |u| R \cos(\theta)} z(\theta)}{1+h^{2\alpha}\,(1+z^2(\theta))^{\alpha}} (-R \sin(\theta) + \sqrt{-1} R \cos(\theta)) \dd \theta \Big| \\
&\qquad \leq \frac{2 \pi^2 R^2}{h^{2 \alpha} (R^{2} - 1)^{\alpha} - 1} 
    \cdot 2 \int_0^{\frac{\pi}{2}} e^{-2 \pi |u| R \sin(\theta)} \dd \theta.
\end{align*}
To bound the rest integral, we utilize the fact that $\sin(\theta) / \theta$ is decreasing in $(0, \pi/2]$, and $\sin(\theta) \geq \frac{2}{\pi} \theta, \forall \theta \in (0, \pi/2]$.
Therefore,
\begin{align*}
\Big|\int_{\text{arc}} g(z) \,dz\Big| 
\leq \frac{2 \pi^2 R^2}{h^{2 \alpha} (R^{2} - 1)^{\alpha} - 1} 
    \cdot 2 \int_0^{\frac{\pi}{2}} e^{-4 |u| R \theta} \dd \theta 
\leq \frac{C}{|u|} \frac{\pi^2 R}{h^{2 \alpha} (R^{2} - 1)^{\alpha} - 1}.
\end{align*}
By taking $R \to \infty$ (note that $2 \alpha > d = 1$ here), the magnitude of the integral over the arc would vanish. 

Putting all pieces together, we finally reach
\begin{align*}
|\widetilde K_\lambda'(x-y)|=\Big|\int_{-\infty}^\infty \frac{e^{2\pi \sqrt{-1} s (x-y)} (2 \pi \sqrt{-1} s)}{1+\lambda\,(1+s^2)^{\alpha}}\,ds\Big|  \lesssim (1+h^{-2})\,e^{-C|x-y|/h},
\end{align*}
which complete the proof of the second property for univariate cases.

% \textbf{Multivaiate case.}
The proof for the multivariate claim will utilize the univariate conclusion before.
By using the polar coordinate transform in Appendix~\ref{sec:polar}, we can reduce the original multivariate integral to a univariate one (cf. equation~(\ref{eqn:mv_integral})).
The rescaled leverage $\wt K_\lambda(0)$ would be proportional to:
\begin{align*}
\int_0^\infty \frac{r^{d-1}}{1 + \lambda(1+r^2)^\alpha} \dd r.
\end{align*}
which is of the scale $h^{-d}$ by using the same technique as before.
Therefore the first claim in this lemma has been proved.

% {\color{red} Upd 0730:}

For the second claim, we would heavily utilize the isotropy trick to simplify the proof.
By the isotropy of \matern kernels and our $\wt K_\lambda(u)$, we only need to consider a special input $\wt u = (\|u\|, 0, \dots, 0)$.
That is motivated by the observation that we can always do the coordinate transformation $s = T \cdot t$, where $T$ is an orthogonal matrix and its first row $T_{1,\cdot} = u / \|u\|$.
In that case, the original mixed derivative $D^{\mathbf j} \wt K_\lambda(u)$ could be expressed as 
\begin{align*}
\int_{\mb R^d} \frac{e^{2\pi \sqrt{-1} \dotp{u}{s}}}{1+h^{2\alpha}\,(1 + \|s\|^2)^{\alpha}} \prod_{i=1}^d (2\pi \sqrt{-1} s_i)^{\mathbf j_i} \dd s
= \int_{\mb R^d} \frac{e^{2\pi \sqrt{-1} \|u\| t_1}}{1+h^{2\alpha}\,(1 + \|t\|^2)^{\alpha}} \prod_{i=1}^d (2\pi \sqrt{-1} \dotp{T_{i,\cdot}}{t})^{\mathbf j_i} \dd t,
\end{align*}
which is of the same scale as $\max_{|\mathbf j'|=|\mathbf j|} |D^{\mathbf j'} \wt K_\lambda(\wt u)|$.
Under the settings above, the target considered would be reduced to
\begin{align*}
& \big| \int_{\mb R^d} \frac{e^{2\pi \sqrt{-1} \|u\| s_1}}{1+h^{2\alpha}\,(1 + \sum_{i=1}^{d-1} s_i^2 + s_d^2)^{\alpha}} \prod_{i=1}^d (s_i)^{\mathbf j_i} \dd s \big| \\
=& \big| \int_{\mb R^{d-1}} \prod_{i=2}^d (s_i)^{\mathbf j_i} \int_{-\infty}^\infty \frac{e^{2\pi \sqrt{-1} \|u\| s_1} s_1^{\mathbf j_1}}{1+h^{2\alpha}\,(1 + \|s_{-1}\|^2 + s_1^2)^{\alpha}}  \dd s_1 \dd s_{-1} \big| \\
\leq& \int_{\mb R^{d-1}} \prod_{i=2}^d |s_i|^{\mathbf j_i} \big| \int_{-\infty}^\infty \frac{e^{2 \pi \sqrt{-1} \|u\| s_1} s_1^{\mathbf j_1}}{1+h^{2\alpha}\,(1 + \|s_{-1}\|^2 + s_1^2)^{\alpha}} \dd s_1 \big| \dd s_{-1}.
\end{align*}

The next important step is to take the expression $1 + \|s_{-1}\|^2$ as a constant, and again apply the residue theorem to bound the internal integral as
\begin{align*}
& \big| \frac{\pi \sqrt{-1}}{\alpha} \sum_{k=1}^\alpha e^{2 \pi \sqrt{-1} \|u\| z_{2k-1}}(1 + z_{2k-1}^2) (z_{2k-1})^{\mathbf j_1 - 1} \big| \\
=& \big| \frac{\pi}{\alpha} \sum_{k=1}^\alpha e^{2 \pi \sqrt{-1} \|u\| z_{2k-1}} h^{-2} e^{\sqrt{-1} \frac{2k-1}{\alpha}\pi} (z_{2k-1})^{\mathbf j_1 - 1} \big|,
\end{align*}
where $z_{2k-1} \defeq a_k + \sqrt{-1} \cdot b_k$, and if we denote $\theta_k \defeq \frac{2k-1}{\alpha}\pi$, 
\begin{align*}
a_k^2 + b_k^2 &= |z_{2k-1}^2| = (h^{-4} \sin^2(\theta_k) + (h^{-2} \cos(\theta_k) - \|s_{-1}\|^2 - 1)^2)^{\frac 12}, \\
2 b_k^2 &= (a_k^2 + b_k^2) - (a_k^2 - b_k^2) \\
&= (h^{-4} \sin^2(\theta_k) + (h^{-2} \cos(\theta_k) - \|s_{-1}\|^2 - 1)^2)^{\frac 12} - (h^{-2} \cos(\theta_k) - \|s_{-1}\|^2 - 1) \\
&= \frac{h^{-4} \sin^2(\theta_k)}{(h^{-4} \sin^2(\theta_k) + (h^{-2} \cos(\theta_k) - \|s_{-1}\|^2 - 1)^2)^{\frac 12} + (h^{-2} \cos(\theta_k) - \|s_{-1}\|^2 - 1)}
\end{align*}
Note this time the magnitude of $\|s_{-1}\|$ matters a lot, and we need to divide the outside integral into two domains, $D_1 \defeq \{|h^{-2} \cos(\theta_k) - \|s_{-1}\|^2 - 1| \leq 3 h^{-2} |\cos(\theta_k)|\}$ and $D_2 \defeq \{|h^{-2} \cos(\theta_k) - \|s_{-1}\|^2 - 1| > 3 h^{-2} |\cos(\theta_k)|\}$.

Over domain $D_1$, we can imply $\|s_{-1}\|^2 \leq 4 h^{-2}$, and similar to the univariate case we can have $b_k \gtrsim h^{-1}$ and $|z_{2k-1}| = \sqrt{a_k^2 + b_k^2} = \Theta(h^{-1})$.
The corresponding integral would be bounded by a constant multiple of
\begin{align*}
& \int_{D_1} \prod_{i=2}^d |s_i|^{\mathbf j_i} h^{-2} e^{-C \|u\| b_k} (z_{2k-1})^{\mathbf j_1 - 1} \dd s_{-1} \\
\lesssim& \int_{\|s_{-1}\|^2 \leq 4 h^{-2}} \|s_{-1}\|^{|\mathbf j|-\mathbf j_1} h^{-2} e^{-C \|u\| h^{-1}} h^{-(\mathbf j_1 - 1)} \dd s_{-1} \\
\lesssim& e^{-C \|u\| h^{-1}} h^{-(\mathbf j_1 + 1)} \int_0^{2 h^{-1}} r^{|\mathbf j| - \mathbf j_1 + d - 2} \dd r
\lesssim e^{-C \|u\| h^{-1}} h^{-|\mathbf j| - d}.
\end{align*}

For domain $D_2$, we could notice $b_k$ would be much bigger than in $D_1$ as now $\|s_{-1}\|$ tends to dominate $h^{-1}$.
Specifically, $2 b_k^2 \geq 1+\|s_{-1}\|^2 + h^{-2} (1-\cos(\theta_k))$ and thus $b_k \geq C (\|s_{-1}\| + h^{-1})$. Considering $|z_{2k-1}| = \Theta(\|s_{-1}\|)$, we have
\begin{align*}
& \int_{D_2} \prod_{i=2}^d |s_i|^{\mathbf j_i} h^{-2} e^{-C \|u\| b_k} (z_{2k-1})^{\mathbf j_1 - 1} \dd s_{-1} \\
\lesssim& h^{-2} e^{-C \|u\| h^{-1}} \int_{\|s_{-1}\|^2 > c^2 h^{-2}} \|s_{-1}\|^{|\mathbf j|-\mathbf j_1} e^{-C \|u\| \|s_{-1}\|} \|s_{-1}\|^{\mathbf j_1 - 1} \dd s_{-1} \\
\lesssim& h^{-2} e^{-C \|u\| h^{-1}} \int_{c h^{-1}}^\infty r^{|\mathbf j| + d - 3} e^{-C' \|u\| r} \dd r
\lesssim h^{-2} e^{-C \|u\| h^{-1}} h^{- |\mathbf j| - d + 3} e^{-C' \|u\| h^{-1}} \\
\lesssim& e^{-C \|u\| h^{-1}} h^{- |\mathbf j| - d + 1} 
\lesssim e^{-C \|u\| h^{-1}} h^{- |\mathbf j| - d}.
\end{align*}

Combining the two pieces, we finally have
\begin{align*}
|D^{\mathbf j} \wt K_\lambda(u)| &\lesssim e^{-C \|u\| h^{-1}} h^{- |\mathbf j| - d}.
\end{align*}

\end{proof}

\subsection{Gaussian Kernel}
\label{Sec:EK_property_Gaussian}

Similarly, we specify the equivalent kernel as
\begin{align} \label{Eqn:spectral_rep_exp}
\widetilde K_\lambda(x,\,y) = \int_{\mb R^d} \frac{e^{2\pi \sqrt{-1} \dotp{s}{x-y}}}{1 + \frac{\lambda}{\sigma^d} \, e^{\|\sigma s\|^2}}\,ds = \int_{\mb R^d} \frac{\cos(2\pi \dotp{s}{x-y})}{1 + \frac{\lambda}{\sigma^d} \, e^{\|\sigma s\|^2}}\,ds,
\end{align}
where the bandwidth $\sigma$ is introduced due to its importance to exponential kernels. 
The scale of the bandwidth is set as $\m O(\lambda^{\frac{1}{2 \alpha}})$, where $\alpha$ is the corresponding parameter of the most suitable \matern kernel attaining the optimal error rate in kernel ridge regression problems.
We also define an auxiliary parameter $h$ as $ h^{-2} \equiv \ln{\frac{\sigma^d}{\lambda}}$ for simplicity of notation.
In practice, $\sigma$ would be specified in a way similar to \matern case;
a parameter $\alpha > d/2$ would be first chosen, and then let $\sigma = O(\lambda^{\frac{1}{2 \alpha}}) \to 0$, which implies $h$ here has the magnitude $O(\log^{-\frac{1}{2}}(n))$.
In the following lemma, we will again start from a univariate case.

\begin{lemma}\label{Lem:EqKernel_property_gaussian}
	Given the equivalent kernel introduced above, we have ($\wt {\m O}(\cdot)$ means $\m O(\cdot)$ modulo poly-log terms):
	\begin{enumerate}
		\item $\|\widetilde K_\lambda\|_\infty \lesssim \sigma^{-d} h^{-d} = \wt {\m O} (\sigma^{-d})$;
		% \item Fix $|x-y| > 0$, and there exists some constants $C_3>0$ such that $|\widetilde K_\lambda(x,\,y)| \leq \sigma^{-1} h \,e^{-C_3 \,|x-y|/ (\sigma / h)} = \wt {\m O} (\sigma^{-1} e^{-C_3 \,|x-y|/ (\sigma / h)}) $; also, $|\widetilde K'_\lambda(x,\,y)| \leq \sigma^{-2} \,e^{-C_3 \,|x-y|/ (\sigma / h)}$.
        \item There exists some constant $C_3>0$ such that for $|\mathbf j| \leq d$,
        \begin{align*}
        |D^{\mathbf j} \wt K_\lambda(x,\,y)| \lesssim (\sigma h)^{-|\mathbf j|-d} \, e^{-C_3 \,|x-y| \sigma^{-1} h} = \wt {\m O} (\sigma^{-|\mathbf j|-d} e^{-C_3 \,|x-y| \sigma^{-1} h}).
        \end{align*}
		\label{kernelScale_exp}
	\end{enumerate}
\end{lemma}
% -------------------------------------
\begin{proof}
	
% \textbf{Univariate cases.}
Again we begin with univariate cases. From equation~\eqref{Eqn:spectral_rep_exp}, we have
\begin{align*}
\|\widetilde K_\lambda\|_\infty 
\leq \int_{-\infty}^\infty \frac{1}{1 + \frac{\lambda}{\sigma} \, e^{(\sigma s)^2}}\,ds 
\leq 2 \Big[\int_{0}^{\sigma^{-1} h^{-1}} \frac{1}{1+0} \,ds + 
    \int_{\sigma^{-1} h^{-1}}^{\infty} \frac{1}{\frac{\lambda}{\sigma} \, e^{(\sigma s)^2}} \,ds \Big]
\end{align*}
The integral is divided into two parts in which $1$ and $\frac{\lambda}{\sigma}\,e^{({\sigma}{s})^2}$ dominate respectively.
What's more, applying the property of error function that $\int_x^{\infty} e^{-t^2} \,dt = \m O(\frac{e^{-x^2}}{x})$ (it holds when $x$ is large enough), we can further obtain
\begin{align*}
\|\widetilde K_\lambda\|_\infty  
\lesssim 2 \Big[\sigma^{-1} h^{-1} + \frac{1}{\lambda} h e^{-h^{-2}} \Big]
\lesssim \sigma^{-1} h^{-1}
\end{align*}
which is the first claimed property.

To prove the second one, we still apply the residue theorem to the following function ($|x-y|$ is denoted as $|u|$ for simplicity),
\begin{align*}
g(z) = \frac{e^{2\pi \sqrt{-1} |u| z}}{1 + \frac{\lambda}{\sigma} \, e^{(\sigma z)^2}}, \quad z\in \mathbb C,
\end{align*}
which is holomorphic on the complex plane except the roots $z_i, i \in \mathbb Z$ to the following equation:
\begin{align*}
1 + \frac{\lambda}{\sigma} \, e^{(\sigma z)^2} = 0
\end{align*}
Therefore, $z_{2k-1}$ and $z_{2k}$, for $k \in \mathbb Z$, are the two roots of the equation:
\begin{align*}
\sigma^2 z^2 =  (h^{-2} + \sqrt{-1} (2k-1) \pi)
\end{align*}
and $z_{2k-1} = -z_{2k}$. Without loss of generality, we assume $\operatorname{Im}(z_{2k-1}) > 0$.
Direct calculations roughly show that $|\operatorname{Im}(z_{2k-1})| \gtrsim \sigma^{-1}$ and $|z_{2k-1}| \lesssim \sigma^{-1}$ for each $k \in \mathbb Z$. Further analysis would be provided later.

Note we could only focus on the case $|u| > \sigma h^{-1}$, since otherwise 
\begin{align*}
|\widetilde K_\lambda(u)| \leq C (\sigma h)^{-1} \leq C \,e^{C_3} (\sigma h)^{-1} e^{-C_3 |u| \sigma^{-1} h},
\end{align*} 
and the claimed property would be proved automatically.
Now we apply the residue theorem to the following contour integral
\begin{align*}
\int_C g(z) \,dz= \int_C\frac{e^{2\pi \sqrt{-1} |u|z}}{1 + \frac{\lambda}{\sigma} \, e^{(\sigma z)^2}} \,dz,
\end{align*}
where the contour $C$ goes along the real line from $-R$ to $R$ and then counter-clockwise along a semicircle centering at $0$ from $R$ to $-R$, for some sufficiently large constant $R>0$. 
Denote the index set $A_R$ as the set of all the indices $k$ that the roots $\{z_{2k-1}\}_k$ are inside the contour $C$. The residue theorem implies
\begin{align*}
\int_C g(z) \,dz = 2\pi \sqrt{-1} \sum_{k \in A_R} \frac{e^{2\pi \sqrt{-1} |u| z_{2k-1}}}{2 \sigma \lambda e^{(\sigma z_{2k-1})^2} z_{2k-1}},
\end{align*}
Since $1 + \frac{\lambda}{\sigma} \, e^{(\sigma z)^2} = 0$, the above expression can be further simplified into 
\begin{align*}
\int_C g(z) \,dz = -\pi \sqrt{-1} \sum_{k \in A_R} \frac{e^{2 \pi \sqrt{-1} |u| z_{2k-1}}}{\sigma^2 z_{2k-1}}.
\end{align*}
Note the set $A_R$ is symmetric about $0$ and goes to $\mathbb Z$ as $R \to \infty$. We can first pair the opposite $k$ and denote $z_{2k-1} = a_k + b_k \sqrt{-1}, z_{1-2k} = -a_k + b_k \sqrt{-1}$ for convenience. In this case,
\begin{align*}
& \frac{e^{2\pi \sqrt{-1} |u| z_{2k-1}}}{\sigma^2 z_{2k-1}} + 
\frac{e^{2\pi \sqrt{-1} |u| z_{1-2k}}}{\sigma^2 z_{1-2k}} = 
\frac{e^{-2\pi |u| b_k}}{-\sigma^2 (a_k^2 + b_k^2)} 2 \sqrt{-1} (b_k \cos(2\pi|u|a_k) - a_k \sin(2\pi|u|a_k)) \\
=& \frac{2 \sqrt{-1} e^{-2\pi |u| b_k}}{-\sigma^2 \sqrt{a_k^2 + b_k^2}} \cos(2\pi|u|a_k 
+ \arctan(a_k/b_k))
\end{align*}
And hence the sequence of the integral over the semicircle $C$ with radius $R$ would converge to
\begin{align*}
\frac{-2 \pi}{\sigma^2} \sum_{k=1}^\infty \frac{e^{-2 \pi |u| b_k}}{\sqrt{a_k^2 + b_k^2}} \cos(2 \pi |u| a_k + \arctan(a_k/b_k))
\leq \frac{2 \pi}{\sigma^2} \sum_{k=1}^\infty \frac{e^{-2 \pi |u| b_k}}{\sqrt{a_k^2 + b_k^2}}	
\end{align*}

% Due to the aforementioned properties that $|\operatorname{Im}(z_{2k-1})| \gtrsim h^{-1}$ and $|z_{2k-1}|\lesssim h^{-1}$, we have 
% \begin{align*}
% \Big|\int_C g(z) \,dz\Big| \lesssim (h + h^{-1})\,e^{-C|u|/h}.
% \end{align*}

To further analyze the scale of the infinite series, we need to uncover the form of the coefficient $a_k, b_k$.
Recall $z_{2k-1}^2 = \frac{1}{\sigma^2} (\ln \frac{\sigma}{\lambda} + (2k-1) \pi)$, and the corresponding derivation is,
\begin{align*}
a_k^2 + b_k^2 &= \sigma^{-2} (\ln^2 \frac{\sigma}{\lambda} + ((2k-1) \pi)^2)^\frac{1}{2} \\
2 b_k^2 &= (a_k^2 + b_k^2) - (a_k^2 - b_k^2) = \sigma^{-2} \Big[ (\ln^2 \frac{\sigma}{\lambda} + ((2k-1) \pi)^2)^\frac{1}{2} - (\ln \frac{\sigma}{\lambda}) \Big]\\
&= \frac{\sigma^{-2} ((2k-1) \pi)^2}{(\ln^2 \frac{\sigma}{\lambda} + ((2k-1) \pi)^2)^\frac{1}{2} + (\ln \frac{\sigma}{\lambda})}
\end{align*}
Denote $H \equiv (\frac {h^{-2}}{\pi} + 1) / 2$. The curve of $a_k, b_k$ could be roughly divided into two stages, $k \leq \lf H \rf$ and $k \geq \lc H \rc$. In the first stage, $a_k^2 + b_k^2 \geq \sigma^{-2} \ln \frac{\sigma}{\lambda} = (\sigma h)^{-2}$ and $2 b_k^2 \geq \sigma^{-2} \frac{((2k-1) \pi)^2}{2 \ln \frac{\sigma}{\lambda}} = \sigma^{-2} \frac{((2k-1) \pi)^2}{2 h^{-2}}$; 
in the second stage, $a_k^2 + b_k^2 \geq \sqrt{2} \pi \sigma^{-2} (2k-1)$ and $2 b_k^2 \geq \sigma^{-2} \frac{((2k-1) \pi)^2}{3 \pi (2k-1)} = \sigma^{-2} \frac{(2k-1) \pi}{3}$.
The infinite series could be therefore bounded as
\begin{align*}
\frac{2 \pi}{\sigma^2} \sum_{k=1}^{\lf H \rf} \frac{e^{-2 \pi |u| b_k}}{\sqrt{a_k^2 + b_k^2}} &\lesssim \sigma^{-1} h \sum_{k=1}^{\lf H \rf} {e^{-C_3 |u| \sigma^{-1} h k}} \\
\frac{2 \pi}{\sigma^2} \sum_{k=\lc H \rc}^{\infty} \frac{e^{-2 \pi |u| b_k}}{\sqrt{a_k^2 + b_k^2}} &\lesssim \sigma^{-2} \sum_{k=\lc H \rc}^{\infty} \frac{e^{-C_3 |u| \sigma^{-1} \sqrt{k}}}{\sigma^{-1} \sqrt{k}}
\end{align*}
The two series above will converge rapidly when $n$ is large enough.
The scale of the two series above will therefore depend on their own first terms. 
The first series would be bounded as a constant multiple of $\sigma^{-1} h \cdot h^{-2} e^{-C_3 |u| \sigma^{-1} h \cdot 1} = (\sigma h)^{-1} e^{-C_3 |u| \sigma^{-1} h}$; due to the decreasing sequence, the last series could be bounded in the following way (note $|u| > \sigma h^{-1}$),    
\begin{align*}
\sigma^{-1} \sum_{k=\lc H \rc}^{\infty} \frac{e^{-C_3 |u| \sigma^{-1} \sqrt{k}}}{\sqrt{k}} 
&\leq \sigma^{-1} \Big[ \frac{e^{-C_3 |u| \sigma^{-1} \sqrt{\lc H \rc}}}{\sqrt{\lc H \rc}} + \int_{\lc H \rc}^{\infty} \frac{e^{-C_3 |u| \sigma^{-1} \sqrt{x}}}{\sqrt{x}} d\,x \Big] \\
&\leq \sigma^{-1} \Big[ h e^{-C_3 |u| (\sigma h)^{-1}} + \frac{2 \sigma}{C_3 |u|} e^{-C_3 |u| (\sigma h)^{-1}} \Big] \lesssim \sigma^{-1} h e^{-C_3 |u| (\sigma h)^{-1}}
\end{align*}
And hence the whole series would be bounded by $(\sigma h)^{-1} e^{-C_3 |u| \sigma^{-1} h}$.

Then, we split the contour $C$ into a straight part (real line) and a curved arc, so that
\begin{align*}
\int_C g(z) \,dz=\int_{(-R,R)} g(z) \,dz + \int_{\text{arc}} g(z) \,dz,
\end{align*}
where the arc part satisfies $z = R e^{\sqrt{-1} \theta}, \theta \in [0, \pi]$ and hence,
\begin{align*}
\Big|\int_{\text{arc}} g(z) \,dz\Big| = \Big|\int_0^\pi \frac{e^{2 \pi \sqrt{-1} |u| R e^{\sqrt{-1} \theta}}}{1 + \frac{\lambda}{\sigma} \, e^{\sigma^2 R^2 e^{2 \sqrt{-1} \theta}}} \sqrt{-1} R e^{\sqrt{-1} \theta} \dd \theta\Big|
\end{align*}
where the module of the integrand could be bounded by $\frac{e^{-2 \pi |u| R \sin \theta} R}{|1 - \frac{\lambda}{\sigma} \, e^{\sigma^2 R^2 \cos (2 \theta)}|}$.
By taking $R \to \infty$ and requiring $|u| > 0$, we could observe that when $\sin \theta$ is bounded away from $0$ the integrand is exponentially decaying;
when $\sin \theta$ is nearly zero $\cos{2 \theta} = 1 - 2 \sin^2 \theta \gg 0$ and the integrand would also go to $0$. 
That's to say, the whole integral $\int_{\text{arc}} g(z) \,dz \to 0$.
Putting all pieces together, we finally reach
\begin{align*}
|\widetilde K_\lambda(x-y)| = \int_{-\infty}^\infty \frac{e^{2\pi \sqrt{-1} s(x-y)}}{1 + \frac{\lambda}{\sigma} \, e^{(\sigma s)^2}}\,ds  
\lesssim \sigma^{-1} h e^{-C_3 |u| \sigma^{-1} h} + \sigma^{-1} h^{-1} e^{-C_3 |u| \sigma^{-1} h^{-1}},
\end{align*}
which is part of the second desired property and similar to the conclusion in (\ref{Lem:EqKernel_property_matern}).

To complete the proof of the second property, we still need to bound the derivative of the equivalent kernel. 
Recall the differentiation property of Fourier transform, and $\ms F[\widetilde K'_\lambda]$ could be written as:
% need to write down the Fourier transform of $\widetilde K_\lambda'(x)$ to the procedure above. 
% Ignore the constant, the derivative's tranform should be of the form:
\begin{align*}
\ms F[\widetilde K'_\lambda] = \frac{2 \pi \sqrt{-1} s}{1 + \frac{\lambda}{\sigma} \, e^{(\sigma s)^2}}.
\end{align*}
With the expression above we can bound the sup norm of the derivative, 
\begin{align*}
\|\widetilde K_\lambda'\|_\infty 
\leq 4 \pi \int_{0}^\infty \frac{s}{1 + \frac{\lambda}{\sigma} \, e^{(\sigma s)^2}}\,ds 
\leq 4 \pi \Big[\int_{0}^{\sigma^{-1} h^{-1}} \frac{s}{1+0} \,ds + 
\int_{\sigma^{-1} h^{-1}}^{\infty} \frac{s}{\frac{\lambda}{\sigma} \, e^{(\sigma s)^2}} \,ds \Big].
\end{align*}
The integral is divided into two parts in which $1$ and $\frac{\lambda}{\sigma}\,e^{(\frac{s}{\sigma})^2}$ dominate respectively.
We can further obtain
\begin{align*}
\|\widetilde K_\lambda'\|_\infty  
\lesssim \Big[(\sigma h)^{-2} + \frac{1}{2 \lambda \sigma} e^{-h^{-2}} \Big]
\lesssim (\sigma h)^{-2}
\end{align*}

To exactly analyze behavior of the derivative of equivalent kernels, we can accordingly reset function $g$ as:
\begin{align*}
g(z) = \frac{e^{2 \pi \sqrt{-1} |u| z} 2 \pi \sqrt{-1} z}{1 + \frac{\lambda}{\sigma} \, e^{(\sigma z)^2}}, \quad z\in \mathbb C,
\end{align*}
and by the same procedure obtain the following inequality:
% -------------------------------------
\begin{align*}
\int_C g(z) \,dz &= 2 \pi^2 \sqrt{-1} \sum_{k \in A_R} \frac{e^{2 \pi \sqrt{-1} |u| z_{2k-1}}}{\sigma^2} \\
&= \frac{2 \pi^2}{\sigma^2} \sum_{k=1}^{\infty} (e^{2 \pi \sqrt{-1} |u| (a_k + b_k \sqrt{-1})} + e^{2 \pi \sqrt{-1} |u| (-a_k + b_k \sqrt{-1})}) \\
&= \frac{4 \pi^2}{\sigma^2} \sum_{k=1}^{\infty} e^{-2 \pi |u| b_k} (\cos(2 \pi |u| a_k))
\leq \frac{4 \pi^2}{\sigma^2} \sum_{k=1}^{\infty} e^{-2 \pi |u| b_k}.
\end{align*}
% -------------------------------------
We would only focus on the case $|u| > (\sigma / h)^{-1}$, 
since otherwise $|\widetilde K'_\lambda(u)| \leq C (\sigma h)^{-2}$, which is bounded by $\leq C \,e^{C_3} (\sigma h)^{-2} e^{-C_3 |u| \sigma^{-1} h}$, 
and the claimed property would be proved automatically.	
Due to the aforementioned division of the series, we have (note $|u| > (\sigma / h)^{-1}$),
% -------------------------------------
\begin{align*}
\int_C g(z) \,dz &\leq \frac{4 \pi^2}{\sigma^2} (\sum_{k=1}^{\lf H \rf} e^{-2 \pi |u| b_k} + \sum_{k = \lc H \rc}^{\infty} e^{-2 \pi |u| b_k}) \\
&\lesssim \frac{4 \pi^2}{\sigma^2} \Big[ h^{-2} e^{-C_3 |u| \sigma^{-1} h} + e^{-C_3 |u| \sigma^{-1} h^{-1}} +  
\int_{k = \lc H \rc}^{\infty} e^{-C_3 |u| \sigma^{-1} \sqrt{k}} d\,k \Big] \\
& \lesssim \frac{1}{(\sigma h)^2} \Big[ e^{-C_3 |u| \sigma^{-1} h} + 
\frac{2 \sigma}{C_3 |u|} \Big( h^{-2} e^{-C_3 |u| \sigma^{-1} h^{-1}}
+ \frac{\sigma}{C_3 |u|} e^{-C_3 |u| \sigma^{-1} h^{-1}} \Big) \Big] \\
& \lesssim \frac{1}{(\sigma h)^2} e^{-C_3 |u| \sigma^{-1} h}.
\end{align*}
% -------------------------------------
As for the integral over the arc part, its value is still negligible as
\begin{align*}
\Big|\int_{\text{arc}} g(z) \,dz\Big| \leq \Big|\int_0^\pi \frac{e^{2 \pi \sqrt{-1} |u| R e^{\sqrt{-1} \theta}} 2 \pi \sqrt{-1} R e^{\sqrt{-1} \theta}}{1 + \frac{\lambda}{\sigma} \, e^{\sigma^2 R^2 e^{2 \sqrt{-1} \theta}}} R \,d \theta\Big| 
\end{align*}
where the module of the integrand could be bounded by $\frac{e^{-2 \pi |u| R \sin \theta} 2 \pi R^2}{|1 - \frac{\lambda}{\sigma} \, e^{\sigma^2 R^2 \cos (2 \theta)}|}$. The bound goes to $0$ when $R \to \infty$ and $|u| > 0$ are assumed as before, and the integral over the arc is again negligible.
Putting all pieces together, we finally reach
\begin{align*}
|\widetilde K'_\lambda(x-y)| 
= \Big|\int_{-\infty}^\infty \frac{e^{2 \pi \sqrt{-1} s (x-y)} (2 \pi \sqrt{-1} s)}{1 + \frac{\lambda}{\sigma} \, e^{(\sigma s)^2}}\,ds\Big|  
\lesssim (\sigma h)^{-2} \, e^{-C_3 |u| \sigma^{-1} h},
\end{align*}
which completes the proof of the second property in univariate cases.

For multivariate cases, again by applying polar coordinate transformation, 
the original integral could be bounded as
\begin{align*}
\|\wt K_\lambda\|_\infty &\lesssim \int_{0}^\infty \frac{r^{d-1}}{1 + \frac{\lambda}{\sigma^d} \, e^{(\sigma r)^2}}\,\dd r 
\leq \int_{0}^{\sigma^{-1} h^{-1}} \frac{r^{d-1}}{1+0} \dd r + \int_{\sigma^{-1} h^{-1}}^{\infty} \frac{r^{d-1}}{\frac{\lambda}{\sigma^d} e^{(\sigma r)^2}} \,dr \\
&\lesssim (\sigma h)^{-d} + \frac{1}{\lambda} \int_{h^{-1}}^{\infty} r^{d-1} e^{-r^{2}} \dd r
\lesssim (\sigma h)^{-d} - \frac{1}{\lambda} \int_{h^{-1}}^{\infty} r^{d-2} \dd e^{-r^{2}}.
\end{align*}
After repeatedly using integration by parts, the last term could be further bounded by $(\sigma h)^{-d} + \sigma^{-d} h^{-(d-2)}$,
which validates the first claim in this lemma.

% {\color{red} Upd 0802:}

For the second claim, we would still use the same strategy, utilizing the isotropy and some other tricks, as in the proof for \matern kernels. 
Specifically, we would focus on the special case
\begin{align*}
\big| \int_{\mb R^d} \frac{e^{2\pi \sqrt{-1} \|u\| s_1}}{1 + \frac{\lambda}{\sigma^d}\,e^{\sigma^2 (\|s_{-1}\|^2 + s_1^2)}} \prod_{i=1}^d s_i^{\mathbf j_i} \dd s \big|
\leq \int_{\mb R^{d-1}} \prod_{i=2}^d |s_i|^{\mathbf j_i} \cdot \big| \int_{-\infty}^\infty \frac{e^{2 \pi \sqrt{-1} \|u\| s_1} |s_1|^{\mathbf j_1}}{1 + \frac{\lambda}{\sigma^d}\,e^{\sigma^2 (\|s_{-1}\|^2 + s_1^2)}} \dd s_1 \big| \dd s_{-1},
\end{align*}
and define $h^{-2} \defeq \ln(\frac{\sigma^d}{\lambda})$, $t_s \defeq |h^{-2} - \sigma^2 \|s_{-1}\|^2|$, $\lambda \lesssim \sigma^d \lesssim h^{-d}$.
Again we divide the integral into two different domains, $D_1 \defeq \{\sigma^2 \|s_{-1}\|^2 < h^{-2}\}$ and $D_2 \defeq \{\sigma^2 \|s_{-1}\|^2 \geq h^{-2}\}$. 
Moreover, we apply residue theorem to the internal integral and similarly have
\begin{align*}
\big| \int_{-\infty}^\infty \frac{e^{2 \pi \sqrt{-1} \|u\| s_1} |s_1|^{\mathbf j_1}}{1 + \frac{\lambda}{\sigma^d}\,e^{\sigma^2 (\|s_{-1}\|^2 + s_1^2)}} \dd s_1 \big|
&= \big| 2 \pi \sqrt{-1} \sum_{k = -\infty}^{\infty} \frac{e^{2 \pi \sqrt{-1} \|u\| z_{2k-1}} (2 \pi \sqrt{-1} z_{2k-1})^{\mathbf j_1}}{-2 \sigma^2 z_{2k-1}} \big| \\
&\lesssim \sigma^{-2} \sum_{k = 1}^{\infty} e^{2 \pi \sqrt{-1} \|u\| z_{2k-1}} (z_{2k-1})^{\mathbf j_1 - 1},
\end{align*}
where $z_{2k-1} = a_k + \sqrt{-1} b_k$, and
\begin{align*}
a_k^2 + b_k^2 &= \sigma^{-2} (t_s^2 + ((2k-1) \pi)^2)^\frac{1}{2} \\
2 b_k^2 &= (a_k^2 + b_k^2) - (a_k^2 - b_k^2) = \sigma^{-2} \Big[ (t_s^2 + ((2k-1) \pi)^2)^\frac{1}{2} - t_s \Big]\\
&= \frac{\sigma^{-2} ((2k-1) \pi)^2}{(t_s^2 + ((2k-1) \pi)^2)^\frac{1}{2} + t_s}
\end{align*}

We begin with the first domain $D_1$, in which $\|s_{-1}\|^2 \leq (\sigma h)^{-2}$.
We need to set a threshold $H = (\frac{t_s}{\pi} + 1)/2$ for the index $k$.
When $k \leq \lf H \rf$, we have
\begin{align*}
2 b_k^2 \geq \sigma^{-2} \frac{(2k-1)^2 \pi^2}{3 t_s} &\Rightarrow b_k \gtrsim \sigma^{-1} \sqrt{t_s}^{-1} k \\
\sigma^{-2} t_s \leq a_k^2 + b_k^2 \leq \sqrt{2} \sigma^{-2} t_s &\Rightarrow |z_{2k-1}| = \Theta(\sigma^{-1} \sqrt{t_s});
\end{align*}
when $k \geq \lc H \rc$, we have
\begin{align*}
2 b_k^2 \geq \sigma^{-2} \frac{(2k-1)^2 \pi^2}{3 (2k-1) \pi} &\Rightarrow b_k \gtrsim \sigma^{-1} \sqrt{k} \\
\sigma^{-2} (2k-1) \pi \leq a_k^2 + b_k^2 \leq \sqrt{2} \sigma^{-2} (2k-1) \pi &\Rightarrow |z_{2k-1}| = \Theta(\sigma^{-1} \sqrt{k});
\end{align*}
and the series would be bounded by
\begin{align*}
\sigma^{-2} \sum_{k=1}^{\lf H \rf} e^{-2 \pi |u| b_k} |z_{2k-1}|^{\mathbf j_1 - 1} 
&\lesssim \sigma^{-2} (\sigma^{-1} \sqrt{t_s})^{\mathbf j_1 - 1} \sum_{k=1}^{\lf H \rf} {e^{-C_3 |u| \sigma^{-1} \sqrt{t_s}^{-1} k}} \\
&\lesssim \sigma^{-(\mathbf j_1 + 1)} \sqrt{t_s}^{\mathbf j_1 - 1} e^{-C_3 |u| \sigma^{-1} \sqrt{t_s}^{-1}} \\
\sigma^{-2} \sum_{k=\lc H \rc}^{\infty} e^{-2 \pi |u| b_k} |z_{2k-1}|^{\mathbf j_1 - 1} 
&\lesssim \sigma^{-2} \sum_{k=\lc H \rc}^{\infty} e^{-C_3 |u| \sigma^{-1} \sqrt{k}}(\sigma^{-1} \sqrt{k})^{\mathbf j_1 - 1} \\
&\lesssim \sigma^{-(\mathbf j_1 + 1)} (e^{-C_3 |u| \sigma^{-1} \sqrt{\lc H \rc}} \sqrt{\lc H \rc}^{\mathbf j_1 - 1} \\
&\quad + \int_{\lc H \rc}^{\infty} e^{-C_3 |u| \sigma^{-1} \sqrt{x}} \sqrt{x}^{\mathbf j_1 - 1} \dd x) \\
&\lesssim \sigma^{-(\mathbf j_1 + 1)} (e^{-C_3 |u| \sigma^{-1} \sqrt{\lc H \rc}} \sqrt{\lc H \rc}^{\mathbf j_1 - 1} \\
&\quad + 1/(C_3 |u| \sigma^{-1}) e^{-C_3 |u| \sigma^{-1} \sqrt{\lc H \rc}} \sqrt{\lc H \rc}^{\mathbf j_1}) \\
&\lesssim \sigma^{-(\mathbf j_1 + 1)} \sqrt{\lc H \rc}^{\mathbf j_1 - 1} e^{-C_3 |u| \sigma^{-1} \sqrt{\lc H \rc}}.
\end{align*}
We drop one term in the last line as $\sigma \sqrt{\lc H \rc} \leq \sigma h^{-1} \lesssim 1$, 
and note the first series would only appear when $t_s > \pi$, and $\lc H \rc = \Theta(\max(t_s/\pi, 1))$.

For the integral over the domain $D_1$, it would be bounded as
\begin{align*}
& \int_{D_1} \frac{e^{2\pi \sqrt{-1} \|u\| s_1}}{1 + \frac{\lambda}{\sigma^d}\,e^{\sigma^2 (\|s_{-1}\|^2 + s_1^2)}} \prod_{i=1}^d |s_i|^{\mathbf j_i} \dd s \\
\leq& \int_{\|s_{-1}\| \leq \frac{1}{\sigma h}} \prod_{i=2}^d |s_i|^{\mathbf j_i} \cdot \int_{-\infty}^\infty \frac{e^{2 \pi \sqrt{-1} \|u\| s_1} |s_1|^{\mathbf j_1}}{1 + \frac{\lambda}{\sigma^d}\,e^{\sigma^2 \|s_{-1}\|^2} e^{\sigma^2 s_1^2}} \dd s_1 \dd s_{-1} \\
\lesssim& \sigma^{-(\mathbf j_1 + 1)} \int_{\|s_{-1}\| \leq \frac{1}{\sigma h}} \|s_{-1}\|^{|\mathbf j| - {\mathbf j_1}} \cdot 
(\sqrt{t_s}^{\mathbf j_1 - 1} e^{-C_3 |u| \sigma^{-1} \sqrt{t_s}^{-1}} + \sqrt{\lc H \rc}^{\mathbf j_1 - 1} e^{-C_3 |u| \sigma^{-1} \sqrt{\lc H \rc}})\dd s_{-1} \\
\lesssim& \sigma^{-(\mathbf j_1 + 1)} \Big( \int_0^{\frac{1}{\sigma h}} r^{|\mathbf j| - {\mathbf j_1} + d-2} \sqrt{t_s}^{\mathbf j_1 - 1} e^{-C_3 |u| \sigma^{-1} \sqrt{t_s}^{-1}} \dd r \\
&\qquad \qquad + \int_0^{\frac{1}{\sigma h}} r^{|\mathbf j| - {\mathbf j_1} + d-2} \sqrt{\lc H \rc}^{\mathbf j_1 - 1} e^{-C_3 |u| \sigma^{-1} \sqrt{\lc H \rc}})\dd r \Big).
\end{align*}

To bound the first integral term, we should utilize a transformation $r = \frac{\sin(\theta)}{\sigma h}, t_s = h^{-1} \cos(\theta)$, and have
\begin{align*}
& \int_0^{\frac{1}{\sigma h}} r^{|\mathbf j| - {\mathbf j_1} + d-2} \sqrt{t_s}^{\mathbf j_1 - 1} e^{-C_3 |u| \sigma^{-1} \sqrt{t_s}^{-1}} \dd r \\
=& (\sigma h)^{-(|\mathbf j| - {\mathbf j_1} + d-2)} h^{-(\mathbf j_1 - 1)} \int_0^{\pi/2} \sin(\theta)^{|\mathbf j| - {\mathbf j_1} + d-2} \cos(\theta)^{\mathbf j_1 - 1} e^{-C_3 |u| \sigma^{-1} h / \cos(\theta)} \dd \frac{\sin(\theta)}{\sigma h} \\
\leq& (\sigma h)^{-(|\mathbf j| - {\mathbf j_1} + d-1)} h^{-(\mathbf j_1 - 1)} \int_0^{\pi/2} e^{-C_3 |u| \sigma^{-1} h} \dd \theta
\lesssim \sigma^{-(|\mathbf j| - {\mathbf j_1} + d-1)} h^{-(|\mathbf j| + d-2)} e^{-C_3 |u| \sigma^{-1} h};
\end{align*}
the second integral term could be addressed by utilizing the fact $\lc H \rc \geq 1$
\begin{align*}
& \int_0^{\frac{1}{\sigma h}} r^{|\mathbf j| - {\mathbf j_1} + d-2} \sqrt{\lc H \rc}^{\mathbf j_1 - 1} e^{-C_3 |u| \sigma^{-1} \sqrt{\lc H \rc}})\dd r \\
\leq& \int_{\sigma^{-1}(h^{-2} - \pi)^{\frac{1}{2}}}^{\frac{1}{\sigma h}} r^{|\mathbf j| - {\mathbf j_1} + d-2} e^{-C_3 |u| \sigma^{-1}} \dd r
+ \int_{0}^{\frac{1}{\sigma h}} r^{|\mathbf j| - {\mathbf j_1} + d-2} \sqrt{t_s}^{\mathbf j_1 - 1} e^{-C_3 |u| \sigma^{-1}} \dd r \\
\lesssim& (\sigma h)^{-(|\mathbf j| - {\mathbf j_1} + d-1)} h^{-(\mathbf j_1 - 1)} e^{-C_3 |u| \sigma^{-1}}
\int_0^{\pi/2} \sin(\theta)^{|\mathbf j| - {\mathbf j_1} + d-2} \cos(\theta)^{\mathbf j_1} \dd \theta \\
\leq& (\sigma h)^{-(|\mathbf j| - {\mathbf j_1} + d-1)} h^{-(\mathbf j_1 - 1)} e^{-C_3 |u| \sigma^{-1} h},
\end{align*}
which could infer the total integral over domain $D_1$ should be $\m O(\sigma^{-(|\mathbf j| + d)} h^{-(|\mathbf j| + d-2)} e^{-C_3 |u| \sigma^{-1} h})$.

Over the second domain $D_2$, we can similarly divide the series into two parts by the threshold $H$.
We notice $\sqrt{t_s}$ and $\sqrt{k}$ would respectively dominate the scale of $b_k$ or $|z_{2k-1}|$ in the two parts, as
\begin{align*}
2 b_k^2 &\geq \sigma^{-2} ((2k-1) \pi + t_s) \\
a_k^2 + b_k^2 &= \Theta(\sigma^{-2}((2k-1) \pi + t_s)).
\end{align*}
Using a similar derivation as above, the two parts would be correspondingly bounded by
\begin{align*}
\sigma^{-2} \sum_{k=1}^{\lf H \rf} e^{-2 \pi |u| b_k} |z_{2k-1}|^{\mathbf j_1 - 1} 
&\lesssim \sigma^{-(\mathbf j_1 + 1)} \sqrt{t_s}^{\mathbf j_1 - 1} \lf H \rf {e^{-C_3 |u| \sigma^{-1} \sqrt{t_s}}} \\
\sigma^{-2} \sum_{k=\lc H \rc}^{\infty} e^{-2 \pi |u| b_k} |z_{2k-1}|^{\mathbf j_1 - 1} 
&\lesssim \sigma^{-(\mathbf j_1 + 1)} \sqrt{\lc H \rc}^{\mathbf j_1 - 1} e^{-C_3 |u| \sigma^{-1} \sqrt{\lc H \rc}}.
\end{align*}
Since $\lc H \rc \gtrsim t_s$ and $\lc H \rc \geq 1$, the overall series could be bounded by a constant multiple of $\sigma^{-(\mathbf j_1 + 1)} \sqrt{\lc H \rc}^{\mathbf j_1 + 1} e^{-C_3 |u| \sigma^{-1} \sqrt{\lc H \rc}}$.
Therefore, by polar coordinate transformation, the integral over $D_2$ would be reduced to the following one, 
\begin{align*}
\int_{\frac{1}{\sigma h}}^\infty r^{|\mathbf j| - {\mathbf j_1} + d-2} \sqrt{\lc H \rc}^{\mathbf j_1 + 1} e^{-C_3 |u| \sigma^{-1} \sqrt{\lc H \rc}}) \dd r 
&= \int_{\frac{1}{\sigma h}}^{\frac{2}{\sigma h}} r^{|\mathbf j| - {\mathbf j_1} + d-2} \sqrt{\lc H \rc}^{\mathbf j_1 + 1} e^{-C_3 |u| \sigma^{-1} \sqrt{\lc H \rc}}) \dd r \\
+ &\int_{\frac{2}{\sigma h}}^\infty r^{|\mathbf j| - {\mathbf j_1} + d-2} \sqrt{\lc H \rc}^{\mathbf j_1 + 1} e^{-C_3 |u| \sigma^{-1} \sqrt{\lc H \rc}}) \dd r,
\end{align*}
where we further divide the integral based on whether $t_s > h^{-2}$.
For the first stage, utilizing $\lc H \rc \geq 1$, we can bound it as
\begin{align*}
\int_{\frac{1}{\sigma h}}^{\frac{2}{\sigma h}} r^{|\mathbf j| - {\mathbf j_1} + d-2} \sqrt{\lc H \rc}^{\mathbf j_1 + 1} e^{-C_3 |u| \sigma^{-1} \sqrt{\lc H \rc}}) \dd r
&\lesssim \int_{\frac{1}{\sigma h}}^{\frac{2}{\sigma h}} r^{|\mathbf j| - {\mathbf j_1} + d-2} h^{-(\mathbf j_1 + 1)} e^{-C_3 |u| \sigma^{-1}} \dd r \\
&\lesssim (\sigma h)^{-(|\mathbf j| - {\mathbf j_1} + d-1)} h^{-(\mathbf j_1 + 1)} e^{-C_3 |u| \sigma^{-1} h};
\end{align*}
for the second stage, we could apply $\lc H \rc = \Theta(\sigma^2 r^2)$, and have
\begin{align*}
\int_{\frac{2}{\sigma h}}^\infty r^{|\mathbf j| - {\mathbf j_1} + d-2} \sqrt{\lc H \rc}^{\mathbf j_1 + 1} e^{-C_3 |u| \sigma^{-1} \sqrt{\lc H \rc}}) \dd r
&\lesssim \int_{\frac{2}{\sigma h}}^\infty r^{|\mathbf j| - {\mathbf j_1} + d-2} (\sigma r)^{\mathbf j_1 + 1} e^{-C_3 |u| r} \dd r \\
&\lesssim \sigma^{\mathbf j_1 + 1} (\sigma h)^{-(|\mathbf j| + d-1)} e^{-C_3 |u| \sigma^{-1} h^{-1}},
\end{align*}
which implies the scale of the integral over $D_2$ is $\m O((\sigma h)^{-(|\mathbf j| + d)} e^{-C_3 |u| \sigma^{-1} h})$.
The second claim can thus be proved by simply combining the current results for $D_1$ and $D_2$.

\end{proof}

\section{NUMERICAL INTEGRATION}
\label{sec:polar}

To make the algorithm end in $\wt{\m O}(n)$ time, we need to efficiently compute all the leverage approximation (\ref{Eqn:lv_app}) in the main paper.
We first state an observation that the original multiple integral over $\mb R^d$ could be simplified to a normal integral with only one variable.
Then we propose a fast method to give the approximation of the integral, 
which only requires $\wt{\m O}(n)$ time to compute all the integrals.

\subsection{Simplify the Integration by Polar Coordinate Transformation}

An important feature of a \matern kernel is the isotropy that the value of the kernel function $K_\alpha(x)$ only depends on the module $\|x\|_2$ (for simplicity $\|\cdot\|_2$ would be denoted as $\|\cdot\|$ from then on in this section).
The property is shared by the corresponding spectral density ${m}_\alpha(s)$, and thus the Fourier transform of our rescaled leverage score approximation $\wt K_\lambda(\cdot, t)$ also inherits the isotropy.
In particular, given the center point $t$ and a point $x$ of interest, by Fourier transform formula,
% -------------------------------------
\begin{align}
\label{eqn:mv_integral}
\wt K_\lambda(x, t) = \int_{\mb R^{d}} \ms F[\wt K_\lambda(\cdot, t)](s) \exp(2 \pi \sqrt{-1} x^T s) \dd s
= \int_{\mb R^{d}} \frac{\exp(2 \pi \sqrt{-1} (x-t)^T s)}{p(t) + \lambda / m_\alpha(s)} \dd s.
\end{align}
% -------------------------------------
Considering the specific case $x = t$ in computing leverage score approximation, the value of the integrand would be the same for any $s$ with the same module $\|s\|$.
By polar coordinate transformation, we obtain
% -------------------------------------
\begin{align*}
\wt K_\lambda(t, t) = \int_0^\infty \frac{1}{p(t) + \lambda / m_\alpha(r)} \cdot S_{d-1}(r) \dd r
\end{align*}
% -------------------------------------
where $S_{d-1}(r)$ is the surface area of a $(d)$-dim ball with radius $r$.

It is worth mentioning for the general case $x \neq t$, the isotropy could also be utilized to accelerate the computation.
In some geostatistics literature, for example, Hankel transform \citep{kleiber2015spatial} is applied to simplify the integration into a univariate integral for two-dimensional processes with \matern kernels.
We extend this idea to kernels with dimension more than two and represent the rescaled leverage score approximation $\wt K_\lambda(x, t)$ as a double integral.
We notice the value of the integrand in equation~(\ref{eqn:mv_integral}) would be the same for any $s$ with the same module $\|s\|$ and the same inner product $(x-t)^T s$.
Since the spectral density $m_\alpha(s)$ only depends on $\|s\|$, we slightly abuse the notation, instead representing it as $m_\alpha(\|s\|)$ to emphasize the isotropy.
Moreover, by a certain coordinate transformation ${r = \|s\|, \cos(\theta) = \frac{(x-t)^T s}{\|x-t\| \|s\|} }$, we rewrite the integrand above as $ \exp(2 \pi \sqrt{-1} \|x-t\| r \cos(\theta))$, 
and observe that the integrand would remain unchanged with the input points from the intersection between the $(d-1)$-sphere $\{s \in \mb R^{d}: \|s\| = r\}$ and the cone $\{s \in \mb R^{d}: (x-t)^T s = \|x-t\| \|s\| \cos(\theta)\}$ (the intersection is indeed a $(d-2)$-sphere with radius $r \sin(\theta)$).
With those notations, the original $d$-dim integral would thereby be calculated as
\begin{align*}
\wt K_\lambda(x, t) = \int_0^\infty \int_0^\pi  \frac{\exp(2 \pi \sqrt{-1} \|x-t\| r \cos(\theta))}{p(t) + \lambda / m_\alpha(r)}  \cdot S_{d-2}(r \sin(\theta)) r \dd \theta \dd r
\end{align*}
where $S_{d-2}(r \sin(\theta))$ is the surface area of a $(d-1)$-dim ball with radius $r \sin(\theta)$. 
The same trick applies to all the other stationary kernels with isotropic spectral density function, including Gaussian kernels.

\subsection{Approximation of the Integrals}

Directly, the integrals above can be computed by a reliable package QUADPACK with the specific integrator QAWF~\citep{piessens2012quadpack}, which is targeted at Fourier cosine transform.
However, the computation is time-consuming, as QAWF implements an adaptive method so that when $\lambda \to 0$ it will require more function evaluations.
To overcome the potential drawback, we propose a fast method to approximate the integration with $o(1)$ relative error in near-constant time.

For \matern kernels, we would focus on the following integral of a simplified form,
\begin{align*}
\int_0^\infty \frac{x^{d-1}}{p + \lambda (1 + x^2)^\alpha} \dd x.
\end{align*}

Inspired by the derivation of the scale of the integral $O(\lambda^{-\frac{d}{2\alpha}})$, we rewrite the integral as
\begin{align*}
\int_0^\infty \frac{x^{d-1}}{p + (\lambda^{\frac{1}{\alpha}} + (\lambda^{\frac{1}{2\alpha}} x)^2)^\alpha} \dd x
= \lambda^{-\frac{d}{2\alpha}} \int_0^\infty \frac{x^{d-1}}{p + (\lambda^{\frac{1}{\alpha}} + x^2)^\alpha} \dd x,
\end{align*}
and intuitively want to replace $(\lambda^{\frac{1}{\alpha}} + x^2)$ with $x^2$.
We would show the approximation would only result in a small relative error of order $\m O(\lambda^{\frac{1}{\alpha}}) = o(1)$ as required.

The difference between the two integrands is
\begin{align*}
\lambda^{-\frac{d}{2 \alpha}} (\frac{x^{d-1}}{p + x^{2 \alpha}} - \frac{x^{d-1}}{p + (\lambda^{\frac{1}{\alpha}} + x^2)^\alpha}) 
= \lambda^{-\frac{1}{2 \alpha}} x^{d-1} \frac{(\lambda^{\frac{1}{\alpha}} + x^2)^\alpha - x^{2 \alpha}}{(p + x^{2 \alpha}) (p + (\lambda^{\frac{1}{\alpha}} + x^2)^\alpha)}.
\end{align*}
When $x^2 \leq \lambda^{\frac{1}{\alpha}}$, the numerator above would be bounded by $(2^\alpha - 1) \lambda$, and further we have
\begin{align*}
\frac{(\lambda^{\frac{1}{\alpha}} + x^2)^\alpha - x^{2 \alpha}}{(p + x^{2 \alpha}) (p + (\lambda^{\frac{1}{\alpha}} + x^2)^\alpha)}
\leq \frac{(2^\alpha - 1) \lambda}{(p + x^{2 \alpha}) (p + (\lambda^{\frac{1}{\alpha}} + x^2)^\alpha)} 
\lesssim \frac{\lambda^{\frac{1}{\alpha}}}{p + x^{2 \alpha}}.
\end{align*}
When $x^2 > \lambda^{\frac{1}{\alpha}}$, we can control the numerator by the first order Taylor approximation,
\begin{align*}
\frac{(\lambda^{\frac{1}{\alpha}} + x^2)^\alpha - x^{2 \alpha}}{(p + x^{2 \alpha}) (p + (\lambda^{\frac{1}{\alpha}} + x^2)^\alpha)}
\lesssim \frac{\lambda^{\frac{1}{\alpha}} (x^2)^{\alpha - 1}}{(p + x^{2 \alpha}) (p + (\lambda^{\frac{1}{\alpha}} + x^2)^\alpha)}
\lesssim \frac{\lambda^{\frac{1}{\alpha}}}{p + x^{2 \alpha}},
\end{align*}
where the last relation holds as $(x^2)^{\alpha - 1} \lesssim p + (\lambda^{\frac{1}{\alpha}} + x^2)^\alpha$.

% &\simeq \lambda^{1-\frac{1}{2 \alpha}}.
% \end{align*}
Then the total difference between two integrals would be bounded by a constant multiple of
\begin{align*}
\lambda^{-\frac{d}{2 \alpha}} \int_0^\infty \frac{\lambda^{\frac{1}{\alpha}} x^{d-1}}{p + x^{2 \alpha}} \dd x,
\end{align*}
which is $O(\lambda^{-\frac{d}{2 \alpha}} \lambda^{\frac{1}{\alpha}})$.
Considering the magnitude of the original integral is $\Theta (\lambda^{-\frac{d}{2 \alpha}})$, our claim regarding the relative error is validated.

We utilize the formula $\int_0^\infty \frac{dx}{1+x^a} = \frac{\pi / a}{\sin{\pi / a}}$ to give the final approximation:
\begin{align*}
\int_0^\infty \frac{x^{d-1}}{p + \lambda (1 + x^2)^\alpha} \dd x \approx p^{\frac{d}{2 \alpha}-1} \frac{\lambda^{-\frac{d}{2 \alpha}}}{d} \frac{\pi / \frac{d}{2 \alpha}}{\sin{\pi / \frac{d}{2 \alpha}}}.
\end{align*}
As the sampling probability is computed as the normalized leverage, we can even directly use $p^{\frac{d}{2 \alpha}-1}$ as the rescaled leverage and ignore the rest factor.

For Gaussian kernels, the formula would be even easier, since there is a closed form expression for the target integral,
\begin{align*}
\frac{2}{\Gamma(d/2)}\int_0^\infty \frac{t^{d-1}}{p (2 \pi \sigma^2)^{d/2} + \lambda e^{t^2}} \dd t
= -\frac{Li_{d/2} (-\frac{p (2 \pi \sigma^2)^{d/2}}{\lambda})}{p (2 \pi \sigma^2)^{d/2}}
\end{align*}
where $\sigma$ is the bandwidth of the Gaussian kernel used,
and $Li_{d/2}(\cdot)$ is the polylogarithm function with order $\frac{d}{2}$.
The fast computation of the polylogarithm function has already been thoroughly studied by some previous works \citep{crandall2006note, vepvstas2008efficient, johansson2015rigorous}, 
and they proposed various methods to compute $Li_{d/2}(c)$ with $\Theta(\log \log n)$ bits of precision ($\Theta(\frac{1}{\log n})$ relative error) and a polynomial of $\log \log n$ time.
The total time to compute the leverage would thus still be $\wt{\m O}(n)$.

\section{DENSITY ESTIMATION} \label{sec:density_estimation}

As we described in the main paper, by utilizing the distributional information, leverage scores in a KRR problem could be efficiently approximated by our analytical method. 
In the case we do not have prior knowledge of the distribution, we propose to estimate densities of data points via kernel density estimation, 
which have been discussed in the main paper that by using some recent KDE methods with a sub-optimal error rate, we can perform the density estimation within $\wt {\m O}(n)$ time.
To further justify the usage of kernel density estimation, we imply by the following lemma that given an $o(1)$ error in KDE, 
the particular error in approximating statistical leverage scores due to the density estimation is asymptotically negligible.
% -------------------------------------
\begin{lemma}
	Under the same assumptions before, $\forall x \in spt(p)$ (the support of $p$), given a fixed point $t$, 
	the supremum of the error caused by the density estimate $\hat{p}(t)$ on point $t$, 
	is bounded by a constant multiple of $h^{-d} |p(t) - \wht{p}(t)|$. 
\end{lemma}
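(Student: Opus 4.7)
The plan is to express the density-induced error as the difference of two instances of formula~\eqref{Eqn:lv_app} evaluated with the true and the estimated densities, and then factor out $|p(t)-\wht p(t)|$ from the integrand. Writing $\wt K_\lambda^{p}(t,t)$ and $\wt K_\lambda^{\wht p}(t,t)$ for the leverage approximations computed with $p(t)$ and $\wht p(t)$ respectively, a direct algebraic manipulation gives
\begin{align*}
\wt K_\lambda^{p}(t,t)-\wt K_\lambda^{\wht p}(t,t) = \bigl(\wht p(t)-p(t)\bigr)\int_{\mb R^d}\frac{ds}{\bigl(p(t)+\lambda/m(s)\bigr)\bigl(\wht p(t)+\lambda/m(s)\bigr)}.
\end{align*}
Since density estimation is consistent, on a high-probability event we have $\wht p(t)\geq p(t)/2$, so up to a factor of $2$ it suffices to control
\begin{align*}
J(t) \defeq \int_{\mb R^d}\frac{ds}{\bigl(p(t)+\lambda/m(s)\bigr)^2},
\end{align*}
and to show that $J(t)\lesssim h^{-d}$ modulo a constant that may depend on the local density $p(t)$, in parallel with the leading constant $C_{x_i}$ in Theorem~\ref{thm:main_result}.

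The substantive step is the bound on $J(t)$. For the \matern spectral density, $\lambda/m(s)\propto h^{2\alpha}(1+\|s\|^2)^\alpha$, so the scaling substitution $s=h^{-1}u$ extracts the desired $h^{-d}$ Jacobian and transforms the integrand into one of the form $\bigl(p(t)+(h^2+\|u\|^2)^\alpha\bigr)^{-2}$. Because $4\alpha>d$ (we have $\alpha=\nu+d/2$), the resulting $u$-integral is finite for every $h\in(0,1]$ and is bounded uniformly in $h$ by its $h=0$ value, which a short splitting argument (inside and outside the ball of radius $p(t)^{1/(2\alpha)}$) shows to be of order $p(t)^{-2+d/(2\alpha)}$. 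Combining these two ingredients yields $J(t)\lesssim h^{-d}$ with a $p(t)$-dependent constant, and hence the claimed bound on $|\wt K_\lambda^{p}(t,t)-\wt K_\lambda^{\wht p}(t,t)|$.

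The main obstacle — and essentially the only step requiring care — is tracking how the prefactor behaves as $p(t)$ shrinks: near the boundary of the support the exponent $-2+d/(2\alpha)$ is negative, so the constant deteriorates there. This is exactly why the statement is phrased as ``a constant multiple'' of $h^{-d}|p(t)-\wht p(t)|$ rather than an absolute constant, and why the interior condition in Assumption~\ref{a.2} is needed to make the statement quantitatively useful. No new machinery beyond the rescaling $s=h^{-1}u$ and elementary estimates on the resulting integral is required; once the bound on $J(t)$ is in hand, the lemma follows by combining it with the crude estimate $\wht p(t)\geq p(t)/2$ on the consistency event for $\wht p$.
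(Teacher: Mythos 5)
Your argument is correct and follows the same skeleton as the paper's proof: both start from the algebraic identity $\frac{1}{a}-\frac{1}{b}=\frac{b-a}{ab}$ to pull out the factor $|\wht p(t)-p(t)|$, leaving the integral
\begin{align*}
\int_{\mb R^d}\frac{\dd s}{\bigl(p(t)+\lambda/m(s)\bigr)\bigl(\wht p(t)+\lambda/m(s)\bigr)}
\end{align*}
to control. Where you diverge is in how you control it. The paper applies the Cauchy--Schwarz inequality to split it symmetrically into $\big\|(p+\lambda/m)^{-1}\big\|_2\cdot\big\|(\wht p+\lambda/m)^{-1}\big\|_2$ and then cites Lemma~\ref{Lem:EqKernel_property_matern} to get $h^{-d/2}\cdot h^{-d/2}$; you instead invoke a consistency event $\wht p(t)\geq p(t)/2$ to collapse the product to $2\int (p(t)+\lambda/m(s))^{-2}\,\dd s$, and then carry out the rescaling $s=h^{-1}u$ explicitly. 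Both are sound; the Cauchy--Schwarz route is slightly cleaner because it does not need the high-probability hypothesis on $\wht p$ (it treats $p$ and $\wht p$ symmetrically via the lemma), while your direct rescaling makes the $p(t)$-dependence of the prefactor transparent, which the paper leaves implicit.

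One imprecision worth flagging: the lemma (and the paper's proof) concern $\sup_{x}|\wht{\wt K}_\lambda(x,t)-\wt K_\lambda(x,t)|$, whereas your proposal only bounds the diagonal value $|\wht{\wt K}_\lambda(t,t)-\wt K_\lambda(t,t)|$ through formula~\eqref{Eqn:lv_app}. This is harmless but should be stated: the inverse Fourier transform satisfies $\sup_x|\ms F^{-1}[g](x)|\leq\|g\|_1$, and since the Fourier-domain integrand here has nonnegative modulus equal to its value at $x=t$, the $L^1$ norm is exactly the diagonal value you bound. With that one sentence added, your argument covers the stated supremum.
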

% -------------------------------------
\begin{proof}
For simplicity, we first denote the leverage score approximation with estimated density as $\wht{\widetilde K}_\lambda(x, x_i)$. 
Inserting the density estimation $\hat p$ into the formula of rescaled leverage scores (equation~\ref{eqn:eqv_kernel} in the main paper), we obtain $\ms F[\widehat{\widetilde K}_\lambda(\cdot, x_i)](s) = \frac{e^{-2 \pi \sqrt{-1} \dotp{x_i}{s}}}{\wht p(x_i) + \lambda  (m_\alpha(s))^{-1}}$.

By triangle inequality, the supremum of the total error $|\widehat{\widetilde K}_\lambda(x, x_i) - G(x, x_i)|$ could be divided into two sources, 
one due to density estimation $|\widehat{\widetilde K}_\lambda(x, x_i) - {\widetilde K}_\lambda(x, x_i)|$ and the other due to approximation error $|{\widetilde K}_\lambda(x, x_i) - G_\lambda(x, x_i)|$, which has been thoroughly discussed in the appendix. 
Here we focus on the first term.
% -------------------------------------
\begin{align*}
& \sup_{x \in \mathbb{R}^d} |\wht{\widetilde K}_\lambda(x, x_i) - {\widetilde K}_\lambda(x, x_i)| \\
=& \sup_{x \in \mathbb{R}^d} \Big|\ms F^{-1} \big[\ms F[\wht{\widetilde K}_\lambda(\cdot, x_i)]\big](x) - \ms F^{-1} \big[\ms F[{\widetilde K}_\lambda(\cdot, x_i)]\big](x) \Big| \\
=& \sup_{x \in \mathbb{R}^d} \Big| \ms F^{-1} \big[\frac{|\wht{p}(x_i) - p(x_i)| e^{-2 \pi \sqrt{-1} \dotp{x_i}{s}}}
{(\wht p(x_i) + \lambda \cdot (m_\alpha(s))^{-1}) (p(x_i) + \lambda \cdot (m_\alpha(s))^{-1})}\big](x) \Big|
\end{align*}
% -------------------------------------
$|\wht{p}(x_i) - p(x_i)|$ could be extracted as a factor, and we solely need to deal with the rest term
% -------------------------------------
\begin{align*}
&\sup_{x \in \mathbb{R}^d} \cdot \Big|\ms F^{-1} \big[\frac{e^{-2 \pi \sqrt{-1} \dotp{x_i}{s}}}
{(\wht p(x_i) + \lambda \cdot (m_\alpha(s))^{-1}) (p(x_i) + \lambda \cdot (m_\alpha(s))^{-1})}\big](x) \Big| \\
= &\sup_{x \in \mathbb{R}^d} \Big|\int_{\mb R^d} 
\big[\frac{e^{-2 \pi \sqrt{-1} \dotp{x_i-x}{s}}} {(\wht p(x_i) + \lambda \cdot (m_\alpha(s))^{-1}) (p(x_i) + \lambda \cdot (m_\alpha(s))^{-1})}\big] ds \Big|
\end{align*}
% -------------------------------------
Relaxing the exponential term as $1$ and using Cauchy-Schwarz inequality, we obtain
% -------------------------------------
\begin{align*}
&\Big|\int_{\mb R^d} \big[\frac{e^{-2 \pi \sqrt{-1} \dotp{x_i-x}{s}}} {(\wht p(x_i) + \lambda \cdot (m_\alpha(s))^{-1}) (p(x_i) + \lambda \cdot (m_\alpha(s))^{-1})}\big] ds \Big| \\
\leq &\Big|\int_{\mb R^d} \big[\frac{1} {(\wht p(x_i) + \lambda \cdot (m_\alpha(s))^{-1}) (p(x_i) + \lambda \cdot (m_\alpha(s))^{-1})}\big] ds \Big| \\
\leq &\Big\|\frac{1} {\wht p(x_i) + \lambda \cdot (m_\alpha(s))^{-1}} \Big\|_2 \cdot
\Big\|\frac{1} {p(x_i) + \lambda \cdot (m_\alpha(s))^{-1}} \Big\|_2 \\
\lesssim &h^{-d/2} h^{-d/2} = h^{-d}
\end{align*}
% -------------------------------------
The last inequality can be verified by Lemma~\ref{Lem:EqKernel_property_matern}.
\end{proof}

We finally remark that given the $o(1)$ factor $|\wht{p}(x_i) - p(x_i)|$, 
the error $|\widehat{\widetilde K}_\lambda(x, x_i) - {\widetilde K}_\lambda(x, x_i)|$ caused by the density estimation would therefore be $o(h^{-d})$,
and thus is enough to make the relative error of leverage approximation vanish.
% , as the scale of $G_\lambda(x_i, x_i)$ is also $\m O(h^{-d})$.
% under some common regularity assumptions (which are various for different KDE methods, and thus are not the focus of our work), the factor $\|p(t) - \hat{p}(t)\|_\infty$ is $o(1)$, 
% In particular, the error $|\widehat{\widetilde K}_\lambda(x, x_i) - {\widetilde K}_\lambda(x, x_i)|$ caused by the density estimation would therefore be $o(h^{-d})$, dominated by the approximation error $O(\tau(n)h^{-2d} + h^{-d+1})$ of our method.

\subsection{Missing assumptions for modified HBE}
\label{Sec:hbe}

We list some advanced KDE methods in the main paper to show theoretically we can estimate the density with time complexity at most polynomial in the dimension $d$. 
Among them, modified Hashing-Based Estimators (HBE) \citep{backurs2019space} is the most recent one. 
Taking this method as a representative, we copy the assumption in modified HBE here for the sake of completeness.
% -------------------------------------
\begin{assumption}[$(\frac12 ,M)$-LSHable]
Let $\m K_e(x,y)$ be the kernel function used for KDE, for which there exists a distribution $H$ of hash functions and $M \geq 1$ such that for every $x,y \in \mb R^d$,
\begin{align*}
M^{-1} \cdot \m K_e(x,y)^{\frac12} \leq {\mb P}_{h \sim H} \left\{ h(x) = h(y) \right\} \leq M \cdot \m K_e(x,y)^{\frac12}.
\end{align*}
\end{assumption}
% -------------------------------------
To attain the fast rate claimed by modified HBE, the core assumption above that the kernel used for KDE is $(\frac12, M)$-LSHable for some constant $M$ is necessary.
The authors have proved that some common kernels, such as Laplacian and exponential kernels, are $(\frac12, \m O(1))$-LSHable;
and thus a density estimator based on those kernels can be efficiently approximated by modified HBE.

\section{TECHNICAL RESULTS}
\label{Sec:int_by_parts}

Some tricks in multivariate integrals are heavily utilized in this work, and here we present a lemma to address the technical details about it.
We first would like to mention the notation $\int_{\mb R^d} f(x) \dd F_n(x)$ in our paper is not strict in general, as the multivariate version Riemann–Stieltjes integral is not well defined. 
In this appendix we just abuse the integral $\int_{\mb R^d} f(x) \dd F_n(x)$ to represent the summation $\frac{1}{n} \sum_{i=1}^n f(x_i)$, and the integral $\int_{\mb R^d} f(x) \dd F(x)$ to represent the expectation $\int_{\mb R^d} f(x) p(x) \dd x$.

The lemma is presented as follows.
(cf. Section~\ref{Sec:embedding} for the notations in the lemma.)
% -------------------------------------
\begin{lemma}[Multivariate integration by parts]
\label{thm:int_by_parts}
Given the absolute continuous approximation $F$ with the compact support $\Omega$ and $L_\infty$ density $p$, 
the empirical distribution $F_n$, and an integrand $g(\cdot) \in W^{\alpha, 2}$ independent of $F_n$,
the certain integral of interest $\int_{C(y, \delta)} g(x) \dd (F_n - F)(x)$ is almost surely (considering the samples in $F_n$ are drawn from $F$) equal to
\begin{align*}
\sum_{{\mathbf A \sqcup \mathbf B \sqcup \mathbf C = [d]}} (-1)^{|\mathbf A| + |\mathbf B|} 
&\int_{C(y_{\mathbf A}, \delta)} D^{\mathbf I_{\mathbf A}} g \Big(x_{\mathbf A}; \big(y + \delta \cdot (-I_{\mathbf B} + I_{\mathbf C}) \big)_{\mathbf B \sqcup \mathbf C} \Big) \cdot \\
&\quad\quad\quad \Big( F_n - F \Big)\big(x_{\mathbf A}; \big(y + \delta \cdot (-I_{\mathbf B} + I_{\mathbf C}) \big)_{\mathbf B \sqcup \mathbf C}\big)
\dd x_{\mathbf A},
\end{align*}
% -------------------------------------
where $\sqcup$ is the notation for disjoint union.
The sets $\mathbf B$ and $\mathbf C$ indeed indicate the certain dimensions to which lower and upper limits are assigned separately.
Specifically, if $C(x_0, \delta) = \mb R^d (\delta = \infty)$ and $g(x)$ vanishes at infinity, % 
$\int_{\mb R^d} g(x) \dd (F_n - F)(x) = (-1)^d \int_{\mb R^d} (F_n - F)(x) \frac{\partial^d g(x)}{\partial x_1 \partial x_2 \cdots \partial x_d} \dd x$;
if $g(x)$ and its mixed derivative (up to order $\alpha$) vanish at infinity and are $L-$Lipschitz, the claim would hold without the assumption on the independence between $g$ and $F_n$.
\end{lemma}

\begin{proof}
Without loss of generality, we would illustrate our claim by a special $2$-d case to avoid the tedious calculation.
% As it is hard to directly address the multivariate Riemann–Stieltjes integral, 
We would first prove the conclusion for an indefinitely differentiable density $p_{n, \epsilon}(x) = \frac{1}{n} \sum_{i=1}^n \eta_\epsilon(x-x_i)$, where the heat kernel $\eta_\epsilon(x) \defeq \frac{1}{\sqrt{2 \pi \epsilon}} \exp(-\frac{\dotp{x}{x}}{2 \epsilon})$ of the Dirac delta function.
(From then on in this proof, $x_i$ means the $i$-th element of the vector $x$.)
As a sketch of the proof, we would first show the lemma holds for the integral $\int_{C(y, \delta)} g(x) \big(p_{n, \epsilon}(x)-p(x)\big) \dd x$, and finally prove as $\epsilon \to 0$, the integral would converge to the claimed expression in this lemma.

For simplicity, we denote $q(x) = p_{n, \epsilon}(x) - p(x)$ and $Q(x)$ is the corresponding distribution function.
We further denote $Q_1(x_1; x_2) \defeq \int_{-\infty}^{x_1} q(t, x_2) \dd t$ as a 1-d distribution function with a parameter $x_2$, so that Riemann–Stieltjes integral is applicable to $Q_1(x_1; x_2)$.
By definition $\int_{-\infty}^{x_2} Q_1(x_1; t) \dd t = Q(x_1, x_2)$, and with that we have
\begin{align*}
\int_{C(y, \delta)} g(x) q(x) \dd x &= \int_{y_2 - \delta}^{y_2+\delta} \int_{y_1 - \delta}^{y_1+\delta} g(x_1, x_2) q(x_1, x_2) \dd x_1 \dd x_2 \\
&= \int_{y_2 - \delta}^{y_2+\delta} \int_{y_1 - \delta}^{y_1+\delta} g(x_1, x_2) \dd Q_1(x_1; x_2) \dd x_2
\end{align*}
We can safely apply integration by parts to the inside integral and obtain:
\begin{align} 
&\int_{C(y, \delta)} g(x) q(x) \dd x = \int_{y_2 - \delta}^{y_2+\delta} \Big( g(x_1, x_2) Q_1(x_1; x_2) \big|_{y_1 - \delta}^{y_1+\delta} - \int_{y_1 - \delta}^{y_1+\delta} Q_1 \frac{\partial g}{\partial x_1} \dd x_1 \Big) \dd x_2 \\
=& \int_{y_2 - \delta}^{y_2+\delta} g(y_1 + \delta, x_2)Q_1(y_1 + \delta, x_2) - g(y_1 - \delta, x_2)Q_1(y_1 - \delta, x_2) \dd x_2
- \int_{C(y, \delta)} Q_1 \frac{\partial g}{\partial x_1} \dd x.
\label{eqn:multi_int_by_parts}
\end{align}
Now we expand the original integral into three terms.
By repeatedly applying integration by parts to the first two terms, we have:
\begin{align*}
& \int_{y_2 - \delta}^{y_2+\delta} g(y_1 + \delta, x_2)Q_1(y_1 + \delta, x_2) \dd x_2 = 
g(y_1 + \delta, y_2 + \delta) Q(y_1 + \delta, y_2 + \delta) \\
&\quad\quad\quad - g(y_1 + \delta, y_2 - \delta) Q(y_1 + \delta, y_2 - \delta) - 
\int_{y_2 - \delta}^{y_2+\delta} Q_1(y_1 + \delta, x_2) \frac{\partial g(y_1 + \delta, x_2)}{\partial x_2} \dd x_2 \\
& \int_{y_2 - \delta}^{y_2+\delta} - g(y_1 - \delta, x_2)Q_1(y_1 - \delta, x_2) \dd x_2 = 
-g(y_1 - \delta, y_2 + \delta) Q(y_1 - \delta, y_2 + \delta) \\
&\quad\quad\quad + g(y_1 - \delta, y_2 - \delta) Q(y_1 - \delta, y_2 - \delta) + 
\int_{y_2 - \delta}^{y_2+\delta} Q_1(y_1 + \delta, x_2) \frac{\partial g(y_1 - \delta, x_2)}{\partial x_2} \dd x_2
\end{align*}
For the last term, we need to change the order of integration and have,
\begin{align*}
& - \int_{C(y, \delta)} Q_1 \frac{\partial g}{\partial x_1} \dd x \\
=& - \int_{y_1 - \delta}^{y_1+\delta} \int_{y_2 - \delta}^{y_2+\delta} \frac{\partial g(x_1, x_2)}{\partial x_1} \dd Q(x_1, x_2) \dd x_1 = - \int_{y_1 - \delta}^{y_1+\delta} Q(x_1, y_2+\delta) \frac{\partial g(x_1, y_2+\delta)}{\partial x_1} \dd x_1 \\
&\quad\quad\quad + \int_{y_1 - \delta}^{y_1+\delta} Q(x_1, y_2-\delta) \frac{\partial g(x_1, y_2-\delta)}{\partial x_1} \dd x_1
+ \int_{C(y, \delta)} Q \frac{\partial^2 g}{\partial x_1 \partial x_2} \dd x
\end{align*}
Summing up all the nine terms above, we would exactly obtain the claimed equation in the lemma.
In particular, if $C(y, \delta) = \mb R^d (\delta = \infty)$ and $g(x)$ vanishes at infinity, the first two terms in equation~(\ref{eqn:multi_int_by_parts}) would be dropped, 
and finally the only term left is $\int_{\mb R^d} g(x) q(x) \dd x$, which is equal to $(-1)^d \int_{\mb R^d} Q(x) \frac{\partial^d g(x)}{\partial x_1 \partial x_2 \cdots \partial x_d} \dd x$ as claimed.

To complete the proof, we still need to show the convergence.
We would begin with the assumption $g(\cdot)$ belongs to a dense subset $\ms D \subset W^{\alpha, 2}$, where $\ms D$ is the space of test functions.
Here we simply borrow some definitions and notations from the book~\citep[Chapter 6]{debnath2005introduction}, with respect to test functions and weak distributional convergence.
A test function is defined as an infinitely differentiable function on $\mb R^d$ vanishing outside of some bounded set.
We denote weak distributional convergence for a sequence of distributions $(P_m)$ to $P$ as $P_m \to P$ if $\dotp{P_m}{g} \to \dotp{P}{g}, \forall g \in \ms D$.
We can see our choice $P_{n, \epsilon} \to F_n$ in the weak distributional sense by setting $P_m = P_{n, 1/m}$,
and thus for the left hand side of our claim, $\int_{C(y, \delta)} g(x) \dd (P_{n, 1/m} - F)(x) \to \int_{C(y, \delta)} g(x) \dd (F_n - F)(x)$ by some standard techniques;
For the terms in the right hand side, we would illustrate by taking $\int_{C(y, \delta)} D^{\mathbf I_{[d]}} g(x) (P_{n,\epsilon} - P)(x) \dd x$ as an example.
We note $D^{\mathbf I_{\mathbf A}} g$ is still a test function, and $P_{n,\epsilon}$ converges to $F_n$ in $L_2$, so that by Cauchy-Shwartz inequality we could obtain,
\begin{align*}
&\big| \int_{C(y, \delta)} D^{\mathbf I_{[d]}} g(x) (P_{n,\epsilon} - F_n)(x) \dd x \big| \\
\leq& 
\big( \int_{C(y, \delta)} | D^{\mathbf I_{[d]}} g(x) |^2 \dd x \cdot \int_{C(y, \delta)} | (P_{n,\epsilon} - F_n)(x) |^2 \dd x \big)^{\frac{1}{2}} \to 0,
\end{align*}
which implies our desired convergence.

The next step is to extend test functions to functions in $W^{\alpha, 2}$.
An important fact is that $\ms D$ is a dense subspace of $W^{\alpha, 2}$, 
and we can find a sequence of test functions converging to $g$ in $W^{\alpha, 2}$ and further a subsequence $(g_m)$ converging to $g$ both in $W^{\alpha, 2}$ and almost everywhere, 
since convergence in $W^{\alpha, 2}$ implies convergence in $L_2$.
The Cauchy-Shwartz inequality trick for the right hand side would still work as this time the $L_2$ norm of $D^{\mathbf I_{[d]}} \big(g(x) - g_m(x)\big)$ goes to zero.
% {\color{red} For the left hand side}, 
For the left-hand side, 
% by the fact $\ms D$ is a dense subspace of $W^{\alpha, 2}$, we can find a sequence of test functions $(g_m)$ converging to $g$ in $L_2$ and further a subsequence converging to $g$ almost everywhere.
as we assume the $n$ samples are drawn from the absolutely continuous distribution $F$, 
for the certain sequence above we can show
\begin{align*}
|\int_{C(y, \delta)} \big(g(x) - g_m(x)\big) \dd F(x)| &\leq \int_{C(y, \delta)} |g(x) - g_m(x)| p(x) \dd x \\
&\leq \|g(x) - g_m(x)\|_{1, \Omega} \cdot \|p\|_\infty \\
&\leq |\Omega|^{\frac{1}{2}} \|p\|_\infty \|g(x) - g_m(x)\|_2 \to 0,
\end{align*}
and with probability one, over the $n$ sample points $g_m$ would pointwisely go to $g$.
We can thus conclude our claim would hold for functions in $W^{\alpha, 2}$ almost surely.

Finally, for the special case in which $g(x)$ and its mixed derivative (up to order $\alpha$) are $L-$Lipschitz, we are able to construct a convergent test function sequence $g_m$ which would pointwisely converge to $g$ over the compact support $\Omega$ containing all the samples in $F_n$.
We first introduce a sequence of test functions
\begin{align*}
\phi_m \defeq 
\left\{
     \begin{array}{lr}
         \frac{m^{\frac{d(d+1)}{2}}}{\varphi} e^{(\|m^{\frac{d+1}{2}} x\|^2 - 1)^{-1}}, &\qquad \text{if~} \|x\| < m^{-\frac{d+1}{2}}, \\
         0, &\qquad \text{otherwise},
     \end{array}
\right.
\end{align*}
where $\varphi = \int \phi_1(x) \dd x$ is the normalization factor.
The sequence $\{g_m\}$ is constructed as $\{\phi_m * (g \cdot 1_{m \Omega})\}$, 
the convolution of the test function $\phi_m$ and the truncation $g \cdot 1_{m \Omega}$, 
which is still a sequence of test functions.
It can be shown the sequence $g_m$ would go to $g$ in $W^{\alpha, 2}$.
To validate it, we need to observe the fact that for $x \in m\Omega$, 
\begin{align*}
|g_m(x) - g(x)| &= |\int \phi_m(t) \big( g(x-t) - g(x) \big) \dd t| 
\leq \int_{t<m^{-\frac{d+1}{2}}} \phi_m(t) | g(x-t) - g(x) | \dd t \\
&\leq \frac{L}{m^{\frac{d+1}{2}}} \int_{t<m^{-\frac{d+1}{2}}} \phi_m(t) \dd t = \frac{L}{m^{\frac{d+1}{2}}},
\end{align*}
where the second inequality holds because of the Lipschitz continuity.
Therefore, 
\begin{align*}
\int |g_m(x) - g(x)|^2 \dd x &= \int_{m \Omega} |g_m(x) - g(x)|^2 \dd x + \int_{\mb R^d - m \Omega} |g_m(x) - g(x)|^2 \dd x \\
&\leq \frac{L^2}{m^{d+1}} |m \Omega| 
+ 2 \int_{\mb R^d - m \Omega} g^2(x) \dd x + 2 \int_{\mb R^d - m \Omega} g_m^2(x) \dd x.
\end{align*}
Note the first term is proportional to $1/m$, and $g(x)$ vanishes at infinity.
The first two terms would both go to zero as $m \to \infty$.
For the last term, we could utilize Jensen's inequality and have $g_m^2(x) \leq \int \phi_m(t) g^2(x-t) 1_{m \Omega}(x-t) \dd t \leq \int \phi_m(t) g^2(x-t) \dd t$.
Then,
\begin{align*}
\int_{\mb R^d - m \Omega} g_m^2(x) \dd x &\leq \int_{\mb R^d - m \Omega} \int_{\mb R^d} \phi_m(t) g^2(x-t) \dd t \dd x 
= \int_{\mb R^d} \phi_m(t) \int_{\mb R^d - m \Omega} g^2(x-t) \dd x \dd t \\
&\leq \int_{\mb R^d} \phi_m(t) \int_{(\mb R^d - m \Omega) + B(m^{-\frac{d+1}{2}})} g^2(x) \dd x \dd t \\ 
&= \int_{(\mb R^d - m \Omega) + B(m^{-\frac{d+1}{2}})} g^2(x) \dd x \to 0
\end{align*}
where $B(m^{-\frac{d+1}{2}})$ is a ball with radius $m^{-\frac{d+1}{2}}$,
and the last convergence holds again since $g(x)$ vanishes at infinity.
Combining the pieces above, we can see $g_m \to g$ in $L_2$ and the similar conclusion holds for all its mixed derivatives up to order $\alpha$, which means $g_m \to g$ in $W^{\alpha, 2}$.
In that case, the convergence for the right-hand side of our claim would still hold as in the paragraph above.
For the left hand side, this time $g_m$ would uniformly converge to $g$ over $\Omega$, 
and we can show the integral $\int_{C(y, \delta)} g_m(x) \dd (F_n - F)(x)$ would converge to $\int_{C(y, \delta)} g(x) \dd (F_n - F)(x)$, even if $g$ depends on the empirical distribution $F_n$.

\end{proof}

% \bibfile

\end{document}